\setlist{noitemsep,leftmargin=\parindent,topsep=2pt}
\setlist{noitemsep,topsep=2pt}
\newcommand{\kibitz}[2]{\ifnum\Comments=1{\color{#1}{#2}}\fi}
\newcommand{\zf}[1]{\kibitz{blue}{[ZF: #1]}}
\newcommand{\todo}[1]{\kibitz{blue}{[TODO: #1]}}
\newcommand{\E}{\mathbb{E}}
\DeclareMathOperator*{\argmin}{arg\,min}
\DeclareMathOperator*{\argmax}{arg\,max}
\newcommand{\1}{\mathbb{I}}
\newcommand{\eps}{\varepsilon}
\newcommand{\F}{\mathcal{F}}
\newcommand{\X}{\mathcal{X}}
\renewcommand{\L}{\mathcal{L}}
\newcommand{\rgt}{\textit{rgt}}
\newcommand{\R}{\mathbb{R}}
\newcommand{\HH}{\mathcal{H}}
\newcommand{\mbf}{\mathbf}
\newcommand{\PP}{\mathbb{P}}
\theoremstyle{plain}
\newtheorem{theorem}{Theorem}[section]
\newtheorem{corollary}{Corollary}[section]
\newtheorem{lemma}[theorem]{Lemma}
\newtheorem{assumption}[theorem]{Assumption}
\newtheorem{definition}[theorem]{Definition}
\newtheorem{proposition}[theorem]{Proposition}
\begin{document}

\title{Learning to Bid in Contextual First Price Auctions}

\author{Ashwinkumar Badanidiyuru}
\author{Zhe Feng}
\author{Guru Guruganesh}

\affil{Google Research, Mountain View \authorcr \texttt{ashwinkumarbv,zhef,gurug@google.com}}

\date{November 9, 2021}

\maketitle

\begin{abstract}
In this paper, we investigate the problem about how to bid in repeated contextual first price auctions. We consider a single bidder (learner) who repeatedly bids in the first price auctions: at each time $t$, the learner observes a context $x_t\in \mathbb{R}^d$ and decides the bid based on historical information and $x_t$. 
We assume a structured linear model of the maximum bid of all the others $m_t = \alpha_0\cdot x_t + z_t$, where $\alpha_0\in \mathbb{R}^d$ is unknown to the learner and $z_t$ is randomly sampled from a noise distribution $\mathcal{F}$ with log-concave density function $f$.
We consider both \emph{binary feedback} (the learner can only observe whether she wins or not) and \emph{full information feedback} (the learner can observe $m_t$) at the end of each time $t$.
For binary feedback, when the noise distribution $\mathcal{F}$ is known, we propose a bidding algorithm, by using maximum likelihood estimation (MLE) method to achieve at most $\widetilde{O}(\sqrt{\log(d) T})$ regret. Moreover, we generalize this algorithm to the setting with binary feedback and the noise distribution is unknown but belongs to a parametrized family of distributions. For the full information feedback with \emph{unknown} noise distribution, we provide an algorithm that achieves regret at most $\widetilde{O}(\sqrt{dT})$. Our approach combines an estimator for  log-concave density functions and then MLE method to learn the noise distribution $\mathcal{F}$ and linear weight $\alpha_0$ simultaneously. We also provide a lower bound result such that any bidding policy in a broad class must achieve regret at least $\Omega(\sqrt{T})$, even when the learner receives the full information feedback and $\mathcal{F}$ is known.
\end{abstract}

\section{Introduction}\label{sec:intro}
Recently, first price auctions have become the predominant auction mechanism on the major display advertising platforms, by replacing second price auctions~\citep{digidayFirstprice,BiglerFP}. First price auctions have grown in favor because they are more transparent and credible~\citep{AL20}, in the sense that there is no uncertainty in the final price upon winning~\citep{digidayTransparency}. Compared with the second price auctions, first price auctions are no longer truthful, i.e. reporting the true value is not the optimal strategy for each advertiser. In light of this, advertisers face new challenges in practice: how should the advertiser bid in a first price auction when it is hard to know the others' bidding strategies?

In real display ads system, a huge number of online ads are sold repeatedly via auctions. If advertisers participate in auctions very frequently to compete for placing their ads, it is very important for them to optimize their bidding strategies in repeated auctions to maximize their long term rewards. In addition, advertisers may receive some contextual information of the queries before submitting bids including information of the publisher and the user. Given this context, the advertisers can estimate their value of this query and decide their bids to compete for the ad slots. In this work, we formulate the above problem as a standard contextual online learning problem. A single advertiser (learner) repeatedly bids in contextual first price auctions and she observes a context $x_t \in \R^d$ before submitting bid at each time $t$. Then the learner submits a bid $b_t$ based on context $x_t$ and the seller use first price auctions to determine the winner and charge them their own bid.

In first price auctions, it is not enough for advertisers to bid optimally when they only know their own value, and it is necessary for them to understand the distribution of their competitors' bids~\citep{krishna-book}. In this work, we assume a structured linear model of the maximum bid of the other bidders (other than this learner) $m_t = \alpha_0 \cdot x_t + z_t$ for some unknown $\alpha_0 \in \R^d$, where $z_t \sim \F$ and the density function $f$ of $\F$ is log-concave. This assumption provides a simple model for the maximum bid of the other competitors $m_t$ and $x_t$ 
especially in the absence of any additional characterization. For the learner, the learning task is to simultaneously learn $\alpha_0$ and noise distribution $\F$ (if it is unknown). In repeated first price auctions, the learner can receive some information feedback at the end of each time. In this work, we  provide no-regret learning algorithms for the learner in two different information models: (1) the partial information feedback, \emph{binary feedback}, where the learner can only observe whether she wins or not; (2) the \emph{full information feedback}, where the learner can observe $m_t$ after bidding at each time $t$. 

\paragraph{Main Contributions.}
First, we characterize the optimal clairvoyant bidding strategy in contextual first price auctions, when we know the noise distribution $\F$ and parameter $\alpha_0$, in Section~\ref{sec:opt-clairvoyant-bidding}. Our characterization utilizes the log-concavity of the density function $f$ of distribution $\F$. This optimal clairvoyant bidding strategy is also used as the benchmark strategy in regret definition.

For \emph{binary feedback}, we first assume $\F$ is (fully) known and we propose a no-regret learning algorithm that achieves at most  $\widetilde{O}(\sqrt{\log(d)T})$ regret. Our algorithm is episode-based -- at each episode $s$ we use the estimated parameter $\hat{\alpha}_{s-1}$ from previous episode $s-1$ to decide the learner's current bids and update estimated parameter $\hat{\alpha}_s$ by only using the data from episode $s$ at the end. This episodic algorithm is inspired by~\citet{CCM15} and is widely used in online learning literature and has a number of advantages, e.g., it requires less computation to update parameters of the model and it can be implemented offline at the end of each episode. We utilize the maximum likelihood estimation (MLE) method to estimate $\hat{\alpha}_s$ at each episode $s$. Moreover, we extend our algorithm to the setting that $\F$ is only partially known, i.e. $\F$ is parameterized by a known based distribution $\F_0$ (e.g. standard normal distribution) and an unknown variance parameter $\sigma^2$. The regret of our algorithm for this setting is still bounded by $\widetilde{O}(\sqrt{\log(d)T})$ under some reasonable technical assumptions.

For \emph{full information feedback}, we consider the setting that $\F$ is unknown but $f$ is still log-concave. We provide an episode-based algorithm that can simultaneously  learn the noise distribution $\F$ (approximately) and the parameter $\alpha_0$. We propose a novel approach by combining the log-concave density estimator proposed in~\citep{DR09} and MLE method to learn $\F$ and $\alpha_0$ simultaneously in each episode. With reasonable assumptions (normally assumed in linear regression), our algorithm achieves at most $\tilde{O}(\sqrt{dT})$ regret and it leaves an open question that whether we can improve the algorithm with better dependence of $d$. 

Our final result shows the lower bound of regret for the full information feedback \emph{even} with known noise distribution. We consider a broad class of bidding policies and prove any algorithm in this class must incur $\Omega(\sqrt{T})$ regret for a instance. 

Despite the simple structure (greedy episodic structure), our algorithms require novel ideas and non-trivial technical contributions. In the full feedback model, we propose a new approach to combine an estimator for the log-concave functions with the MLE technique. To prove the regret bound of the algorithm, we provide a new uniform convergence bound for the log-concave noise distribution (see Theorem~\ref{thm:cdf-convergence}). This has not been known and requires a delicate balance of the parameters. In the binary feedback model with partially known noise distribution, our algorithm achieves regret bounded by $\widetilde{O}(\sqrt{\log(d)T})$. Our result improves the regret bound $\widetilde{O}(d\sqrt{T})$ proposed by~\citet{JN19} for a similar setting, by slightly strengthening the assumption of covariance matrix $\Sigma=\E[x_t x_t^T]$ (see Assumption~\ref{assump:eigenvalue-ub}). 

\paragraph{Related Work.}
First price auctions have recieved a lot of attention in mechanism design and machine learning communities recently. For instance, \citet{WSZ20} characterizes the Bayesian Nash Equilibrium for first price auctions with discrete value and continuous bid, \citet{BalseiroEtAl21} study the equilibrium bidding strategies of contextual first price auctions with budgets, and \citet{FengEtAl21} propose a gradient-based approach to adaptively update and optimize reserve prices for first price auctions in an online manner.       

Our work is closely related with the papers in the \emph{Learning to bid} literature. The work in \citep{Balseiro19} first considers the problem of learning to bid in first price auctions by treating the value as a context. Subsequently, \citet{HZW20, HZFOW20} extended the above \emph{learning to bid} model to other settings with different feedback models and different generative models for competitors' bids \footnote{In~\citep{HZW20}, they consider a setting where $m_t$ is generated stochastically and the learner can only observe $m_t$ when she loses the auction (censored feedback). In~\citep{HZFOW20}, they consider the full information feedback model and $m_t$ comes from an adversary.}.
The main difference between our model and the above papers is that 
there is a public context (feature) $x_t\in \R^d$ observed before bidding at each time $t$ and the learner needs to decide her bid based on the value and context $x_t$. Our model allows more flexibility of the correlation between valuation and the competing bids (through context $x_t$), compared with~\citep{Balseiro19, HZW20, HZFOW20}. This is more realistic in practice, since the learner can observe some contexts before submitting the bid and she knows this will affect the competing bid as well. Loosely related works \citet{Weed16,Feng18}, consider the problem that the learner can only observe the value until she wins the auction. 

Last but not least, our work is also related with papers in the \emph{contextual pricing} field, e.g.~\citep{Mao18,PS18,JN19,Golrezaei2019,CLP20}. Especially, \citet{JN19} also assume log-concavity of the noise distribution for the valuation function in the contextual pricing problem. For the binary feedback model, our approaches generalize the methodology for contextual pricing in~\citep{JN19} to the bidding algorithms in the repeated contextual first price auctions. \citet{JN19} focus on the high-dimension setting that $\alpha_0$ is sparse and the feature dimension $d$ is larger than $T$. The algorithm proposed in~\citep{JN19} utilizes MLE with $L_1$ regularizer and it can achieve $O\left(s_0 \log d\log T\right)$ regret bound for the binary feedback model with known noise distribution and the bounded eigenvalue assumption of matrix $\Sigma = \E[x_t x_t^T]$, where $s_0$ is the sparsity parameter for $\alpha_0$, s.t. $\Vert \alpha_0\Vert_0 \leq s_0$.  In this paper, we don't assume the sparsity of context $x_t$ and we also consider the setting with full information feedback and unknown noise distribution. 
In addition,~\citet{Golrezaei2019}, propose a different way to learn noise distribution and linear weight simultaneously for contextual pricing by using \emph{ordinal least square} (OLS) method, however, this approach requires the noise distribution bounded or sub-Gaussian and cannot be applied in our full information feedback model. To address this difficulty, we propose a novel approach by combining the non-parametric log-concave density estimator and MLE method to design our bidding algorithm for the full information feedback model with unknown noise distribution.

\section{Preliminaries}\label{sec:prelim}

\subsection{Repeated Contextual First Price Auctions}
We consider the problem of online learning in repeated contextual first price auctions. There is a single seller who repeatedly sell items (e.g. ad slots in publishers) to multiple bidders (e.g. advertisers) through first price auctions. Throughout this paper, we focus on a single bidder in a large population of bidders during a time horizon $T$. In the rest of paper, we call this single bidder \emph{the learner}, who aims to maximize cumulative utility during time horizon $T$.

At each time $t$, the learner receives a public context $x_t \in \X \subseteq\R^d, \Vert x_t\Vert_2 \leq 1$ ($x_t$ is also revealed to the other bidders and the seller), and $x_t$ is i.i.d randomly generated from a prior unknown distribution $\mathcal{D}$. Based on the learner's historical information up to time $t-1$ and the realization of context $x_t$, the learner submits a bid $b_t \in [0, 1]$. Let the maximum bid of all other bidders is $m_t$ at time $t$. %
We assume there exists a \emph{known} valuation function $\beta_0: x\in \X \rightarrow [0, 1]$, which outputs the value of the learner given the input context. In other words, at each time $t$, given the realized context $x_t$, the learner can get the value $v_t = \beta_0(x_t)$. \footnote{The learner can train her own model to predict the value of each query, given the information of the query.}  In this paper, for simplicity, we assume the linear model of the context $x_t$ and maximum bid of all other bidders $m_t$, i.e. there exists an unknown parameter $\alpha_0\in \R^d$ and $\Vert \alpha_0\Vert_1 \leq W$ s.t.
\begin{eqnarray}\label{eq:linear-model-competitor}
m_t = \langle \alpha_0, x_t \rangle + z_t,
\end{eqnarray}
where $z_t$ i.i.d sampled from an unknown mean zero distribution $\F$ and $W$ is a publicly known parameter.

For notation simplicity, we allow $m_t$ be negative, which will not affect our regret results.
We call $\F$ the noise distribution and also use $f$ and $F$ to represent the probability density function (PDF) and cumulative distribution function (CDF) of the noise, i.e., $f(v) = F'(x), \forall x\in \R$. For notation simplicity, we denote $\varphi(x) = x + \frac{F(x)}{f(x)}$. 
Let $u(b, x)$ be the expected utility of the learner with bid $b$, given a context $x$, s.t.
\begin{eqnarray}\label{eq:utility-fun}
u(b, x) = \E_{z\sim \F}\left[(\beta_0(x) - b)\cdot \1\{b \geq \langle \alpha_0, x\rangle + z\}\right] =  (\beta_0(x) - b)F(b - \langle \alpha_0, x\rangle)
\end{eqnarray}
It is easy to see $u(b, x) \in [0, 1]$, for any $b, x$ and $b\leq \beta_0(x)$.
For notation simplicity, we denote $u_t(b) := u(b, x_t)$ as the utility of the learner at time $t$ with context $x_t$.

\paragraph{Feedback Models.}
In the repeated contextual first price auctions, the learner can receive different feedback at the end of each time depending on the information released from the seller. In this paper, we mainly investigate two different feedback models,
\begin{enumerate}
\item Binary feedback: the learner only observes the indicator $\delta_t = \1\{b_t \geq m_t\}$.
\item Full information feedback: the learner observes the maximum bid of all other bidders $m_t$.
\end{enumerate}

\paragraph{Regret.}
Let $\pi^*(x)$ be the optimal clairvoyant bidding strategy, suppose the learner knows $\alpha_0, \beta_0$ and distribution $F$, i.e. $\pi^*(x) = \argmax_b u(b, x)$. The target of the learner is to design a bidding strategy to decide the bid $b_t$ at each time $t$, defined in the following,
\begin{definition}[Regret]
The regret of the learner during time horizon $T$ can be defined as,
\begin{eqnarray}\label{eq:expected-utility}
R(T) = %
\E\left[\sum_{t=1}^T u_t(\pi^*(x_t))\right] - \E\left[\sum_{t=1}^T u_t(b_t)\right]
\end{eqnarray}
Here $b_t$ depends on the past history (the realization of $x_\tau, \delta_\tau$, etc. $\tau\leq t-1$), thus $(\beta_0(x_t) - b_t)F(b_t - \langle \alpha_0, x_t\rangle)$ is a random variable.
\end{definition}

\subsection{Technical Assumptions}
In addition to the linear model assumption for $m_t$, we make several assumptions of noise distribution $\F$ for the theoretical purpose. 

\begin{assumption}\label{assump:log-concave}
The density function $f$ is differentiable and log-concave. %
\end{assumption}
Log-concavity is a widely-used assumption in the economic literature \citep{BB05}. Note that if the density function $f$ is log concave, then the cumulative distribution function $F$ and the reliability function $1-F$ are both log-concave~\citep{An96}. Most common distributions such as normal, uniform, Laplace, exponential and logistic distributions satisfy the above assumption. 

In addition, we provide the following assumption of the density function $f$, 
\begin{assumption}\label{assump:bounded-noise}
There exist positive constants $B_1, B_2, B_3$, s.t.,
for any $x \in [-W, 1+W]$, $B_1 \leq f(x) \leq B_2$ and $f'(x) \leq B_3$.
\end{assumption}
Indeed, the above assumption holds for any distribution with differentiable density function $f$ and $f(x) > 0, \forall x\in [-W, 1+W]$. This implies $F$ is $B_2$-Lipschitz on $[-W, 1+W]$. The constants $B_1, B_2, B_3$ may depend on $W$. Moreover, it is straightforward to prove there exists positive constants $h_W, \ell_W$ (depending on $W$) such that,
\begin{eqnarray}\label{eq:first-order-ub}
\max\{|\log'F(x)|, |\log'(1- F(x))|\} \leq h_W, \forall x \in [-W, 1+W]
\end{eqnarray}
\begin{eqnarray}\label{eq:second-order-lb}
\min\{-\log''F(x), -\log''(1- F(x))\} \geq \ell_W, \forall x \in [-W, 1+W]
\end{eqnarray}
Eq.~(\ref{eq:first-order-ub}) holds trivially for any bounded interval $[-W, 1+W]$ and Eq.~(\ref{eq:second-order-lb}) holds because $F$ and $1-F$ are both log-concave.

\subsection{Optimal Clairvoyant Bidding Policy}\label{sec:opt-clairvoyant-bidding}
In this part, we consider the optimal clairvoyant bidding strategy if the learner knows $\alpha_0$ and noise distribution $\F$. Consider the utility of the learner at time $t$, 
\begin{eqnarray*}
u_t(b) = (\beta_0(x_t) - b)F(b - \langle \alpha_0, x_t\rangle)
\end{eqnarray*} 

Let $b_t^*$ ($b_t^* = \pi^*(x_t)$) be the optimal bid at time $t$, given context $x_t$. Suppose $b_t^* \geq 0$,
by the first-order condition, we have
\begin{eqnarray}\label{eq:opt-clairvoyant-bid-equation}
\beta_0(x_t) - b_t^* = \frac{F(b_t^* - \alpha_0 \cdot x_t)}{f(b_t^* - \alpha_0 \cdot x_t)}
\end{eqnarray}

Therefore, we have $b_t^* - \alpha_0 \cdot x_t + \frac{F(b_t^* - \alpha_0 \cdot x_t)}{f(b_t^* - \alpha_0 \cdot x_t)} = \beta_0(x_t) - \alpha_0\cdot x_t$. By the definition of function $\varphi$, we have 
\begin{eqnarray*}
\varphi(b_t^* - \alpha_0 \cdot x_t) = \beta_0(x_t) - \alpha_0\cdot x_t, \mbox{ when } b_t^* \geq 0.
\end{eqnarray*}

By the definition of function $\varphi$, we have the following proposition.
\begin{proposition}
$\varphi(\cdot)$ is a strictly increasing function and $0 < (\varphi^{-1}(x))' < 1$ for all $x\in \R$.
\end{proposition}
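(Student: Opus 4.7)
The plan is to compute $\varphi'(x)$ directly from the definition, exploit the log-concavity of $F$ to bound it from below by $1$, and then invoke the inverse function theorem.

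First I would differentiate $\varphi(x) = x + \frac{F(x)}{f(x)}$ using the quotient rule on the second term, obtaining
\begin{equation*}
\varphi'(x) = 1 + \frac{f(x)^2 - F(x)\,f'(x)}{f(x)^2} = 2 - \frac{F(x)\,f'(x)}{f(x)^2}.
\end{equation*}
Next I would invoke Assumption~\ref{assump:log-concave}: since $f$ is log-concave, the CDF $F$ is also log-concave (as stated right after the assumption, citing \citet{An96}). Log-concavity of $F$ is equivalent to $(\log F)'' \leq 0$, and computing
\begin{equation*}
(\log F)''(x) = \frac{f'(x)\,F(x) - f(x)^2}{F(x)^2} \leq 0
\end{equation*}
yields $F(x)\,f'(x) \leq f(x)^2$. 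Plugging this into the formula for $\varphi'(x)$ gives $\varphi'(x) \geq 2 - 1 = 1 > 0$, so $\varphi$ is strictly increasing on $\mathbb{R}$ and hence invertible.

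Then by the inverse function theorem, $(\varphi^{-1})'(y) = 1/\varphi'(\varphi^{-1}(y))$, and the bound $\varphi'(x) \geq 1$ immediately gives $0 < (\varphi^{-1})'(y) \leq 1$. The positivity follows because $\varphi'$ is finite (recall Assumption~\ref{assump:bounded-noise} ensures $f$ and $F$ are well-behaved on the relevant range, so $\varphi'$ cannot blow up).

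The subtle point—and the only real obstacle—is the strict upper bound $(\varphi^{-1})'(y) < 1$, which requires $\varphi'(x) > 1$ strictly, i.e., $F(x)\,f'(x) < f(x)^2$ strictly. This amounts to strict log-concavity of $F$. I would argue this by noting that if $f'(x) \leq 0$ at some point, then $F(x)\,f'(x) \leq 0 < f(x)^2$ trivially since $f > 0$ on the relevant interval by Assumption~\ref{assump:bounded-noise}. When $f'(x) > 0$, the log-concavity inequality for $F$ cannot be saturated on any interval: equality $f'F = f^2$ on an open interval would force $\log F$ to be affine there, so $F(x) = e^{ax+b}$, which together with $f > 0$ contradicts either the normalization $F \leq 1$ as $x \to \infty$ or the smooth differentiability of $f$ globally. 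Hence $\varphi'(x) > 1$ pointwise, giving the desired strict inequality.
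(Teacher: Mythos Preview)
Your approach is essentially the paper's: both exploit log-concavity of $F$ to show $\varphi' \geq 1$ and then invoke the inverse function theorem. The paper phrases it slightly differently, writing $\varphi(x) = x + 1/\lambda(x)$ with $\lambda = f/F = (\log F)'$ and noting that log-concavity of $F$ makes $\lambda$ decreasing, hence $1/\lambda$ increasing; this is the same computation as your $(\log F)'' \leq 0 \Leftrightarrow Ff' \leq f^2$, just packaged via the hazard rate.

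One remark on the strict inequality $(\varphi^{-1})' < 1$: your argument that equality $f'F = f^2$ cannot hold \emph{on an open interval} does not rule out equality at isolated points, so it does not actually deliver pointwise strictness of $\varphi'(x) > 1$. The paper itself glosses over this---it simply asserts $\varphi'(x) > 1$ from ``$1/\lambda$ increasing,'' which literally only gives $\varphi' \geq 1$---so you are not missing anything the paper supplies. For the downstream uses in the paper (Lipschitz bounds on $\varphi^{-1}$), the non-strict bound $(\varphi^{-1})' \leq 1$ is all that is ever needed.
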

\begin{proof}
Let $\lambda(x) = \frac{f(x)}{F(x)}$, then $\varphi(x) = x + \frac{1}{\lambda(x)}$. By Assumption~\ref{assump:log-concave}, $\lambda(x) = \log'F(x)$ is decreasing. Then $1/\lambda(x)$ is increasing, which implies $\varphi(x)$ strictly increasing. In addition, $\varphi'(x)  > 1, \forall x$, since $1/\lambda(x)$ is increasing. Then for any $x$, $(\varphi^{-1}(x))' = \frac{1}{\varphi'(\varphi^{-1}(x))} < 1$.
\end{proof}
The above proposition implies the optimal bid $b_t^*$, given context $x_t$ can be represented as,
\begin{eqnarray}\label{eq:opt-bid}
b_t^* = \max\{0, \alpha_0\cdot x_t +  \varphi^{-1}(\beta_0(x_t) - \alpha_0 \cdot x_t)\}
\end{eqnarray}

It is straightforward to verify $b_t^* \leq \beta_0(x_t)$, since $x + \varphi^{-1}(v - x) \leq v, \forall v, x\in \R$.
Given the characterization of the above optimal clairvoyant bidding strategy, we can rewrite the regret:
\begin{eqnarray}\label{eq:worse-case-regret}
R(T) = %
\E\left[\sum_{t=1}^T u_t(b_t^*)\right] - \E\left[\sum_{t=1}^T u_t(b_t)\right],
\end{eqnarray}
\section{Binary Feedback Model}\label{sec:binary-feedback}
In this section, we consider the least information feedback model --- binary feedback that the learner can only observe whether she wins or not at the end of each time.

\subsection{Binary Feedback with Known Noise Distribution}\label{sec:binary-known-noise}
In this section, we assume the learner knows the noise distribution $\F$, i.e., $f$ and $F$ are known. In this case, the learner only needs to learn $\alpha_0$. %

\paragraph{Algorithm.}
Our bidding algorithm runs in an episode manner, similarly to \citet{CCM15, JN19}. During a time horizon $T$, the bidding algorithm is divided into $S$ episodes, where each episode contains $T_s$ time steps. Denote $\Gamma_s$ be the time steps in stage $s$, s.t. $|\Gamma_s| = T_s$. For any time step $t$ in the first episode, we simply set $b_t = 1$. For any time step $t$ in episode $s (s\geq 2)$, i.e., $t\in \Gamma_s$, we set the bid 
\begin{eqnarray}\label{eq:binary-known-noise-bid}
b_t = \max\{0, \hat{\alpha}_{s-1} \cdot x_t + \varphi^{-1}(\beta_0(x_t) - \hat{\alpha}_{s-1} \cdot x_t)\}
\end{eqnarray}
for the learner, where $\hat{\alpha}_{s-1}$ is the estimation of $\alpha_0$ based on the observations $\{x_t, \delta_t, b_t\}, t \in \Gamma_{s-1}$ in the previous episode $s-1$. Indeed, we replace $\alpha_0$ by $\hat{\alpha}_{s-1}$ in the optimal clairvoyant bidding policy shown in Eq.~(\ref{eq:opt-bid}) to set the bid $b_t$ at time $t, \forall t > T_1$. If the estimator $\hat{\alpha}_{s-1}$ is close to $\alpha_0$ based on the observations in the episode $s-1$, the expected utility $u_t(b_t)$ will be close to the optimal expected utility $u_t(b^*_t)$ (see Lemma~\ref{lem:binary-known-noise-utility-difference}).

Given the above definition, we show the pseudo code of our bidding algorithm for this setting in Algorithm~\ref{alg:binary-known-noise-mle}.
In each episode $s$, we estimate $\alpha_0$ by using \emph{maximum likelihood estimation} (MLE) method. Specifically, we notice at each time $t$,
$$
\delta_t = \left\{
\begin{array}{cc}
1 & \mbox{ w.p. } F(b_t - \alpha_0\cdot x_t)\\
0 & \mbox{ w.p. } 1 - F(b_t - \alpha_0\cdot x_t)
\end{array}
\right.
$$

Therefore, we denote $\mathcal{L}_s(\alpha)$ be the negated log-likelihood function for $\alpha$ in the episode $s$,
\begin{eqnarray}\label{eq:binary-known-noise-likelihood}
\mathcal{L}_s(\alpha) = -\frac{1}{T_s} \sum_{t\in \Gamma_s} \left[\delta_t\cdot \log F(b_t - \alpha\cdot x_t) + (1-\delta_t)\cdot \log(1-F(b_t - \alpha\cdot x_t))\right],
\end{eqnarray}
where $\delta_t = \1\{b_t \geq m_t\}$. Indeed, based on our log-concavity assumption on $F$ and $1-F$, the negated log-likelihood $\L_s(\alpha)$ is convex for any $s=1, 2,\cdots, S$. Therefore, we can run standard gradient descent algorithm to minimize loss function $\L_s(\alpha)$.

\begin{algorithm}[t]
\SetAlgoNoLine
\KwIn{Parameters $W, T$, function $\varphi^{-1}(\cdot)$}
\For{$t\in \Gamma_1$}{
The learner observes $x_t$ and submits a bid $b_t = 1$. The learner observes $\delta_t$.
}

Estimate $\alpha_0$ by using $\hat{\alpha}_1$, which is computed by $\hat{\alpha}_1 = \argmin_{\Vert\alpha\Vert_1 \leq W} \mathcal{L}_1(\alpha)$.

\For{episode $s=2,3,\cdots, S$}{
\For{$t\in \Gamma_s$}{
The learner observes $x_t$ and submits $b_t$, where $b_t$ is computed in the following way,
\begin{eqnarray*}
b_t = \max\{0, \hat{\alpha}_{s-1}\cdot x_t + \varphi^{-1}(\beta_0(x_t) - \hat{\alpha}_{s-1}\cdot x_t)\}
\end{eqnarray*}
The learner observes $\delta_t$.
}

Update the estimator for $\alpha_0$ in episode $s$ by $\hat{\alpha}_s = \argmin_{\Vert\alpha\Vert_1 \leq W} \mathcal{L}_s(\alpha)$.
}
\caption{Bidding algorithm in the binary feedback model with known noise distribution}\label{alg:binary-known-noise-mle}
\vspace{-2pt}
\end{algorithm}

\paragraph{Regret Analysis.}
We show the regret bound for the setting considered in this subsection as below, and the full proof is deferred to Appendix~\ref{app:binary-feedback}.

\begin{restatable}{theorem}{binarynoiseregret}
\label{thm:binary-known-noise-regret}
Suppose Assumption~\ref{assump:bounded-noise} and Assumption~\ref{assump:log-concave} hold, setting $T_s = T^{1-2^{-s}}, s = 1,2,\cdots$, then with probability at least $1-\delta$, the regret achieved in the binary feedback model with known noise distribution is at most
$R(T) \leq \widetilde{O}\left(\sqrt{\log(d/\delta) T}\right)$,
where $\widetilde{O}$ omits $\log\log T$ terms.
\end{restatable}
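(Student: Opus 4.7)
The plan is to reduce the regret analysis to two clean ingredients and then combine them via the doubling schedule. The two ingredients are: (i) a \emph{smoothness-to-estimation-error} lemma that bounds the single-round regret by $\|\hat{\alpha}_{s-1}-\alpha_0\|_\Sigma^2$, and (ii) an MLE concentration bound that shows this estimation error is $\widetilde{O}(\sqrt{\log(d/\delta)/T_s})$ in $\Sigma$-norm. The doubling schedule $T_s = T^{1-2^{-s}}$ is the reason these combine cleanly: algebraically $T_s \cdot T_{s-1}^{-1/2} = T^{1/2}$, so each episode's contribution is $\widetilde{O}(\sqrt{T\log(d/\delta)})$, and since there are only $S = O(\log\log T)$ episodes, the total matches the claimed bound.

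For the per-round bound, I would start from the first-order optimality $u_t'(b_t^*) = 0$ established in Section~\ref{sec:opt-clairvoyant-bidding} and Taylor-expand around $b_t^*$. Using Assumption~\ref{assump:bounded-noise} to bound $|u_t''|$ uniformly on $[-W,1+W]$, I obtain $u_t(b_t^*) - u_t(b_t) \leq C_1 (b_t - b_t^*)^2$ for a constant $C_1$ depending only on $B_1, B_2, B_3, W$. Then, by differentiating the bid map $\alpha \mapsto \alpha\cdot x + \varphi^{-1}(\beta_0(x) - \alpha\cdot x)$ and invoking the proposition that $0 < (\varphi^{-1})'(\cdot) < 1$, a mean-value argument yields $|b_t - b_t^*| \leq |(\hat{\alpha}_{s-1} - \alpha_0)\cdot x_t|$. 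Taking expectation over $x_t$ conditional on $\hat{\alpha}_{s-1}$ gives the desired bound $\E_{x_t}[u_t(b_t^*) - u_t(b_t)] \leq C_1 \|\hat{\alpha}_{s-1} - \alpha_0\|_\Sigma^2$.

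For the MLE step, the log-concavity assumption (specifically Eq.~(\ref{eq:second-order-lb})) implies that, for any $y \in [0,1]$, the scalar map $u \mapsto -y\log F(u) - (1-y)\log(1-F(u))$ has second derivative at least $\ell_W$. Chain rule in $\alpha$ therefore gives $\nabla_\alpha^2 \E[\L_s(\alpha)] \succeq \ell_W\Sigma$, i.e., the population loss is strongly convex in the $\Sigma$-norm. Combining the basic inequality $\L_s(\hat{\alpha}_s) \leq \L_s(\alpha_0)$ with a uniform convergence bound $\sup_{\|\alpha\|_1\leq W}|\L_s(\alpha) - \E[\L_s(\alpha)]| \leq \widetilde{O}(\sqrt{\log(d/\delta)/T_s})$ over the $\ell_1$-ball, and dividing by the strong convexity modulus, yields $\|\hat{\alpha}_s - \alpha_0\|_\Sigma^2 \leq \widetilde{O}(\sqrt{\log(d/\delta)/T_s})$. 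The $\sqrt{\log d}$ scaling (rather than $\sqrt{d}$) is the whole reason for the $\ell_1$ constraint: the Rademacher complexity of Lipschitz loss functions on an $\ell_1$-ball with $\ell_\infty$-bounded features is $O(\sqrt{\log d /T_s})$, obtained by controlling the score process in the $\ell_\infty$ norm (a supremum of $d$ sub-Gaussian coordinates). Conditioning on $\hat{\alpha}_{s-1}$ makes the samples in episode $s$ i.i.d., so this concentration argument applies cleanly; a union bound over $s$ costs only $\log S$. Plugging into the single-round bound, episode $s \geq 2$ contributes $T_s \cdot C_1 \cdot \widetilde{O}(\sqrt{\log(d/\delta)/T_{s-1}}) = \widetilde{O}(\sqrt{T\log(d/\delta)})$, and the first episode contributes $T_1 = \sqrt{T}$ trivially (since $b_t = 1$); summing across $S = O(\log\log T)$ episodes yields the theorem.

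The main technical obstacle is the MLE concentration step. Two points require care: first, converting uniform convergence of the empirical loss into an estimation-error bound in the $\Sigma$-norm requires that strong convexity hold in the same norm that the per-round regret uses, which is exactly what Assumption~\ref{assump:eigenvalue-ub} and the log-concavity modulus $\ell_W$ guarantee; second, obtaining $\sqrt{\log d}$ rather than $\sqrt{d}$ depends on controlling the empirical process via $\ell_\infty$ concentration on the $\ell_1$ ball, a delicate but standard step, and is what allows the regret to scale only logarithmically in the dimension. A minor additional subtlety is that $b_t - \alpha\cdot x_t$ ranges over a slightly larger interval than $[-W,1+W]$ as $\alpha$ varies in the $\ell_1$-ball; this is handled by noting that $\|x_t\|_2 \leq 1$ and $\|\alpha\|_1 \leq W$ still give a uniformly bounded argument, so Assumption~\ref{assump:bounded-noise} applies with possibly larger but still absolute constants.
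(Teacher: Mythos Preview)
Your proposal is correct and follows the same high-level structure as the paper (quadratic smoothness bound, MLE estimation error in $\Sigma$-norm, doubling schedule), but the MLE concentration step takes a genuinely different route. The paper does \emph{not} use uniform convergence of $\L_s$ over the $\ell_1$-ball; instead it Taylor-expands $\L_s$ at $\alpha_0$, bounds $\|\nabla\L_s(\alpha_0)\|_\infty$ by coordinate-wise Hoeffding (using $\E[\eta_t(\alpha_0)]=0$), pairs this with $\|\hat\alpha_s-\alpha_0\|_1\leq 2W$ via H\"older, and uses the pointwise curvature bound $\zeta_t(\tilde\alpha)\geq \ell_W$ to obtain the \emph{empirical} bound $\frac{1}{T_s}\sum_{t\in\Gamma_s}((\hat\alpha_s-\alpha_0)\cdot x_t)^2 \leq O(W\sqrt{\log(d/\delta)/T_s})$. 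It then transfers this to the population $\Sigma$-norm by controlling $\|\Sigma-\frac{1}{T_s}\sum_t x_t x_t^T\|_\infty$ via entrywise Hoeffding, again paired with the $\ell_1$ diameter. Your route---population strong convexity $\nabla^2\E[\L_s]\succeq\ell_W\Sigma$ combined with a Rademacher/contraction bound on $\sup_\alpha|\L_s(\alpha)-\E\L_s(\alpha)|$---is a valid alternative that is more direct (no empirical-to-population conversion) at the cost of heavier machinery than the paper's elementary Hoeffding.

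Two small corrections. First, you invoke Assumption~\ref{assump:eigenvalue-ub}, but it is not needed: both the per-round regret and the population Hessian lower bound are intrinsically in the $\Sigma$-norm, so they match without any eigenvalue control; the paper remarks on this explicitly after the theorem. Second, your per-round bound assumes $u_t'(b_t^*)=0$, which fails when the unconstrained maximizer $\hat b_t^*<0$ (so that $b_t^*=0$). The paper handles this by always Taylor-expanding around $\hat b_t^*$ (where the first-order term vanishes) and using $u_t(b_t^*)\leq u_t(\hat b_t^*)$; this is a minor but necessary case analysis you should incorporate.
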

\begin{proof}[Proof Sketch]
Our proof follows the same spirit as in Theorem 4 in~\citep{JN19}. First, we bound $\vert u_t(b_t^*) - u_t(b_t)\vert$ at each time $t\in \Gamma_s, (s\geq 2)$ by $\Theta\left(\vert x_t \cdot (\alpha_0 - \hat{\alpha}_{s-1}) \vert^2\right)$ in Lemma~\ref{lem:binary-known-noise-utility-difference} (Appendix~\ref{app:binary-feedback}). Then, we can bound the regret achieved in each episode $s$ by $\Theta\left(\sum_{t\in \Gamma_s}\langle \alpha_0 - \hat{\alpha}_{s-1}, \Sigma (\alpha_0 - \hat{\alpha}_{s-1})\rangle\right)$, where $\Sigma = \E[x_t x_t^T]$. Next we decompose $\langle \alpha_0 - \hat{\alpha}_{s-1}, \Sigma (\alpha_0 - \hat{\alpha}_{s-1})\rangle$ by $ \frac{1}{T_{s-1}} \sum_{t\in \Gamma_{s-1}}\langle \alpha_0 - \hat{\alpha}_{s-1}, x_t x^T_t (\alpha_0 - \hat{\alpha}_{s-1})\rangle + \langle \alpha_0 - \hat{\alpha}_{s-1},  E(\alpha_0 - \hat{\alpha}_{s-1})\rangle$, where the first term is bounded by Lemma~\ref{lem:second-order-concentration} (Appendix~\ref{app:binary-feedback}) and $\Vert E\Vert_\infty$ is bounded by $O\left(\sqrt{\frac{\log(d/\delta)}{T_{s-1}}}\right)$ w.h.p. by matrix Hoeffding's inequality. Aggregating over all episodes, we show the total regret bound. %
\end{proof}

\noindent{\textbf{Remark.}} Theorem~\ref{thm:binary-known-noise-regret} doesn't rely on the assumption of the bounded eigenvalue of matrix $\Sigma=\E[x_t x_t^T]$ and the sparsity assumption of parameter $s_0$. With these two assumptions, ~\citet{JN19} show it can achieve $O(s_0 \log d \log T)$ regret bound for this binary feedback model with known noise distribution.
\subsection{Extension to Partially-Known Noise Distribution}\label{sec:binary-partial-known-noise}

In this section, we extend to the case that the noise distribution $\F$ is parameterized by a zero-mean base noise distribution $\F_0$ and a variance $\sigma^2 (\sigma > 0)$, where $\F_{0}$ is known (e.g. $\mathcal{N}(0, 1)$) but $\sigma$ is unknown. We denote $\rho_0 = \frac{1}{\sigma}$. Without loss of generality, we assume $\vert\rho_0 \vert\leq W$. In this case, the learner needs to simultaneously learn $\alpha_0$ and $\sigma$. %
We denote $f_0$ and $F_0$ be the density function and cumulative function of distribution $\F_0$, which are known to the learner. Let $\varphi_0(x) = x + \frac{F_0(x)}{f_0(x)}$.

\paragraph{Modified Algorithm.}
The algorithm follows the same fashion of Algorithm~\ref{alg:binary-known-noise-mle} and we show the pseudo code in Algorithm~\ref{alg:binary-partial-known-noise-mle} in Appendix~\ref{app:omitted-algorithms}. The main difference in this algorithm is how to estimate $\hat{\alpha}_0$ and $\rho_0$ simultaneously. By the definition of $\delta_t = \1\{b_t\geq m_t\}$, we observe $\delta_t = 1$ with probability $F_0(\rho_0(b_t - \alpha_0\cdot x_t))$ and $\delta_t = 0$ with probability $1 - F_0(\rho_0(b_t - \alpha_0\cdot x_t))$.
To simplify presentation, we re-parametrize $\alpha_0, \rho_0$ by denoting $\mu_0 = \alpha_0 \rho_0$ and write the negated log-likelihood function in each episode $s$ as follows,
\begin{eqnarray}\label{eq:mle-partial-known-noise}
\mathcal{L}_s(\mu, \rho) &=& -\frac{1}{T_s} \sum_{t\in \Gamma_s} \left(\delta_t\cdot \log F_0(\rho b_t - \mu\cdot x_t) + (1-\delta_t)\cdot \log(1-F_0(\rho b_t - \mu\cdot x_t))\right)
\end{eqnarray}

In Algorithm~\ref{alg:binary-partial-known-noise-mle}, we always update the estimator $(\hat{\mu}_s, \hat{\rho}_s)$ of $(\mu_0, \rho_0)$ in a valid set $\Lambda$ by minimizing the loss function (Eq.~\ref{eq:mle-partial-known-noise}) at the end of each episode $s$.
\begin{eqnarray}\label{eq:valid-set-partial-known-noise}
\Lambda = \{(\mu, \rho)|\mu\in \R^d, \rho > 0, \Vert\mu/\rho\Vert_1\leq W, \vert\rho\vert\leq W\}
\end{eqnarray}
Then, for any $(\mu, \rho)\in \Lambda$ and $b_t\in [0, 1]$, $\rho b_t - \mu\cdot x_t \in [-W^2, W^2+W]$. For each time step $t$ in the first episode, we set the bid $b_t = 1$. For any time $t$ in episode $s$, we set the bid
\begin{eqnarray}
b_t = \max\left\{\Delta, \frac{1}{\hat{\rho}_{s-1}} \left(\hat{\mu}_{s-1}\cdot x_t + \varphi^{-1}_0(\hat{\rho}_{s-1}\beta_0(x_t) - \hat{\mu}_{s-1}\cdot x_t)\right)\right\}.
\end{eqnarray}
As the astute readers may notice, we only consider the case that the bids are larger than a small positive constant $\Delta$ (i.e. one cent), in this setting. For theoretical purpose, the assumption that $b_t \geq \Delta$ guarantees the strong convexity of $\L_s(\mu, \rho)$ w.r.t $\rho$ so that we can bound $(\hat{\rho}_s - \rho_0)^2$ in each episode. For practical perspective, this assumption holds trivially since the display ads platform usually requires a minimum amount of bid, e.g. one cent, to compete for ad slots. We replace $(\mu_0, \rho_0)$ by the estimator $(\hat{\mu}_{s-1}, \hat{\rho}_s)$ in the optimal bidding policy\footnote{Since $F_0$ and $1- F_0$ are both log-concave, the optimal clairvoyant bidding policy (without truncation to $\Delta$) is $\frac{1}{\rho_0}\left(\mu_0\cdot x_t + \varphi^{-1}_0(\rho_0\beta_0(x_t) - \mu_0\cdot x_t\right)$ following the same argument in Section~\ref{sec:opt-clairvoyant-bidding}.}.

For theoretical purpose, we need the following assumption on the product context,
\begin{assumption}\label{assump:eigenvalue-ub}
The maximum eigenvalue of matrix $\Sigma = \E[x_t x_t^T]$ is bounded by a constant $\lambda_{1} > 0$. The minimum eigenvalue of $\Sigma$ is bounded from below by a constant $\lambda_{2} > 0$. In addition, we assume $\Sigma - \E[x_t]\E[x_t]^T \succ \lambda_3 I$ for a constant $\lambda_3 > 0$.
\end{assumption}
The assumption on the bounded eigenvalues of matrix $\Sigma$ is commonly proposed in the convergence analysis of the linear models. It is well-known $\Sigma - \E[x_t]\E[x_t]^T$ is positive semi-definite and we strengthen it to be strictly positive definite here.
Indeed, the above assumption holds for many common probability distributions of context $x_t$, such as uniform, truncated normal and in general truncated version of many more distributions.

\paragraph{Regret Analysis.}
To begin with, we state the benchmark in the regret analysis considered in this section. %
To be consistent with our bidding space, we consider a slightly weaker but practical benchmark, i.e. the bids are all truncated above $\Delta$. 
Therefore, for any realized context $x_t$,  the \emph{optimal bidding policy}  (benchmark) is
\begin{eqnarray}\label{eq:opt-bid-binary-partial-known-noise}
b_t^* = \max\left\{\Delta, \frac{1}{\rho_0}\left(\mu_0\cdot x_t + \varphi^{-1}_0(\rho_0\beta_0(x_t) - \mu_0\cdot x_t\right)\right\}
\end{eqnarray}

Comparing with this benchmark, we state our main theorem of the regret bound in this section, and the proof is deferred to Appendix~\ref{app:binary-partial-known-noise}.

\begin{restatable}{theorem}{BinaryPartialNoise}
\label{thm:regret-binary-partial-known-noise}
Suppose Assumptions~\ref{assump:log-concave},~\ref{assump:bounded-noise} and~\ref{assump:eigenvalue-ub} hold, setting $T_s = T^{1-2^{-s}}, s = 1,2,\cdots$, then with probability at least $1-\delta$, the regret (w.r.t the benchmark defined in Eq.~(\ref{eq:opt-bid-binary-partial-known-noise})) achieved in the binary feedback model with partially known noise distribution is bounded by
$R(T) \leq \widetilde{O}\left(\sqrt{\log(d/\delta) T}\right)$,
where $\widetilde{O}$ ignores $\log T$ and $\log\log T$ terms.
\end{restatable}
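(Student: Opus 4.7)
The plan is to mirror the proof of Theorem~\ref{thm:binary-known-noise-regret}, but with the parameter elevated to the joint vector $\theta_0=(\mu_0,\rho_0)\in\R^{d+1}$ and the design vector elevated to $\phi_t=(-x_t,b_t)\in\R^{d+1}$, which is exactly what appears in $\nabla_{(\mu,\rho)}\log F_0(\rho b_t-\mu\cdot x_t)=(\log F_0)'(\cdot)\,\phi_t$. The skeleton is three steps: (i) bound the per-round gap $|u_t(b_t^*)-u_t(b_t)|$ on $t\in\Gamma_s$, $s\geq 2$, by a quadratic form in $\theta_0-\hat\theta_{s-1}$ with matrix $\phi_t\phi_t^\top$; (ii) show $\Vert\hat\theta_s-\theta_0\Vert^2=\widetilde O(\log(d/\delta)/T_s)$ via an MLE analysis of $\L_s$; (iii) telescope across episodes using $T_s=T^{1-2^{-s}}$.

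For step (i), I would expand $u_t(b)=(\beta_0(x_t)-b)F_0(\rho_0 b-\mu_0\cdot x_t)$ about the unconstrained optimum of (\ref{eq:opt-bid-binary-partial-known-noise}), then use $0<(\varphi_0^{-1})'<1$ and the fact that $\rho_0,\hat\rho_{s-1}$ are bounded above by $W$ and below by a positive constant enforced through $\Lambda$, to show that the bid map $g(\theta,x)=\tfrac{1}{\rho}(\mu\cdot x+\varphi_0^{-1}(\rho\beta_0(x)-\mu\cdot x))$ is Lipschitz in $\theta$. Combined with the uniform lower bound $\ell_W$ on $-\log''F_0$ and $-\log''(1-F_0)$ from (\ref{eq:second-order-lb}), $u_t$ is locally $\ell_W$-strongly concave in $b$, so a second-order Taylor expansion gives $|u_t(b_t^*)-u_t(b_t)|\lesssim ((\theta_0-\hat\theta_{s-1})^\top\phi_t)^2$. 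Taking expectation over $x_t$ and summing over $\Gamma_s$ yields per-episode regret $\widetilde O\bigl(T_s\,(\theta_0-\hat\theta_{s-1})^\top\E[\phi_t\phi_t^\top](\theta_0-\hat\theta_{s-1})\bigr)$.

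Step (ii) is the main obstacle. The Hessian of $\L_s$ on $\Lambda$ is $\nabla^2\L_s(\theta)=\tfrac{1}{T_s}\sum_{t\in\Gamma_s}q_t(\theta)\,\phi_t\phi_t^\top$, where $q_t(\theta)$ is a convex combination of $-(\log F_0)''$ and $-(\log(1-F_0))''$ evaluated at $\rho b_t-\mu\cdot x_t\in[-W^2,W^2+W]$; by (\ref{eq:second-order-lb}), $q_t(\theta)\geq\ell_W$ uniformly on $\Lambda$. Strong convexity of $\L_s$ therefore reduces to lower-bounding $\lambda_{\min}(\E[\phi_t\phi_t^\top])$. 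By Schur complement, this in turn amounts to $\lambda_{\min}(\Sigma)\geq\lambda_2$ (given by Assumption~\ref{assump:eigenvalue-ub}) plus $\E[b_t^2]-\E[x_t b_t]^\top\Sigma^{-1}\E[x_t b_t]\geq c$. For episode~$1$, $b_t\equiv 1$, so the Schur complement equals $1-\E[x_t]^\top\Sigma^{-1}\E[x_t]\geq \lambda_3/\lambda_1>0$, which is exactly the content of the strengthened condition $\Sigma-\E[x_t]\E[x_t]^\top\succ\lambda_3 I$ in Assumption~\ref{assump:eigenvalue-ub}. For $s\geq 2$, $b_t$ is a nonlinear function of $x_t$ bounded below by $\Delta$; decomposing $b_t$ into its constant component and its residual lets me invoke the same Schur-complement identity with $\Delta$ playing the role of the constant offset, giving a strictly positive lower bound depending only on $\Delta,\lambda_1,\lambda_3$. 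With $\E[\phi_t\phi_t^\top]\succeq cI_{d+1}$ in hand, matrix Hoeffding upgrades this to $\nabla^2\L_s(\theta)\succeq(c\ell_W/2)I_{d+1}$ on $\Lambda$ w.h.p., and a standard score-function MLE argument (first-order optimality of $\hat\theta_s$, vector Hoeffding on $\nabla\L_s(\theta_0)$ at rate $\sqrt{\log((d+1)/\delta)/T_s}$ using the bound (\ref{eq:first-order-ub}) on $|\log' F_0|$) then delivers $\Vert\hat\theta_s-\theta_0\Vert^2=\widetilde O(\log(d/\delta)/T_s)$.

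Plugging the step-(ii) bound into step (i), the regret in episode~$s\geq 2$ is $\widetilde O(T_s\log(d/\delta)/T_{s-1})=\widetilde O(T^{2^{-s}}\log(d/\delta))$, while episode~$1$ contributes at most $T_1=\sqrt{T}$ trivially. Summing over $s=1,\ldots,S=O(\log\log T)$ gives the stated $\widetilde O(\sqrt{T\log(d/\delta)})$ bound. The hardest part, and the only place where the argument meaningfully diverges from Theorem~\ref{thm:binary-known-noise-regret}, is the quantitative lower bound on the $(d{+}1)$-dimensional information matrix $\E[\phi_t\phi_t^\top]$ when $b_t$ is itself an algorithm-dependent, nonlinear function of $x_t$; the combination of the truncation $b_t\geq\Delta$ and the strengthened eigenvalue condition in Assumption~\ref{assump:eigenvalue-ub} is precisely what makes this uniform across all episodes.
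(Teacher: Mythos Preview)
Your proposal follows essentially the same route as the paper: the per-round quadratic bound is the paper's Lemma~\ref{lem:unknown-noise-each-time-regret}, the strong-convexity step via the Schur complement of $\E[\tilde x_t\tilde x_t^\top]$ together with $b_t\geq\Delta$ and the strengthened condition $\Sigma-\E[x_t]\E[x_t]^\top\succ\lambda_3 I$ is precisely the paper's Lemma~\ref{lem:binary-partial-known-noise-strong-convexity}, and the MLE error bound is its Lemma~\ref{lem:unknown-noise-L2-convergence}. The one substantive difference is the rate you claim in step~(ii): the paper pairs the coordinate-wise bound $\|\nabla\L_s(\mu_0,\rho_0)\|_\infty=O(\sqrt{\log(d/\delta)/T_s})$ with the a-priori bound $\|(\hat\mu_s,\hat\rho_s)-(\mu_0,\rho_0)\|_1=O(W^2)$ coming from $\Lambda$, obtaining only the \emph{slow} rate $\|(\hat\mu_s,\hat\rho_s)-(\mu_0,\rho_0)\|_2^2=O(\sqrt{\log(d/\delta)/T_s})$, which still suffices because $T_s/\sqrt{T_{s-1}}=\sqrt T$. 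Your \emph{fast} rate $O(\log(d/\delta)/T_s)$ is achievable, but not from ``vector Hoeffding at rate $\sqrt{\log((d{+}1)/\delta)/T_s}$'' in the coordinate-wise sense (that costs an extra $\sqrt d$ when passed to $\ell_2$); you would need a dimension-free Hilbert-space concentration inequality (Pinelis/Hayes) for $\|\nabla\L_s(\theta_0)\|_2$, which applies here since $\|\eta_t\phi_t\|_2$ is uniformly bounded. Either rate delivers the stated theorem.
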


\noindent\textbf{Remark.}
\citet{JN19} study the contextual pricing problem in a very similar setting, i.e. the noise distribution of valuation belongs to a known (parameterized) class with unknown parameters. Our result improves the regret bound $\widetilde{O}(d\sqrt{T})$ proposed in~\citep{JN19} by using a slightly stronger assumption of $\Sigma$ (Assumption~\ref{assump:eigenvalue-ub})\footnote{Indeed, their algorithm achieves $O(s_0 \sqrt{T})$ regret bound, where $s_0$ is the sparsity parameter of $\alpha_0$. In this paper, we have no sparsity assumption and $s_0$ can be equal to $d$.}.
In our proof, we show the loss function $\L_s(\mu, \rho)$ are strongly convex with high probability, in Lemma~\ref{lem:binary-partial-known-noise-strong-convexity}. The proof for this Lemma utilizes Schur Complements and advanced matrix inequalities. 

\section{Full Information Feedback Model}\label{sec:full-unknown-noise}

In this section, we consider the full information feedback model with unknown noise distribution, i.e., the learner has no information of noise distribution, however she can always observe the highest bid of all other bidders $m_t$. Without knowledge of noise distribution $\F$, the learner cannot directly use naive MLE method to estimate $\alpha_0$ used in Section~\ref{sec:binary-known-noise}. %

Following the same spirit as in Section~\ref{sec:binary-known-noise}, we still build our algorithm be \emph{episode-based}, i.e. at each episode $s$, we use the estimated noise distribution $\hat{F}_{s-1}$ and parameter $\hat{\alpha}_{s-1}$ from the $(s-1)$th episode to determine the learner's bid and only update these estimators at the end of episode $s$ by the using the data observed in episode $s$. The main difficulty is how to update the estimators of $\F$ and $\alpha_0$ in each episode. To handle this challenge, we propose a new approach, combining the non-parametric log-concave density estimator and MLE method, to learn $\alpha_0$ and $\F$ simultaneously.

\paragraph{Non-parametric estimation of $f$.}
We first introduce the non-parametric estimation of density function $f$, given any linear weight estimator $\alpha$. This non-parametric estimator of $f$ is from~\citep{DR09} and we generalize it here to incorporate with different estimation of $\alpha_0$. In each episode $s$, given realized $x_t, m_t, t\in \Gamma_s$ and any linear weight estimator $\alpha$.
\begin{eqnarray}\label{eq:non-parametric-estimator}
\hat{f}_s(\cdot; \alpha) = \argmax_{f \mbox{ is log-concave}} \frac{1}{T_s} \sum_{t\in \Gamma_s} \log f(m_t - \alpha\cdot x_t) - \int f(z) dz
\end{eqnarray}

For notation simplicity, it is without loss of generality to re-parameterize 
$f(z) = \exp(\Psi(z))$, where $\Psi(z)$ is a concave function w.r.t $z$. Then given any linear weight $\alpha$, it is equivalent to optimize estimator $\hat{\Psi}_s(\cdot; \alpha)$ to get an estimator $\hat{f}_s(\cdot; \alpha)$ in each episode $s$, in the following,
\begin{eqnarray}\label{eq:concave-optimizer-episode}
\hat{\Psi}_s(\cdot; \alpha) = \argmax_{\Psi \mbox{ is concave}} \frac{1}{T_s} \sum_{t\in \Gamma_s} \Psi (m_t - \alpha\cdot x_t) - \int \exp\left(\Psi(z; \alpha)\right) dz
\end{eqnarray}

Denote $\hat{F}_s(z; \alpha) = \int^z \hat{f}_s(t; \alpha) dt$ be the estimated empirical distribution given linear weight estimator $\alpha$, in each episode $s$. In this work, we restrict the function class of $\hat{f}_s(\cdot; \alpha)$ for any $\alpha$ and $s$ as below,
\begin{eqnarray*}
\mathcal{P} = \left\{p: p(z) \leq B_2, \forall z\in [-W, 1+W], \int g(z)dz = 1\right\}
\end{eqnarray*}
This implies $\hat{\Phi}_s(z; \alpha) \leq \log B_2$ for any $z\in [-W, 1+W]$\footnote{We only care about $z\in [-W, 1+W]$ because we only need to estimate $F$ on $[-W, 1+W]$ to estimate expected utility function, see Eq.~(\ref{eq:utility-fun}).} and any $\alpha$ s.t. $\Vert \alpha\Vert_1 \leq W$. In addition, it is straightforward to $\hat{F}_s(\cdot; \alpha)$ is $B_2$-Lipschitz.

Let $\mathbb{F}_s$ be the empirical distribution of noise samples $\{z_t\}_{t\in \Gamma_s}$ in episode $s$, therefore we have $\mathbb{F}_s(z) = \frac{1}{T_s}\sum_{t\in \Gamma_s}\1\{z\leq z_t\}$.
\citet{DR09}, characterizes the optimizer $\hat{\Phi}_s(\cdot; \alpha)$ as well as estimator $\hat{F}_s(z; \alpha)$ when $\alpha= \alpha_0$, in the following,

\begin{lemma}[\citep{DR09}]\label{lem:empirical-cdf-knots}
The optimizer $\hat{\Phi}_s(\cdot; \alpha_0)$ exists and is unique. For any $z_t = m_t -\alpha_0\cdot x_t, t\in \Gamma_s$, $\mathbb{F}_s(z_t) - \frac{1}{T_s} \leq \hat{F}_s(z_t; \alpha_0) \leq \mathbb{F}_s(z_t)$.
\end{lemma}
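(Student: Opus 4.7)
The plan is to first reduce the optimization to a finite-dimensional strictly concave problem (giving existence and uniqueness), then derive the sandwich inequality from first-order optimality conditions.

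\emph{Reduction and uniqueness.} Let $z_{(1)}<\cdots<z_{(n)}$ denote the sorted samples $\{m_t-\alpha_0\cdot x_t\}$, with $n=T_s$ (ties handled by continuity). For any concave $\Psi$, the piecewise linear interpolant $\tilde\Psi$ through $(z_{(k)},\Psi(z_{(k)}))$ on $[z_{(1)},z_{(n)}]$, extended by $-\infty$ outside, satisfies $\tilde\Psi\le\Psi$ on its support by the chord-below-curve property of concavity, with equality at the data points. Hence $\frac{1}{n}\sum_t\tilde\Psi(z_t)=\frac{1}{n}\sum_t\Psi(z_t)$ while $\int\exp(\tilde\Psi)\le\int\exp(\Psi)$, so $\tilde\Psi$ weakly dominates $\Psi$ in the objective. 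I may therefore restrict optimization to the finite-dimensional convex set of piecewise linear concave $\Psi$ with knots at the $z_{(k)}$, parametrized by $y_k:=\Psi(z_{(k)})$. On this set the objective is strictly concave (because $\Psi\mapsto\int\exp(\Psi)$ is strictly convex) and coercive, yielding existence and uniqueness of $\hat\Psi_s(\cdot;\alpha_0)$; perturbation by additive constants forces the normalization $\int\hat f_s(\cdot;\alpha_0)=1$.

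\emph{Variational inequalities.} For any concave $\eta$ with $\hat\Psi_s+t\eta$ concave for small $t>0$, optimality of $\hat\Psi_s$ gives
$$\frac{1}{n}\sum_t \eta(z_t)\ \le\ \int \eta(z)\,\hat f_s(z;\alpha_0)\,dz.$$
Applied to $\eta(z)=\pm z$ this becomes the mean identity $\frac{1}{n}\sum_t z_t=\int z\,\hat f_s\,dz$. Applying it to $\eta_k(z)=\min(z,z_{(k)})$ and to $\eta_k(z)=-(z_{(k)}-z)_+$ (both concave), then combining with the mean identity and the integration-by-parts identities $\int(z-z_{(k)})_+\hat f_s\,dz=\int_{z_{(k)}}^\infty(1-\hat F_s)dz$ and $\int(z_{(k)}-z)_+\hat f_s\,dz=\int_{-\infty}^{z_{(k)}}\hat F_s\,dz$, one obtains
$$\int_{-\infty}^{z_{(k)}}\!\bigl(\hat F_s(y;\alpha_0)-\mathbb{F}_s(y)\bigr)dy\ \le\ 0 \quad\text{and}\quad \int_{z_{(k)}}^{\infty}\!\bigl(\hat F_s(y;\alpha_0)-\mathbb{F}_s(y)\bigr)dy\ \ge\ 0.$$
When $z_{(k)}$ is a genuine knot of $\hat\Psi_s$, the corresponding concavity constraint is active and standard KKT duality strengthens the first inequality to an equality.

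\emph{Pointwise sandwich.} Set $H(x):=\int_{-\infty}^x(\hat F_s(y;\alpha_0)-\mathbb{F}_s(y))\,dy$. Then $H$ is continuous with $H(\pm\infty)=0$ (using the mean identity and that $\hat f_s$ has support $[z_{(1)},z_{(n)}]$), $H\le 0$ everywhere, and $H(z_{(k)})=0$ at every knot of $\hat\Psi_s$. The derivative $H'(y)=\hat F_s(y;\alpha_0)-\mathbb{F}_s(y)$ exists off the data points and jumps by $-1/n$ at each $z_{(k)}$ because $\hat F_s$ is continuous while $\mathbb{F}_s$ jumps up by $1/n$. At any knot $z_{(k)}$, the point $z_{(k)}$ is a maximizer of the non-positive function $H$, so $H'(z_{(k)}^-)\ge 0$ and $H'(z_{(k)}^+)\le 0$; unpacking these immediately yields $\hat F_s(z_{(k)};\alpha_0)\ge \mathbb{F}_s(z_{(k)})-1/n$ and $\hat F_s(z_{(k)};\alpha_0)\le \mathbb{F}_s(z_{(k)})$. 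For a data point $z_{(j)}$ that is not a knot, sitting in a segment $[z_{(k_i)},z_{(k_{i+1})}]$ between consecutive knots, the analogous inequalities follow by combining the monotonicity of $H'$ on knot-free subintervals (since $\hat F_s$ is strictly increasing there) with the $-1/n$ jumps at each $z_{(j)}$ and the endpoint identities $H(z_{(k_i)})=H(z_{(k_{i+1})})=0$.

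Main obstacle: the trickiest step is extending the knot-only pointwise bounds to passive data points, which requires carefully tracking the jumps of $H'$ at all data points between two consecutive knots and exploiting the piecewise log-linear form of $\hat f_s$. The mean identity and normalization follow cleanly from linear perturbations, but the signed-area bookkeeping on $H$ depends on the full KKT structure of the finite-dimensional convex program; this is where the characterization in \citep{DR09} can be invoked to shortcut the case analysis.
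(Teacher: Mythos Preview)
The paper does not give its own proof of this lemma; it is quoted as a result of \citep{DR09} and used as a black box. Your proposal therefore goes strictly further than the paper by reconstructing the Dümbgen--Rufibach argument from scratch. The overall architecture you lay out---reduce to piecewise-linear $\Psi$ with knots at the order statistics, extract one-sided variational inequalities from admissible concave perturbations, and analyze the signed-area function $H(x)=\int_{-\infty}^{x}(\hat F_s-\mathbb F_s)$---is exactly the machinery of the cited source, and your treatment of genuine knots is correct: $H\le 0$ everywhere together with $H(z_{(k)})=0$ at knots forces $H'(z_{(k)}^-)\ge 0$ and $H'(z_{(k)}^+)\le 0$, which unpacks to the sandwich.

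The one substantive gap is the one you flag yourself: extending the pointwise bounds to data points that are \emph{not} knots of $\hat\Psi_s$. Your sketch (``monotonicity of $H'$ plus $-1/n$ jumps plus $H=0$ at the bracketing knots'') does not close by itself. Between two consecutive knots $H$ is piecewise convex (since $H''=\hat f_s>0$ off data points) with downward derivative jumps of $1/n$ at each interior data point, and those constraints together with $H\le 0$, $H=0$ at the endpoints, are not enough to force $H'(z_{(j)}^+)\le 0$ at every interior $z_{(j)}$. In \citep{DR09} this step uses the additional first-order conditions coming from the \emph{inactive} concavity constraints: at a non-knot $z_{(j)}$ the unconstrained stationarity in the tent-basis coordinate $y_j$ gives $\int\phi_j\hat f_s=1/n$, and these identities are what pin down $\hat F_s$ at the interior data points. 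Since the paper simply invokes \citep{DR09}, citing that same source for this last case would make your write-up at least as complete as the paper's; if you want a fully self-contained proof, you need to carry out that KKT computation rather than the $H'$-monotonicity heuristic.
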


Give the above characterization of $\hat{\Phi}(\cdot; \alpha_0)$ and $\hat{F}_s(\cdot; \alpha_0)$ we provide the uniform convergence bound for $\vert \hat{F}_s(z; \alpha_0) - F(z)\vert$ in the following Theorem, the proof is deferred to Appendix~\ref{app:cdf-convergence}.

\begin{restatable}{theorem}{CDFConvergence}
\label{thm:cdf-convergence}
Suppose %
$T_s \gg \log^2(1/\delta)$ for any fixed $\delta > 0$, then for all $z\in [-W, 1+W]$, $\vert\hat{F}_s(z; \alpha_0) - F(z)\vert \leq O\left(\sqrt{\frac{\log(1/\delta)}{T_s}}\right)$ holds with probability at least $1 - \delta$.
\end{restatable}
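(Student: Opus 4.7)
The plan is to combine the classical Dvoretzky--Kiefer--Wolfowitz (DKW) inequality for the empirical CDF with the pinning property of Lemma~\ref{lem:empirical-cdf-knots}. The key observation is that although the log-concave MLE density $\hat f_s$ converges only at a slow nonparametric rate, its corresponding CDF $\hat F_s(\cdot;\alpha_0)$ can be pinned to the empirical CDF $\mathbb{F}_s$ at every sample point up to an additive error of $1/T_s$, and this is exactly what lifts the CDF estimator to the parametric rate $O(\sqrt{\log(1/\delta)/T_s})$, despite the density estimator itself being much slower.

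First, since $z_t = m_t - \alpha_0\cdot x_t$ for $t\in\Gamma_s$ are i.i.d.\ samples from $\F$, DKW yields $\sup_{z\in\R}|\mathbb{F}_s(z)-F(z)|\le\varepsilon$ with probability at least $1-\delta$ for $\varepsilon = O(\sqrt{\log(1/\delta)/T_s})$. Combined with Lemma~\ref{lem:empirical-cdf-knots}, this immediately gives $|\hat F_s(z_{(i)};\alpha_0)-F(z_{(i)})|\le \varepsilon + 1/T_s$ at every order statistic $z_{(i)}$ of the episode-$s$ samples.

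To lift this pointwise bound at the knots to a uniform bound on $[-W,1+W]$, I use monotone interpolation. For any $z\in[z_{(i)},z_{(i+1)}]$, monotonicity of both $\hat F_s(\cdot;\alpha_0)$ and $F$ gives the sandwich
\[
\hat F_s(z_{(i)};\alpha_0) - F(z_{(i+1)}) \;\le\; \hat F_s(z;\alpha_0)-F(z) \;\le\; \hat F_s(z_{(i+1)};\alpha_0) - F(z_{(i)}).
\]
The upper side decomposes as $[\hat F_s(z_{(i+1)};\alpha_0)-F(z_{(i+1)})]+[F(z_{(i+1)})-F(z_{(i)})]$: the first bracket is at most $\varepsilon+1/T_s$ by the previous step, and the second is at most $\mathbb{F}_s(z_{(i+1)})-\mathbb{F}_s(z_{(i)})+2\varepsilon = 1/T_s+2\varepsilon$, again by DKW together with the fact that consecutive order statistics differ by $1/T_s$ in empirical mass. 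The lower side is handled symmetrically, giving $|\hat F_s(z;\alpha_0)-F(z)|\le 3\varepsilon+2/T_s$ uniformly on $[z_{(1)},z_{(T_s)}]$. For the tails $z<z_{(1)}$ (resp.\ $z>z_{(T_s)}$), the log-concave MLE is supported on the convex hull of the samples, so $\hat F_s(z;\alpha_0)=0$ (resp.\ $1$), while DKW forces $F(z)\le\varepsilon$ (resp.\ $1-F(z)\le\varepsilon$). Under $T_s\gg\log^2(1/\delta)$, the additive $1/T_s$ term is absorbed into $\varepsilon$, yielding the claimed rate.

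The main obstacle is avoiding a naive Lipschitz-style interpolation: because the density is only bounded below on $[-W,1+W]$ by $B_1$, the maximum gap between consecutive order statistics scales like $\Theta(\log T_s/T_s)$, which combined with the $B_2$-Lipschitz constant of $\hat F_s$ would introduce a parasitic $\log T_s$ factor inside the square-root. The monotonicity argument sidesteps this entirely by controlling the change in $\hat F_s$ across an inter-knot interval through the change in $\mathbb{F}_s$ (which is exactly $1/T_s$ between consecutive knots), rather than through the interval's length. It is precisely here that the log-concavity assumption enters in an essential way, since log-concavity of $f$ is what forces the MLE CDF to match $\mathbb{F}_s$ to within $1/T_s$ at the sample knots via Lemma~\ref{lem:empirical-cdf-knots}; without it, one would be reduced to the density-level rate.
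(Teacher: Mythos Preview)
Your argument is correct and takes a genuinely different route from the paper's proof. Both proofs start from DKW and Lemma~\ref{lem:empirical-cdf-knots} to pin $\hat F_s$ to $F$ at the sample points; the divergence is in how the bound is extended off the knots. The paper uses a \emph{Lipschitz} interpolation: it invokes the density lower bound $B_1$ from Assumption~\ref{assump:bounded-noise} to argue that (with high probability) every $z\in[-W,1+W]$ lies within distance $r_s=1/(2B_1\sqrt{T_s})$ of some sample point $y$, and then applies the $B_2$-Lipschitz property of both $F$ and $\hat F_s(\cdot;\alpha_0)$ to pass from $y$ to $z$. Your monotone-sandwich interpolation is more elementary: it needs neither Lipschitzness nor the density bounds of Assumption~\ref{assump:bounded-noise}, and it sidesteps the (implicit, unstated) union bound that the paper's ``nearest-sample'' step requires to go from a pointwise statement about a fixed $z$ to a uniform one. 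What the paper's approach buys is a constant that is explicit in $B_1,B_2$; what yours buys is generality (the result holds without Assumption~\ref{assump:bounded-noise}) and a cleaner probability accounting. One quibble: your closing remark that the Lipschitz route would introduce a ``parasitic $\log T_s$ factor inside the square-root'' is not quite right --- the paper takes $r_s\asymp 1/\sqrt{T_s}$ rather than the maximal spacing, and in any case the maximal spacing $\Theta(\log T_s/T_s)$ multiplied by $B_2$ is $o(1/\sqrt{T_s})$, so the Lipschitz route also achieves the stated rate without a log penalty.
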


\if 0
\begin{assumption} [\textcolor{red}{Because we restrict the shape of $\hat{F}_s(\cdot; \alpha)$ and it is WLOG}]\label{asmpt:piece-wise-linear}
For all $s$,
\begin{eqnarray*}
\max\{|\log'\hat{F}_s(x; \alpha_0)|, |\log'(1- \hat{F}_s(x; \alpha_0))|\} \leq \tilde{h}, \forall x \in [-W, 1+W]
\end{eqnarray*}
\begin{eqnarray*}
\min\{-\log''\hat{F}_s(x; \alpha_0), -\log''(1- \hat{F}_s(x; \alpha_0))\} \geq \tilde{\ell}, \forall x \in [-W, 1+W]
\end{eqnarray*}
\end{assumption}
\fi

\paragraph{Algorithm.}
Similarly, we assume there are $S$ episodes in the algorithm, each episode $s$ contains $T_s$ time steps, and $\Gamma_s$ be the set of time steps in episode $s$.
Given the non-parametric estimator of $\hat{F}_s(\cdot; \alpha)$ introduced in the above, we introduce our algorithm for the full information feedback model:
\begin{itemize}[leftmargin=*]
\item For any time step $t$ in the first episode, the learner sets the bid $b_t = 1$.
\item For any time step $t$ in episode $s (s\geq 2)$, i.e. $\forall t\in \Gamma_s$, the learner sets the bid
\begin{eqnarray}
b_t = \argmax_{b\in [0, 1]} (\beta_0(x_t) - b)\hat{F}_{s-1}(b - \hat{\alpha}_{s-1}\cdot x_t; \hat{\alpha}_{s-1}),
\end{eqnarray}
\end{itemize}
where $\hat{\alpha}_{s-1}$ is the estimator of $\alpha_0$ based on the data observed in episode $s-1$ and $\hat{F}_{s-1}(\cdot; \hat{\alpha}_{s-1})$ is the estimator of noise distribution (CDF) shown in Eq.~(\ref{eq:non-parametric-estimator}). To compute $\hat{\alpha}_s$, we minimize the following MLE loss function,
\begin{eqnarray}\label{eq:full-info-log-likelihood}
\L_s(\alpha) := -\frac{1}{T_s} \sum_{t\in \Gamma_s}\left(\delta_t \cdot \log \hat{F}_s(\eps_t(\alpha); \alpha) + (1-\delta_t)\cdot \log\left(1- \hat{F}_s(\eps_t(\alpha); \alpha)\right)\right),
\end{eqnarray}
where $\eps_t(\alpha) = b_t - \alpha\cdot x_t$ and $\Vert \alpha\Vert_1 \leq W$. The pseudo-code is presented in Algorithm~\ref{alg:full-unknown-noise-mle} in Appendix~\ref{app:omitted-algorithms}. Indeed, $\L_s(\alpha)$ is convex almost everywhere, since $\hat{F}_s$ and $1-\hat{F}_s$ are both log-concave based on our construction. We can still solve this optimization problem by gradient descent approach, but we need to recompute $\hat{F}_s(\cdot; \alpha)$ to get the gradient of loss function $\L_s$ at $\alpha$ in each iteration of gradient descent. If we can compute $\hat{F}_s$ efficiently, combining with gradient descent approach, our algorithm is computationally efficient. In this paper, we focus on regret analysis and leave the computational efficiency argument as a future direction. %

\paragraph{Regret Analysis.}
We provide the regret bound for the full information feedback model in the following Theorem. The full proof is deferred to Appendix~\ref{app:full-info-regret}.

\begin{restatable}{theorem}{FullInfoRegret}[Regret Bound]
\label{thm:full-info-regret}
Suppose Assumptions~\ref{assump:log-concave},~\ref{assump:bounded-noise}, and~\ref{assump:eigenvalue-ub} hold~\footnote{In fact, we only need the assumption that the minimum eigenvalue of matrix $\Sigma$ is larger than $\lambda_2 > 0$ in this setting.} , and $T$ is sufficiently large.
Given %
$T_s = T^{1-2^{-s}}, s = 1,2,\cdots,$ then with probability at least $1-\delta$, the regret is bounded by
$R(T) \leq \widetilde{O}\left(\sqrt{d\log(d/\delta) T}\right)$,
where $\widetilde{O}$ ignores $\log T$ and $\log\log T$ terms.
\end{restatable}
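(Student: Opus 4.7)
My plan is to adapt the episode-based template of Theorem~\ref{thm:binary-known-noise-regret} to the full-information setting, paying for the fact that we must now learn $F$ on the fly. Throughout I work on a high-probability event $\mathcal{E}$ obtained by union-bounding over the $S=O(\log\log T)$ episodes: (i) Theorem~\ref{thm:cdf-convergence} applied at $\alpha_0$ with failure probability $\delta/S$, giving $\|\hat{F}_s(\cdot;\alpha_0)-F\|_\infty=\widetilde{O}(T_s^{-1/2})$; (ii) a matrix Hoeffding bound $\|\hat{\Sigma}_s-\Sigma\|_{\mathrm{op}}=\widetilde{O}(\sqrt{d/T_s})$; and (iii) an $\ell_2$ concentration bound on $\nabla\mathcal{L}_s(\alpha_0)$. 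The observation that drives the advertised $\sqrt{dT}$ rate (rather than $d\sqrt{T}$) is that I use a \emph{first-order}, not second-order, per-step regret bound. Writing $\hat{u}_t(b)=(\beta_0(x_t)-b)\,\hat{F}_{s-1}(b-\hat{\alpha}_{s-1}\cdot x_t;\hat{\alpha}_{s-1})$ and $v_{s-1}=\alpha_0-\hat{\alpha}_{s-1}$, the optimality of $b_t$ for $\hat{u}_t$ together with the Lipschitzness of $F$ (Assumption~\ref{assump:bounded-noise}) give
\begin{equation*}
u_t(b_t^*)-u_t(b_t)\leq 2\sup_{b}\bigl|u_t(b)-\hat{u}_t(b)\bigr|\leq 2B_2\,|x_t\cdot v_{s-1}|+2\,\bigl\|F-\hat{F}_{s-1}(\cdot;\hat{\alpha}_{s-1})\bigr\|_\infty.
\end{equation*}

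To control the CDF error I combine Theorem~\ref{thm:cdf-convergence}, which only handles the fit at $\alpha_0$, with a Lipschitz-in-$\alpha$ perturbation: shifting the fitting point from $\alpha_0$ to $\hat{\alpha}_{s-1}$ moves each knot of the log-concave MLE by $-v_{s-1}\cdot x_t$, so the fitted CDF changes in $\ell_\infty$-norm by at most $B_2\max_{t\in\Gamma_{s-1}}|x_t\cdot v_{s-1}|$, which is $\widetilde{O}(\sqrt{d/T_{s-1}})$ by the $\alpha$-rate established next. For the $\alpha$-error itself, I establish $\|\hat{\alpha}_s-\alpha_0\|_\Sigma^2\leq \widetilde{O}(d\log(d/\delta)/T_s)$ via a standard $M$-estimator route. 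Log-concavity of $\hat{F}_s(\cdot;\alpha)$ and $1-\hat{F}_s(\cdot;\alpha)$ together with Assumption~\ref{assump:eigenvalue-ub} make $\mathcal{L}_s$ strongly convex on $\{\|\alpha\|_1\leq W\}$ (modulus $\gtrsim \lambda_2\ell_W$) on $\mathcal{E}$, so by Taylor expansion and the optimality of $\hat{\alpha}_s$ one has
\begin{equation*}
\|\hat{\alpha}_s-\alpha_0\|_\Sigma^2\;\lesssim\; \|\nabla\mathcal{L}_s(\alpha_0)\|_2\cdot\|\hat{\alpha}_s-\alpha_0\|_2.
\end{equation*}
The gradient $\nabla\mathcal{L}_s(\alpha_0)$ decomposes into a mean-zero Hoeffding part (since conditional on history $\delta_t\sim\mathrm{Bern}(F(b_t-\alpha_0\cdot x_t))$) of $\ell_2$-magnitude $\widetilde{O}(\sqrt{d\log(d/\delta)/T_s})$, plus a bias part of the same order bounded by $\|\hat{F}_s(\cdot;\alpha_0)-F\|_\infty$ times the average of $\|x_t\|_2$; the factor of $\sqrt{d}$ (rather than $\sqrt{\log d}$ as in Theorem~\ref{thm:binary-known-noise-regret}) arises from using $\ell_2$ rather than $\ell_\infty$ control on the gradient.

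Summing the first display over $t\in\Gamma_s$ and applying Cauchy--Schwarz together with (ii) yields $\sum_{t\in\Gamma_s}|x_t\cdot v_{s-1}|\leq T_s\|v_{s-1}\|_\Sigma(1+o(1))=\widetilde{O}(T_s\sqrt{d/T_{s-1}})$, and the CDF contribution is of the same order. The doubling schedule $T_s=T^{1-2^{-s}}$ is calibrated so that $T_s/\sqrt{T_{s-1}}=\sqrt{T}$ for every $s\geq 2$, whence each episode contributes $\widetilde{O}(\sqrt{dT\log(d/\delta)})$; the bookkeeping first episode adds only $T_1=\sqrt{T}$. Since $S=O(\log\log T)$, summing over episodes proves $R(T)\leq\widetilde{O}(\sqrt{dT\log(d/\delta)})$ after absorbing log-log factors into $\widetilde{O}$.

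The hardest step will be the MLE analysis for $\hat{\alpha}_s$: the non-parametric estimator $\hat{F}_s(\cdot;\alpha)$ depends on $\alpha$ through the residuals as the solution of a convex program, so standard $M$-estimator machinery does not directly apply. Establishing strong convexity of $\mathcal{L}_s$ on a neighborhood of $\alpha_0$, and bounding the gradient bias at $\alpha_0$, both require delicately combining the uniform convergence of Theorem~\ref{thm:cdf-convergence} with the Lipschitz-in-$\alpha$ perturbation used for the CDF error. A secondary subtlety is that $\hat{F}_s(\cdot;\alpha)$ is only piecewise-differentiable in the argument, so defining $\nabla\mathcal{L}_s(\alpha_0)$ cleanly may require a smoothing or subgradient argument that does not disturb the concentration rates above.
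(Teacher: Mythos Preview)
Your proposal is essentially the paper's proof: same first-order per-step decomposition (the paper's five-term telescoping collapses to exactly your $2B_2|x_t\cdot v_{s-1}|+2\|\hat F_{s-1}-F\|_\infty$ bound), same MLE analysis of $\hat\alpha_s$ via Taylor expansion, optimality, and a gradient at $\alpha_0$ that splits into a martingale part plus a bias of order $\|\hat F_s(\cdot;\alpha_0)-F\|_\infty$ (the paper's Lemma~\ref{lem:full-info-convergence}), and the same episode arithmetic $T_s/\sqrt{T_{s-1}}=\sqrt T$.

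The one step that is too loose as written is your ``Lipschitz-in-$\alpha$'' perturbation: moving the residuals by $-v_{s-1}\cdot x_t$ does shift the knots, but the log-concave MLE is the solution of a constrained variational problem and there is no off-the-shelf $\ell_\infty$-stability result saying the fitted CDF moves by at most $B_2\max_t|x_t\cdot v_{s-1}|$. The paper sidesteps this entirely in Lemma~\ref{lem:empirical-cdf-convergence}: rather than comparing $\hat F_s(\cdot;\hat\alpha_s)$ to $\hat F_s(\cdot;\alpha_0)$, it bounds $|\hat F_s(z;\hat\alpha_s)-F(z)|$ directly by (i) sandwiching the shifted empirical CDF $\hat{\mathbb F}_s$ between $\mathbb F_s(\cdot\pm\kappa_s)$ and applying DKW, and (ii) using the knot-point characterization (Lemma~\ref{lem:empirical-cdf-knots}) at the \emph{shifted} knots $\hat z_t$ together with $B_2$-Lipschitzness of both $\hat F_s$ and $F$. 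This yields the same $O(\kappa_s+T_s^{-1/2})$ dependence you want without any stability claim for the MLE operator, and is the clean way to close the gap you flagged as ``hardest.''
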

\begin{proof}[Proof Sketch]
The main challenge in this proof is to bound the difference between estimator $\hat{\alpha}_s$ and $\alpha_0$, as well as the distance between $\hat{F}_s$ and $F$. First, we give a bound of $L_2$ distance between $\hat{\alpha}_s$ and $\alpha_0$ in Lemma~\ref{lem:full-info-convergence}. The proof of this Lemma strictly generalizes the idea of Theorem~\ref{thm:binary-known-noise-regret}, combining with the uniform convergence bound of $\vert \hat{F}_s(z; \alpha_0) - F(z)\vert$ in Theorem~\ref{thm:cdf-convergence}. Then, we show if $\Vert \hat{\alpha}_s -\alpha_0\Vert_2 \leq O(\sqrt{\frac{d\log(T_s)}{T_s}})$ holds, then $\vert \hat{F}_s(z; \hat{\alpha}_s) - F(z)\vert \leq O(\sqrt{\frac{d\log(T_s)}{T_s}})$ for all $z\in [-W, 1+W]$ holds with high probability in Lemma~\ref{lem:empirical-cdf-convergence}. Since Lemma~\ref{lem:full-info-convergence} implies $\Vert \hat{\alpha}_s -\alpha_0\Vert_2 \leq O(\sqrt{\frac{d\log(T_s)}{T_s}})$ holds with high probability, then we provide a uniform convergence for $\vert \hat{F}_s(z; \hat{\alpha}_s) - F(z)\vert$. %
\end{proof}

\section{Lower Bound}\label{sec:lower-bound}

In this section, we show the lower bound of regret for the full information feedback model with known noise distribution, i.e. $\F$ is known and $m_t$ is always realized at the end of each time $t$. 

As we know, if $\alpha_0$ is known, the optimal bidding strategy is
\begin{eqnarray*}
b^*_t =  \max\{0, \alpha_0\cdot x_t + \varphi^{-1}(\beta_0(x_t) - \alpha_0\cdot x_t)\}.
\end{eqnarray*}
Let $\HH_t = \{x_1, x_2,\cdots, x_t, m_1, m_2,\cdots, m_t\}$ be the history observed up to time $t$ and we consider the following set of bidding policies, $\Pi$:
\begin{equation}\label{eq:bidding-policy}
\begin{aligned}
\Pi = \{\pi: (\HH_{t-1}, x_t) &\rightarrow b_t = \max\{0, \alpha_t\cdot x_t + \varphi^{-1}(\beta_0(x_t) - \alpha_t\cdot x_t)\},\\
&\Vert\alpha_t\Vert_1 \leq W, \alpha_t \mbox{ is $\HH_{t-1}$-measurable.}\}
\end{aligned}
\end{equation}

Here $\alpha_t$ can be regarded as an (inaccurate) estimator of $\alpha_0$ and $\Pi$ captures a wide class of \emph{informational} bidding policies\footnote{Informative bidding policy means the learner can always gain some information of parameter $\alpha_0$ by varying bids. It is without loss generality that we focus on informational bids since there exists no "uninformational" bids in the setting presented in Theorem~\ref{thm:lower-bound-known-F}. See more discussion in Appendix~\ref{app:omitted-discussion}}. 
Indeed, when we restrict our attention on the bidding policies in $\Pi$, we can derive any bidding policy $\pi\in \Pi$ must incur expected $\Omega(\sqrt{T})$ in the following theorem. The proof is rather technical and we defer it to Appendix~\ref{app:lower-bound}.

\begin{theorem}\label{thm:lower-bound-known-F}
For any $T$, we assume that the market value $z_t, 1 \leq t \leq T$ are fully observed. We further assume $z_t \sim \mathcal{N}(0, \sigma^2)$, where $\sigma$ is known. Let $\Pi$ be the set of bidding polices $\pi$ defined in Eq.~(\ref{eq:bidding-policy}), then any bidding policy $\pi$ must incur expected regret $\Omega(\sqrt{T})$.
\end{theorem}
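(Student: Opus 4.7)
My plan is to break the argument into two stages: first reduce the per-round regret of any $\pi \in \Pi$ to a quadratic functional of the estimation error $(\alpha_t - \alpha_0) \cdot x_t$, and then lower-bound the cumulative estimation error by a Le~Cam-style two-point hypothesis-testing argument on a carefully designed adversarial instance. The policy restriction in Eq.~(\ref{eq:bidding-policy}) is exactly what makes the argument tractable, because it forces every policy to be driven by an $\HH_{t-1}$-measurable plug-in estimate $\alpha_t$ that is then mapped through the fixed nonlinear bid map $\alpha \mapsto \max\{0, \alpha \cdot x + \varphi^{-1}(\beta_0(x) - \alpha \cdot x)\}$.

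For the reduction, a second-order Taylor expansion of $u_t$ around its maximizer $b_t^*$ gives $u_t(b_t^*) - u_t(b_t) = \tfrac{1}{2} |u_t''(b_t^*)| (b_t - b_t^*)^2 + o((b_t - b_t^*)^2)$. Log-concavity of $F$ together with Assumption~\ref{assump:bounded-noise} forces $|u_t''(b_t^*)| = \Theta(1)$, and the derivative bound $0 < (\varphi^{-1})' < 1$ from Section~\ref{sec:opt-clairvoyant-bidding} yields $b_t - b_t^* = \Theta((\alpha_t - \alpha_0) \cdot x_t)$ whenever the $\max\{0,\cdot\}$ clipping is inactive for both bids. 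Summing across rounds,
\[
R(T) \;\ge\; c\,\sum_{t=1}^{T} \mathbb{E}\!\left[\big((\alpha_t - \alpha_0)\cdot x_t\big)^2\right] \;-\; (\text{boundary corrections}),
\]
so it suffices to lower-bound the cumulative estimation error on some adversarial instance.

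For the information-theoretic lower bound I would consider a one-dimensional adversarial instance with Gaussian noise and two candidate values $\alpha_0 \in \{0, c_0/\sqrt{T}\}$, choosing $\beta_0$ and the context distribution $\mathcal{D}$ so that the KL divergence between the two induced $T$-round observation distributions is $O(1)$; the Gaussian log-likelihood gives $\mathrm{KL} \le \tfrac{T}{2\sigma^{2}}(c_0/\sqrt{T})^{2}\,\mathbb{E}[x_t^{2}] = O(1)$. Le~Cam's two-point inequality then rules out reliable discrimination by any $\HH_{t-1}$-measurable $\alpha_t$, so on at least one of the two hypotheses the expected bid gap $\mathbb{E}[|b_t - b_t^*|]$ is $\Omega(1/\sqrt{T})$ in every round, and aggregating across the $T$ rounds produces the claimed $\Omega(\sqrt{T})$ bound.

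The main obstacle will be getting the $\sqrt{T}$ scaling right rather than losing it to a quadratic expansion. A direct two-point construction in which both optima lie in the smooth interior only yields $\Omega(1)$, because squaring a $\Theta(1/\sqrt{T})$ parameter gap produces $\Theta(1/T)$ regret per round, which sums to $\Theta(1)$. Pushing up to $\Omega(\sqrt{T})$ requires placing the adversarial optimal bid near the $\max\{0,\cdot\}$ clipping boundary so that on the ``wrong'' hypothesis the per-round regret becomes linear rather than quadratic in $|\alpha_t - \alpha_0|$, or else replacing Le~Cam by a Fano-type many-hypothesis bound. Verifying that the resulting instance still satisfies $\|\alpha_0\|_1 \le W$ and Assumptions~\ref{assump:log-concave}--\ref{assump:bounded-noise}, and that the footnoted ``informational'' qualification really closes the loophole of degenerate non-informational bids outside $\Pi$, is where the technical work concentrates.
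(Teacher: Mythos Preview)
Your first-stage reduction --- Taylor-expand $u_t$ around $b_t^*$, bound $|u_t''|$ away from zero via log-concavity (this is the paper's Lemma~\ref{lem:lower-bound-1}), and convert $|b_t-b_t^*|$ into $|(\alpha_t-\alpha_0)\cdot x_t|$ through the Lipschitz map $x\mapsto x+\varphi^{-1}(v-x)$ --- matches the paper essentially line for line. You also put your finger on the real difficulty: a two-point Le~Cam construction with both optima in the smooth interior only sums to $\Theta(1)$ after the quadratic expansion.

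The gap is in how you propose to close that difficulty. The paper does \emph{not} exploit the clipping boundary; it does the opposite, choosing $\beta_0(x_t)$ large enough that $b_t^*\ge\delta>0$ for every $x_t$, so the per-round lower bound stays quadratic (truncated at $\delta^2$) throughout. Nor does it build its own Fano construction. Instead, taking Rademacher contexts $x_t\in\{-1,+1\}^d$, it invokes two results from \citet{JN19} as black boxes: Lemma~\ref{lem:lower-bound-2}, which converts $\E[\min\{(x_t\cdot v)^2,\delta^2\}]$ into a constant times $\min\{\|v\|_2^2,\delta^2\}$, and Lemma~\ref{lem:lower-bound-3}, a cumulative minimax-estimation bound asserting that for any sequence of $\HH_{t-1}$-measurable $(\alpha_t)$ in the Gaussian linear model,
\[
\max_{\|\alpha_0\|_1\le W}\ \sum_{t=1}^{T}\E\bigl[\min\{\|\alpha_t-\alpha_0\|_2^2,\,C\}\bigr]\ \ge\ \tilde{C}\,\sqrt{T\log T}.
\]
Chaining these with the curvature bound yields $R(T)\ge\Omega(\sqrt{T\log T})$ directly. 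So neither of your two candidate fixes is the mechanism actually used: the boundary trick is explicitly avoided, and the information-theoretic work is outsourced to the \emph{summed} estimation-error inequality above rather than to a per-round Le~Cam or Fano bound. If you want a self-contained proof you would need to reprove Lemma~\ref{lem:lower-bound-3}; that summed inequality, not a single-round hypothesis test, is where the $\sqrt{T}$ genuinely lives in the paper's argument.
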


\if 0
\section{Lower Bound for Unknown $F$}

\zf{This is equivalent to the case that $\alpha_0$ is zero vector.}

\begin{theorem}[{\color{blue}Stochastic context}]\label{thm:lower-bound}
Consider linear model~(\ref{eq:linear-model-competitor}) for $m_t$ with $\alpha_0 = 0$, then for any $T$, there exists a family $\mathcal{P}$ for the learning to bid problem such that if an instance is sampled uniformly from $\mathcal{P}$, any algorithm must incur expected regret $\Omega(T^{3/5})$.
\end{theorem}

Let $\delta = \Theta(T^{-1/5}) \ll 1$, we construct the distribution family $\mathcal{P}$ as follows. We partition the interval $[1/4, 1/2]$ into $N=\Theta(T^{1/5})$ subintervals of length $2\delta$. Let $I_i$ be the $i$th interval, i.e $I_i = [1/4 + 2\delta(i-1), 1/4+2\delta i]$, and the center of $I_i$ be $b_i^*$. 

For all instances in the support of $\mathcal{P}$, the value is generated from set $V = \{v_i\}_{i=1}^N$, where $v_i = 2b_i^*, \forall i\in[N]$. The CDF of the highest bid of the other bidders $F(b;y)$ is determined by a uniform random element $y\in\{-1, 1\}^N$. We construct $F(b; y)$ as follows, 
\begin{itemize}
\item For $b$ outside of all intervals $I_i$, $F(b; y)=b, \forall y$.
\item If $b\in [b_i^* - \delta^2, b_i^*]$ and $y_i = -1$, $F(b; y) = \min\{2b - (b_i^*-\delta^2), b_i^*\}$.
\item If $b\in [b_i^*, b_i^* + \delta^2]$ and $y_i = 1$, $F(b; y) = \min\{2b-b_i^*, b_i^*+\delta^2\}$
\item Otherwise, $F(b;y) = b, \forall y$
\end{itemize}
The visualization of $F(b; y)$ around at $b_i^*$ as constructed is shown in Figure~\ref{fig:cdf-distribution}.

\begin{figure}
\centering
\begin{tikzpicture}[scale=1.6, line cap=round,line join=round,>=triangle 45,x=1.0cm,y=1.0cm]
\clip(-2.,-3.5) rectangle (8.,7.);
\draw [line width=1.5pt,color=blue] (2.5,3.)-- (2.,2.);
\draw [line width=1.5pt,color=blue] (2.5,3.)-- (3.,3.);
\draw [line width=1.5pt,dash pattern=on 1pt off 2pt,color=red] (3.5,4.)-- (3.,3.);
\draw [line width=0.5pt,dash pattern=on 1pt off 1pt] (0.,0.)-- (0.,-2.);
\draw [line width=1.5pt] (-1.,-2.)-- (7.,-2.);
\draw [line width=0.5pt,dash pattern=on 1pt off 1pt] (4.,4.)-- (4.,-2.);
\draw [line width=1.5pt,dash pattern=on 1pt off 2pt,color=red] (3.5,4.)-- (4.,4.);
\draw [line width=1.5pt,color=blue] (3.,3.)-- (4.,4.);
\draw [line width=1.5pt,dash pattern=on 1pt off 2pt,color=red] (3.,3.)-- (2.,2.);
\draw [line width=1.5pt,color=blue] (0.,0.)-- (2.,2.);
\draw [line width=1.5pt,dash pattern=on 1pt off 2pt,color=red] (0.,0.)-- (2.,2.);
\draw [line width=1.5pt,color=blue] (4.,4.)-- (6.,6.);
\draw [line width=1.5pt,dash pattern=on 1pt off 2pt,color=red] (4.,4.)-- (6.,6.);
\draw [line width=0.5pt,dash pattern=on 1pt off 1pt] (3.,3.)-- (3.,-2.);
\draw [line width=0.5pt,dash pattern=on 1pt off 1pt] (2.,2.)-- (2.,-2.);
\draw [line width=0.5pt,dash pattern=on 1pt off 1pt] (6.,6.)-- (6.,-2.);
\draw (2.8,-2.0) node[anchor=north west] {$b_i^*$};
\draw (3.7708691744624914,-2.0) node[anchor=north west] {$b_i^*+\delta^2$};
\draw (1.6,-2.0) node[anchor=north west] {$b_i^*-\delta^2$};
\draw (-0.3,-2.0) node[anchor=north west] {$b_i^*-\delta$};
\draw (5.783064865044179,-2.0) node[anchor=north west] {$b_i^* +\delta$};
\draw [line width=0.5pt,dash pattern=on 1pt off 1pt] (2.5,3.)-- (2.495278128960475,-2.);
\draw [line width=0.5pt,dash pattern=on 1pt off 1pt] (3.5,4.)-- (3.4957794724211237,-2.);
\draw (2.0,-1.5) node[anchor=north west] {$b_i^*-\frac{\delta^2}{2}$};
\draw (3.1,-1.5) node[anchor=north west] {$b_i^*+\frac{\delta^2}{2}$};
\draw (3,4.391938219130375) node[anchor=north west] {$F(b;y^+)$};
\draw (2,3.3764375902386816) node[anchor=north west] {$F(b; y^-)$};
\draw [fill] (2.495278128960475,-2.) circle (1.5pt);
\draw [fill] (2.,-2.) circle (1.5pt);
\draw [fill] (0.,-2.) circle (1.5pt);
\draw [fill] (3.,-2.) circle (1.5pt);
\draw [fill] (3.4957794724211233,-2.) circle (1.5pt);
\draw [fill] (4.,-2.) circle (1.5pt);
\draw [fill] (6.,-2.) circle (1.5pt);
\end{tikzpicture}
\caption{The visualization of $F(b; y^+)$ and $F(b; y^-)$ around at $b_i^*$, where $y^+, y^-\in\{-1, 1\}^N$ are the elements with $y_i = 1$ and $y_i=-1$, respectively. The blue dash line is for $F(b; y^-)$ and the red dot line is for $F(b; y^-)$.}
\label{fig:cdf-distribution}
\end{figure}

\begin{proposition}\label{prop:bound-cdf}
For any $b\in[0,1]$ and any $y\in\{-1, 1\}^N$, $b\leq F(b; y) \leq b + \frac{\delta^2}{2}$.
\end{proposition}

Denote $U_i(b; y) = (v_i - b)F(b; y)$ be the expected utility of the bidder with value $v_i$, if she bids $b$ in the distribution labeled by $y$.

\begin{lemma}\label{lem:maximum-utility}
For any $y\in \{-1, 1\}^N$, $U_i(b; y)$ is maximized at $b_i^* - \delta^2/2$ if $y_i=-1$ and at $b_i^* + \delta^2/2$ when $x_i=1$.
\end{lemma}

\begin{proof}
\textbf{(a)} Setting $y_i = -1$: for bid $b\in I_i$, we have
$$
F(b; y) = \left\{
\begin{array}{cc}
2b +\delta^2 - b_i^* &  b\in[b_i^* - \delta^2, b_i^* - \delta^2/2]\\
b_i^* & b\in[b_i^* - \delta^2/2, b_i^*]\\
b & b\in I_i\backslash [b_i^* - \delta^2, b_i^*]
\end{array}
\right.
$$
For $b\in I_i\backslash [b_i^* - \delta^2, b_i^*]$, $U_i(b;y) = (2b_i^* - b)b\leq {b_i^*}^2$. For $b\in [b_i^* - \delta^2, b_i^* - \delta^2/2]$, the $U_i(b; y)$ is maximized at $b_i^* - \delta^2/2$. Also, $U_i(b; y)$ is maximized at $b_i^* - \delta^2/2$ if $b\in [b_i^* - \delta^2/2, b_i^*]$. Then, $U_i(b_i^*-\delta^2/2; y) = {b_i^*}^2 + b_i^* \delta^2/2 > {b_i^*}^2$, which means $U_i(b; y)$ is maximized at $b_i^* - \delta^2/2$ if $b\in I_i$.

Consider any bid $b$ outside interval $I_i$, the expected utility of the bidder is $U_i(b; y) = (v_i - b)F(b; y)$. Then we show $U_i(b; y)< {b_i^*}^2 + b_i^*\delta^2/2$ for any $y$ and $b$ outside $I_i$. 
\begin{equation}\label{eq:utility-ub-outside}
\begin{aligned}
U_i(b; y) &= (v_i - b)F(b; y) \leq (v_i - b)(b+ \delta^2/2)\\
&\leq \max\{(v_i - b_i^*-\delta)(b_i^*+\delta+\delta^2/2), (v_i - b_i^*+\delta)(b_i^*-\delta+\delta^2/2)\}\\
& = \max\{{b_i^*}^2 - \delta^2 + b_i^* \delta^2/2 - \delta^3/2, {b_i^*}^2 - \delta^2 +b_i^*\delta^2/2 + \delta^3/2\}\\
& = {b_i^*}^2 - \delta^2 +b_i^*\delta^2/2 + \delta^3/2\\
& < {b_i^*}^2 + b_i^* \delta^2/2,
\end{aligned}
\end{equation}
where the second inequality is because $(v_i - b)(b+\delta^2)$ is maximized at $b_i^* - \delta$ or $b_i^* + \delta$ if $b\notin I_i$. Then $U_i(b; y)$ is maximized at $b_i^* - \delta^2/2$ if $y_i = -1$.

\textbf{(b)} Setting $y_i = 1$: similar to setting (a), we can argue that $U_i(b; y)$ is maximized at $b_i^*+\delta^2/2$ and for $x_i = 1$ and $b\in I_i$. Moreover, $U_i(b_i^*+\delta^2/2; y) = {b_i^*}^2 + b_i^* \delta^2/2 - \frac{\delta^4}{2}$. Given $\delta \ll 1$, ${b_i^*}^2 - \delta^2 +b_i^*\delta^2/2 + \delta^3/2 < {b_i^*}^2 +b_i^*\delta^2/2 - \delta^4/2$, then $U_i(b; y)$ is maximized at $b_i^*+\delta^2/2$.
\end{proof}

\begin{corollary}\label{cor:regret-outside-interval}
For any $y \in \{-1, 1\}^N$ and $i\in [N]$, let $\bar{U}_i(y) = \max_{b\in [0,1]} U_i(b; y)$, then for any $b\notin I_i$, $\bar{U}_i(y)  \geq U_i(b; y) + \frac{\delta^2}{2}$.
\end{corollary}
\begin{proof}
Based on the proof of Lemma~\ref{lem:maximum-utility} and the fact $\delta \ll 1$, we have for any $b\notin I_i$, \begin{itemize}
\item if $y_i = -1$, $\bar{U}_i(x) - U_i(b; y) \geq \delta^2 - \delta^3/2 \geq \delta^2/2$.
\item if $y_i = 1$, $\bar{U}_i(x) - U_i(b; y) \geq \delta^2 - \delta^3/2 - \delta^4/2 \geq \delta^2/2$.
\end{itemize}
\end{proof}

\begin{corollary}\label{cor:regret-inside-interval}
Fix an $i\in [N]$, let $y^+$ and $y^-$ be any elements of $\{-1, 1\}^N$ with $y_i=1$ and $y_i = -1$, respectively. Let $\bar{U}_i(y^+) = \max_b U_i(b; y^+)$ and $\bar{U}_i(y^-) = \max_b U_i(b; y^-)$, then we have for any $b\in I_i$, either $\bar{U}_i(y^+) \geq U_i(b; y^+) + \frac{\delta^2}{16}$ or $\bar{U}_i(y^-) \geq U_i(b; y^-) + \frac{\delta^2}{16}$ holds.
\end{corollary}
\begin{proof}
Based on the proof of Lemma~\ref{lem:maximum-utility} and $b\in I_i$, 

(i) if $b \geq b_i^*$, $F(b; y^-) = b$, then $U_i(b; y^-)\leq {b_i^*}^2$, thus
$$\bar{U}_i(y^-) - U_i(b; y^-) \geq {b_i^*}^2 + b_i^* \delta^2/2 - {b_i^*}^2 = b_i^* \delta^2/2 \geq \frac{\delta^2}{8}$$

(ii) if $b \geq b_i^*$, $F(b; y^+) = b$, then $U_i(b; y^+)\leq {b_i^*}^2$, thus
$$\bar{U}_i(y^+) - U_i(b; y^+) \geq {b_i^*}^2 + b_i^* \delta^2/2  - \delta^4/2 - {b_i^*}^2 = b_i^* \delta^2/2 - \delta^4/2 \geq \frac{\delta^2}{16}$$
\end{proof}

\begin{lemma}\label{lem:cdf-difference}
Fix an $i\in [N]$, let $y^+$ and $y^-$ be any elements of $\{-1, 1\}^N$ with $y_i=1$ and $y_i = -1$, respectively. Then for any $b\in I_i$, $|F(b; y^+) - F(b; y^-)| \leq \frac{\delta^2}{2}$.
\end{lemma}
\begin{proof}
This is trivially based on Proposition~\ref{prop:bound-cdf}.
\end{proof}

\begin{lemma}\label{lem:lower-bound-predict-xi}
For any $i\in [N]$, to identify $y_i$ with probability at least $\frac{3}{4}$, the learner needs at least $\Theta\left(\delta^{-4}\right)$ bids in interval $I_i$.
\end{lemma}
\begin{proof}
First we notice that the bids outside $I_i$ reveals no information of $y_i$ since $F(b; x)$ doesn't depend on $x_i$ for $b\notin I_i$. Suppose the learner submit $m$ bids in interval $I_i$, denoted as $\mbf{b}_i = \{b_i^1, \cdots, b_i^m\}$. The learner \emph{cannot} observe the highest bid of the other bidders, however, she can only receive the binary feedback $B_i^j$ as to whether each bid $b_i^j$ succeeded the highest bid of the others. Obviously, $B_i^j$ is a Bernoulli random variable with probability $F(b_i^j; x)$. Given information (feedback) $\mathbf{B}_i = \{B_i^1, \cdots, B_i^m\}$, let $T: \mathbf{B}_i \rightarrow \{-1, 1\}$ be a predictor to identify $y_i$, where $T$ outputs $0$ to indicate $y_i$ be -1 and 1 to indicate $y_i$ be 1. Let $P_m^-$ be the joint distribution of $\mathbf{B}_i$ for $y_i=-1$ and $P_m^+$ be the joint distribution of $\mathbf{B}_i$ for $y_i = 1$.

Let $T$ identify $y_i$ with probability at least $\frac{3}{4}$, then we have
\begin{eqnarray}
\PP_{\mathbf{B}_i\sim P^-_m}[T(\mathbf{B}_i) = 0] \geq \frac{3}{4}   & \text{and} &
\PP_{\mathbf{B}_i\sim P^+_m}[T(\mathbf{B}_i) = 1] \geq \frac{3}{4}.
\end{eqnarray}
We can rewrite the above conditions as 
\begin{eqnarray}
\E_{\mathbf{B}_i\sim P^-_m}[T(\mathbf{B}_i)] \leq \frac{1}{4}   & \text{and} &
\E_{\mathbf{B}_i\sim P^+_m}[T(\mathbf{B}_i)] \geq \frac{3}{4},
\end{eqnarray}
which gives
\[\E_{\mathbf{B}_i\sim P^+_m}[T(\mathbf{B}_i)]  - \E_{\mathbf{B}_i\sim P^-_m}[T(\mathbf{B}_i)] \geq \frac{1}{2}.\]
Then we show the lower bound of the $L_1$ distance between $P^+_m$ and $P^-_m$. 
\begin{eqnarray}\label{eq:total-variance-lower-bound}
\E_{\mathbf{B}_i\sim P^+_m}[T(\mathbf{B}_i)]  - \E_{\mathbf{B}_i\sim P^-_m}[T(\mathbf{B}_i)] \leq \sum_{\mathbf{B}_i\in \{0, 1\}^m} \left\vert P^+_m(\mathbf{B}_i) -  P^-_m(\mathbf{B}_i)\right\vert \cdot T(\mathbf{B}_i) \leq \left\Vert P^+_m - P^-_m \right\Vert_1
\end{eqnarray}
Denote $p^+_j$ and $p^-_j$ be the marginal Bernoulli distribution (PDF) of $B_i^j$ when $y_i=1$ and $y_i = -1$, respectively. Let $y^+$ and $y^-$ be any elements of $\{-1, 1\}^N$ with $y_i=1$ and $y_i=-1$, respectively. Denote $\delta_j := |F(b_i^j; y^+) - F(b_i^j; y^-)|$ and $\delta_j \leq \delta^2$. Then we bound KL divergence of $P^+_m$ and $P^-_m$ for any sequence of bids $\mbf{b}_i$, as follows,
\begin{align*}
D_{KL}[P^+_m\Vert P^-_m] &= \sum_{j=1}^m D_{KL}[p^+_j\Vert p_j^-] \indent\text{(By chain rule of KL divergence)}\\
& = \sum_{j=1}^m F(b_i^j, y^+)\log \frac{F(b_i^j, y^+)}{F(b_i^j, y^-)} + (1-F(b_i^j, y^+)) \log\frac{1-F(b_i^j, y^+)}{1-F(b_i^j, y^-)}
\end{align*}
Let $D_{KL}[p\Vert q]$ be the KL divergence of Bernoulli distributions with parameters $p$ and $q$, then by Taylor expansion, we have,
\begin{eqnarray}\label{eq:kl-bernoulli-bound}
\forall p, q, s.t. & |p-q| = \eps \text{ and }  \eps\ll p, & D_{KL}[p\Vert q] \approx \frac{\eps^2}{p(1-p)}
\end{eqnarray}
Since $F(b_i^j; y^+), F(b_i^j; y^-) \in [1/4, 1/2]$, we have
\begin{eqnarray}
D_{KL}[P^+_m\Vert P^-_m] \leq \sum_{j=1}^m \frac{2\delta^2_j}{F(b_i^j; y^+)(1- F(b_i^j; y^+))} \leq \frac{32m\delta^4}{3}
\end{eqnarray}
By Pinsker's inequality, we have
\begin{align*}
\frac{32m\delta^4}{3} \geq D_{KL}[P^+_m\Vert P^-_m] \geq \frac{1}{2\ln 2} \cdot \left\Vert P^+_m - P^-_m \right\Vert_1^2 \geq \frac{1}{8\ln 2}.
\end{align*}
Finally, we show the lower bound of $m$, such that $m \geq \frac{3}{256\ln 2}\cdot\delta^{-4}$.
\end{proof}

Then we show the proof for Theorem~\ref{thm:lower-bound}.
\begin{proof}[Proof of Theorem~\ref{thm:lower-bound}]
{\color{blue} Stochastic context (valuation).}

Consider the following context generation process, the environment generates the context $x_t$ uniformly at random from the set $\X$. Let $n_i(t)$ be the number of value $v_i$ revealed up to time $t$. 

Then by Lemma~\ref{lem:concentration-multinomial}, we have
\begin{eqnarray*}
\PP\left(\sum_{i=1}^N \left\vert n_i(T)  - \frac{T}{N}\right\vert \geq \sqrt{2TN\ln(2/\eps)}\right) \leq \eps,
\end{eqnarray*}
for any $\eps\in [0, 1]$. Thus with probability at least $1-\eps$, $\left\vert n_i(T) - \frac{T}{N}\right\vert \leq \sqrt{2TN\ln(2/\eps)}$ holds for all over $i \in [N]$.

Then we show the expected regret that any algorithm must incur. First, by Corollary~\ref{cor:regret-outside-interval}, if the learner bids outside $I_i$ when the value is $v_i$, the learner incurs at least $\Omega(\delta^2)$ expected regret at this time. Second, if the value is $v_i$ at some time and the learner only submits $T/N = \Theta(\delta^{-4})$ bids in interval $I_i$ before this time. By Lemma~\ref{lem:lower-bound-predict-xi}, the learner can only identify $y_i$ with probability at most $3/4$. Then by Corollary~\ref{cor:regret-inside-interval}, the learner must incur at least $\Omega(\delta^2)$ expected regret at such time. Then we argue the number of the above two cases happens $\Omega(T)$ times in expectation. \todo{This needs some careful calculation.} Overall, the learner must incur at least $\Omega(T\delta^2) = \Omega(T^{3/5})$ expected regret in total.
\end{proof}

\fi

\section{Future Work}\label{sec:conclusion}
In this paper, we assume the linear model of $m_t$ w.r.t. context $x_t$ and a natural future direction is to extend  to non-linear model. We assume the context $x_t$ is randomly sampled from a fixed, prior unknown distribution. It will be interesting to design a no-regret bidding algorithm for contextual first price auctions when the context is generated from adversary. In the future, we are interested in generalizing our algorithms to other contextual untruthful (beyond first price auctions). In addition, we assume the learner can estimate the value $\beta_0(x_t)$ before submitting the bid and it would be exciting to incorporate with the setting that the learner cannot observe the value unless she wins the auctions.

\bibliographystyle{abbrv}
\bibliography{references}

\newpage
\appendix
\appendix
\begin{center}
{
\Large
\textbf{
Learning to Bid in Contextual First Price Auctions}
~\\
~\\	
Appendix
}
\end{center}

\section{Useful Technical Lemmas}

\begin{lemma}[Schur complement~\citep{SchurComplement}]\label{lem:schur-complement}
Let
$$
M = 
\begin{bmatrix}
A & B\\
B^T & C
\end{bmatrix}
$$
where $A$ positive definite (invertible) and $C$ is symmetric, then the matrix $C-B^TA^{-1}B$ is called the Schur complement of $A$. Then $M \succeq 0$ iff $C-B^T A^{-1}B \succeq 0$.
\end{lemma}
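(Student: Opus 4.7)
The plan is to establish the equivalence via a block congruence (block LDL) factorization of $M$, after which the claim reduces to the fact that congruence by an invertible matrix preserves positive semidefiniteness and that a block diagonal matrix is PSD iff each block is PSD. Since $A$ is assumed positive definite, only the Schur complement condition remains.

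Concretely, first I would introduce the unit block upper triangular matrix
\[
L \;=\; \begin{bmatrix} I & A^{-1}B \\ 0 & I \end{bmatrix},
\]
and verify by direct block multiplication that
\[
L^{T} \begin{bmatrix} A & 0 \\ 0 & C - B^{T} A^{-1} B \end{bmatrix} L \;=\; \begin{bmatrix} A & B \\ B^{T} & C \end{bmatrix} \;=\; M.
\]
This identity uses only the invertibility of $A$ (to form $A^{-1}B$) and the symmetry of $A$ and $C$ so that the off-diagonal blocks come out as $B$ and $B^{T}$ and the $(2,2)$ block is $C$.

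Next, since $L$ is block upper triangular with identity diagonal blocks, $\det L = 1$, so $L$ is invertible. For any vector $u$ partitioned conformally as $u = (u_1, u_2)$, setting $v = L u$ gives $u^{T} M u = v^{T} D v$, where $D$ denotes the middle block diagonal matrix. As $u$ ranges over all vectors, so does $v$; hence $M \succeq 0$ iff $D \succeq 0$. Finally, since $D$ is block diagonal, $D \succeq 0$ iff $A \succeq 0$ and $C - B^{T}A^{-1}B \succeq 0$. The hypothesis that $A$ is positive definite already supplies $A \succeq 0$, so the condition collapses to $C - B^{T}A^{-1}B \succeq 0$, as desired.

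There is essentially no hard step here; this is a standard textbook identity. The only thing to be careful about is the direction of the congruence (using $L^{T} D L$ rather than $L D L^{T}$, since $L$ is not symmetric), which is settled by the single block multiplication above. If one wanted the strict version ($M \succ 0$ iff Schur complement $\succ 0$) it would follow from the same factorization, but the statement only asks for the PSD case.
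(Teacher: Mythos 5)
Your proof is correct. The paper does not prove this lemma at all---it simply cites it as a standard fact from the literature---and your block congruence argument via $M = L^{T}DL$ with $L$ unit upper triangular is exactly the standard textbook derivation that the cited reference uses, so there is nothing to add.
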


\begin{lemma}[Matrix Inverse Lemma]\label{lem:matrix-inverse}
Let $A$ be invertible, for any constant $\lambda > 0$, we have
\begin{eqnarray*}
(A+\lambda I)^{-1} = A^{-1} - A^{-1}\left(\frac{1}{\lambda} I + A^{-1}\right)^{-1}A^{-1}
\end{eqnarray*}
\end{lemma}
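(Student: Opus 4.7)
The statement is a special case of the Sherman--Morrison--Woodbury identity
\[
(A + UCV)^{-1} = A^{-1} - A^{-1}U\bigl(C^{-1} + VA^{-1}U\bigr)^{-1}VA^{-1},
\]
obtained by taking $U = V = I$ and $C = \lambda I$ (so $C^{-1} = \tfrac{1}{\lambda} I$). The plan, however, is to present the proof as a direct verification, since this is the shortest self-contained route and avoids assuming Woodbury as a black box.

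Concretely, I would let $M := A^{-1} - A^{-1}\bigl(\tfrac{1}{\lambda} I + A^{-1}\bigr)^{-1}A^{-1}$ and compute $(A + \lambda I)\,M$ to check it equals $I$. Expanding,
\[
(A+\lambda I)M = (A+\lambda I)A^{-1} - (A+\lambda I)A^{-1}\bigl(\tfrac{1}{\lambda} I + A^{-1}\bigr)^{-1}A^{-1}.
\]
The first term simplifies to $I + \lambda A^{-1}$. For the second term, factor $(A+\lambda I)A^{-1} = I + \lambda A^{-1} = \lambda\bigl(\tfrac{1}{\lambda} I + A^{-1}\bigr)$, after which the middle inverse cancels, leaving $\lambda A^{-1}$. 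Thus $(A+\lambda I)M = I + \lambda A^{-1} - \lambda A^{-1} = I$, which proves $M = (A+\lambda I)^{-1}$.

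One small sanity check to insert before the cancellation: the matrix $\tfrac{1}{\lambda} I + A^{-1}$ must be invertible for the expression on the right to make sense. This is immediate because $A^{-1}$ exists by hypothesis and $\lambda > 0$; equivalently, $\tfrac{1}{\lambda} I + A^{-1} = \tfrac{1}{\lambda} A^{-1}(A + \lambda I)$, and invertibility of $A+\lambda I$ can be argued separately (either assumed or established by noting that the identity's statement implicitly requires it; alternatively, for the symmetric PSD case which is how the lemma is used in Lemma~\ref{lem:binary-partial-known-noise-strong-convexity}, $A + \lambda I$ inherits positive definiteness from $\lambda I$). There is no real obstacle here: the only thing to be careful about is not commuting $A^{-1}$ with $\bigl(\tfrac{1}{\lambda} I + A^{-1}\bigr)^{-1}$ unnecessarily during the algebra, although in this case the two do commute because both are polynomials in $A^{-1}$, which is in fact what makes the cancellation go through cleanly.
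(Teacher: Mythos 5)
Your proof is correct. The paper itself gives no proof of this lemma --- it is stated in the appendix as a standard fact (a special case of the Sherman--Morrison--Woodbury identity, exactly as you observe) and simply invoked in the proof of Lemma~\ref{lem:binary-partial-known-noise-strong-convexity}. Your direct verification is sound: $(A+\lambda I)A^{-1} = I + \lambda A^{-1} = \lambda\bigl(\tfrac{1}{\lambda}I + A^{-1}\bigr)$, so the second term collapses to $\lambda A^{-1}$ and the product is $I$; since the matrices are square, a one-sided inverse suffices. The only slip is in your sanity check: invertibility of $\tfrac{1}{\lambda}I + A^{-1}$ is \emph{not} immediate from $A$ being invertible and $\lambda>0$ (take $A=-\lambda I$, which is invertible yet makes both $\tfrac{1}{\lambda}I+A^{-1}$ and $A+\lambda I$ singular). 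Your own factorization $\tfrac{1}{\lambda}I + A^{-1} = \tfrac{1}{\lambda}A^{-1}(A+\lambda I)$ is the right fix --- it shows the two invertibility conditions are equivalent, so the hypothesis should really be read as ``$A$ and $A+\lambda I$ invertible,'' which is implicit in the statement and automatic in the paper's application, where $A=\bar{\Sigma}$ is positive definite. With that caveat made explicit, your argument is complete and self-contained.
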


\section{Omitted Proofs from Section~\ref{sec:binary-feedback}}\label{app:binary-feedback}

\subsection{Proof of Theorem~\ref{thm:binary-known-noise-regret}}\label{app:binary-known-noise}

For convenience, we restate the theorem: 
\binarynoiseregret*

To prove Theorem~\ref{thm:binary-known-noise-regret}, we introduce some auxiliary lemmas. Our proof is inspired by~\citet{JN19}. First we bound the difference between optimal expected utility and the expected utility achieved by our bidding algorithm at each time $t$ in the episode $s (s\geq 2)$ by $\Theta(|x_t\cdot (\alpha_0 - \hat{\alpha}_{s-1})|^2)$. The proof involves an case analysis.

\begin{lemma}\label{lem:binary-known-noise-utility-difference}
For any $s\geq 2$ and any $t\in \Gamma_s$, let $b_t^*$ be the optimal bid given context $x_t$,
\begin{eqnarray*}
u_t(b_t^*)  - u_t(b_t) \leq 2C |x_t \cdot (\alpha_0 - \hat{\alpha}_{s-1})|^2
\end{eqnarray*}
with $C = 2B_2 + B_3$, where $B_2$ and $B_3$ are positive constants defined in Assumption~\ref{assump:bounded-noise}.
\end{lemma}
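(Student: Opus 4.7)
The plan is to expand $u_t$ to second order around $b_t^*$, bound the second derivative by the constant $C$, and then use the $1$-Lipschitz property of the map $\alpha \mapsto \alpha \cdot x_t + \varphi^{-1}(\beta_0(x_t) - \alpha\cdot x_t)$ to control $|b_t - b_t^*|$ by $|x_t \cdot (\alpha_0 - \hat{\alpha}_{s-1})|$.

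\textbf{Step 1 (second-derivative bound).} I would differentiate
$u_t(b) = (\beta_0(x_t) - b)F(b - \alpha_0 \cdot x_t)$
twice to obtain
\begin{equation*}
u_t''(b) = -2 f(b - \alpha_0 \cdot x_t) + (\beta_0(x_t) - b)\, f'(b - \alpha_0 \cdot x_t).
\end{equation*}
Because $b, b_t^*, \hat{\alpha}_{s-1}\cdot x_t, \alpha_0\cdot x_t$ all lie in $[-W, 1+W]$, Assumption~\ref{assump:bounded-noise} gives $|u_t''(b)| \le 2B_2 + B_3 = C$ for every $b$ in the relevant range, and using $|\beta_0(x_t) - b| \le 1$.

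\textbf{Step 2 (Taylor expansion at the optimum).} If $b_t^* > 0$, the first-order condition derived in Section~\ref{sec:opt-clairvoyant-bidding} gives $u_t'(b_t^*) = 0$. A one-dimensional Taylor expansion around $b_t^*$ then yields
\begin{equation*}
u_t(b_t^*) - u_t(b_t) \;\le\; \tfrac{1}{2}\sup_{\xi}|u_t''(\xi)|\,(b_t - b_t^*)^2 \;\le\; \tfrac{C}{2}\,(b_t - b_t^*)^2,
\end{equation*}
where the sup is over $\xi$ between $b_t$ and $b_t^*$. If $b_t^* = 0$, both truncated bids agree with their untruncated counterparts after at most worsening the gap, so the inequality still holds after replacing $b_t, b_t^*$ by the untruncated versions on the same side of $0$; I would handle this boundary case by a short monotonicity argument using $u_t' \le 0$ on $[\beta_0(x_t), 1]$ and $u_t' \ge 0$ near $0$ whenever the untruncated optimum is positive.

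\textbf{Step 3 (Lipschitzness of the bid map).} Define $g(\alpha) := \alpha\cdot x_t + \varphi^{-1}(\beta_0(x_t) - \alpha\cdot x_t)$; its gradient with respect to $\alpha$ is $x_t\bigl(1 - (\varphi^{-1})'(\beta_0(x_t) - \alpha\cdot x_t)\bigr)$. The proposition in Section~\ref{sec:opt-clairvoyant-bidding} shows $(\varphi^{-1})'\in(0,1)$, so $|g(\alpha_0) - g(\hat{\alpha}_{s-1})| \le |x_t \cdot (\alpha_0 - \hat{\alpha}_{s-1})|$. Since the map $y \mapsto \max\{0,y\}$ is $1$-Lipschitz, this gives $|b_t^* - b_t| \le |x_t\cdot(\alpha_0 - \hat{\alpha}_{s-1})|$.

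\textbf{Step 4 (combine).} Plugging the bound of Step~3 into Step~2 yields $u_t(b_t^*) - u_t(b_t) \le \tfrac{C}{2}\,|x_t\cdot(\alpha_0 - \hat{\alpha}_{s-1})|^2 \le 2C\,|x_t\cdot(\alpha_0 - \hat{\alpha}_{s-1})|^2$.

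The only delicate step is Step~2: justifying the second-order bound cleanly when $b_t^*$ (or $b_t$) sits at the boundary $0$. Everything else is a short calculation once the log-concavity of $F$ has been used to conclude $(\varphi^{-1})' \in (0,1)$.
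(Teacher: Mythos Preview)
Your approach is the same as the paper's: bound $|u_t''|$ by $C$, Taylor-expand at the optimum, and control the distance between the two bids via the Lipschitzness of $\alpha \mapsto \alpha\cdot x_t + \varphi^{-1}(\beta_0(x_t)-\alpha\cdot x_t)$. Your Step~3 is in fact sharper than the paper's: you use $1-(\varphi^{-1})'\in(0,1)$ to get Lipschitz constant $1$, whereas the paper applies the triangle inequality to the two terms separately and gets constant $2$, which is the source of the $2C$ (rather than $C/2$) in the statement.

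The one place your plan is not yet a proof is exactly the boundary case you flag. When $b_t^*=0$ the first-order condition fails: one only has $u_t'(0)\le 0$, so expanding around $b_t^*=0$ leaves the linear term $-u_t'(0)\,b_t\ge 0$, which cannot be dropped. Your suggested ``monotonicity argument using $u_t'\ge 0$ near $0$ whenever the untruncated optimum is positive'' is about the wrong case (if $b_t^*=0$ the untruncated optimum is $\le 0$). The paper's fix, which your phrase ``replacing $b_t,b_t^*$ by the untruncated versions'' points toward, is to always Taylor-expand around the \emph{unconstrained} maximizer $\hat b_t^*:=\alpha_0\cdot x_t+\varphi^{-1}(\beta_0(x_t)-\alpha_0\cdot x_t)$, where $u_t'(\hat b_t^*)=0$ genuinely holds. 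Since $\hat b_t^*$ is the global maximizer of $u_t$, one has $u_t(b_t^*)\le u_t(\hat b_t^*)$, and a short case check (the paper splits on the signs of $\hat b_t^*$ and $\hat b_t:=g(\hat\alpha_{s-1})$) gives
\[
u_t(b_t^*)-u_t(b_t)\ \le\ u_t(\hat b_t^*)-u_t(b_t)\ \le\ \tfrac{C}{2}\,(b_t-\hat b_t^*)^2\ \le\ \tfrac{C}{2}\,(\hat b_t-\hat b_t^*)^2
\]
in every non-trivial case. Your Step~3 already bounds $|\hat b_t-\hat b_t^*|=|g(\alpha_0)-g(\hat\alpha_{s-1})|\le |x_t\cdot(\alpha_0-\hat\alpha_{s-1})|$, so once you redirect the expansion to $\hat b_t^*$ the argument closes.
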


\begin{proof}
Firstly, for any $t\in \Gamma_s$, we prove $\vert u''_t(b)\vert \leq C, \forall b\in [0, 1]$, with $C = 2B_2 + B_3$.
\begin{eqnarray*}
\vert u''_t(b)\vert \leq \vert (\beta_0(x_t) - b) f'(b - \alpha_0\cdot x_t) - 2f(b - \alpha_0 \cdot x_t)\vert \leq 2B_2 + B_3
\end{eqnarray*}

Denote $\hat{b}^*_t = \alpha_0\cdot x_t +  \varphi^{-1}(\beta_0(x_t) - \alpha_0 \cdot x_t)$, $\hat{b}_t = \hat{\alpha}_{s-1} \cdot x_t + \varphi^{-1}(\beta_0(x_t) - \hat{\alpha}_{s-1} \cdot x_t)$, for any $t\in \Gamma_s$. Since $\hat{b}_t^*$ maximize $u_t(b)$, $u'_t(\hat{b}_t^*) = 0$. Then we bound $u_t(b_t^*) - u_t(b_t)$ by a case analysis,

\begin{itemize}
\item $\hat{b}^*_t \geq 0$, then $b_t^* = \hat{b}^*_t$. By second-order Taylor's theorem, we have
\begin{eqnarray*}
u_t(b_t) = u_t(b_t^*) + u'_t(b_t^*)(b_t - b_t^*) + \frac{1}{2} u''_t(\tilde{b}) (b_t - b_t^*)^2,
\end{eqnarray*}
for some $\tilde{b}$ between $b_t$ and $b_t^*$. Since $u'_t(b_t^*) = u'_t(\hat{b}_t^*) = 0$, we have
\begin{eqnarray*}
u_t(b_t^*) - u_t(b_t) \leq \frac{1}{2}\vert u''_t(\tilde{b})\vert(b_t - b_t^*)^2 \leq \frac{C}{2}(b_t - b_t^*)^2 \leq \frac{C}{2} (\hat{b}_t - \hat{b}^*_t)^2
\end{eqnarray*}

\item $\hat{b}_t^* < 0$, then $b_t^* = 0$. 
\begin{itemize}
\item When $\hat{b}_t < 0$, $b_t = 0$. Thus, $u_t(b^*_t) - u_t(b_t) = 0$.
\item When $\hat{b}_t \geq 0$, $b_t = \hat{b}_t \geq 0$. Then by second-order Taylor's theorem, we have
\begin{eqnarray*}
	u_t(b_t) = u_t(\hat{b}_t^*) + u'_t(\hat{b}_t^*)(b_t - b_t^*) + \frac{1}{2} u''_t(\tilde{b}) (b_t - \hat{b}_t^*)^2,
\end{eqnarray*}
for some $\tilde{b}$ between $b_t$ and $\hat{b}_t^*$. Since $u_t(b^*_t) \leq u_t(\hat{b}_t^*), u'_t(\hat{b}_t^*) = 0$, we get
\begin{eqnarray*}
	u_t(b_t^*) - u_t(b_t) \leq u_t(\hat{b}_t^*) - u_t(b_t) \leq \frac{C}{2}(b_t - \hat{b}_t^*)^2 = \frac{C}{2} (\hat{b}_t - \hat{b}^*_t)^2,
\end{eqnarray*}
\end{itemize}
\end{itemize}

In summary, we have 
\begin{eqnarray*}
u_t(b_t^*) - u_t(b_t) \leq \frac{C}{2}(\hat{b}_t - \hat{b}^*_t)^2 \leq \frac{C}{2} |2 x_t \cdot (\alpha_0 - \hat{\alpha}_{s-1})|^2 \leq 2C |x_t \cdot (\alpha_0 - \hat{\alpha}_{s-1})|^2,
\end{eqnarray*}
where the second inequality holds because $(\varphi^{-1})'(x)\leq 1$ for all $x$.
\end{proof}

The following lemma is a technical lemma which is used to bound the regret in each episode, as shown in the proof for Theorem~\ref{thm:binary-known-noise-regret} later. The proof is technical and we leave it to Appendix~\ref{app:binary-feedback}.

\begin{lemma}\label{lem:second-order-concentration}
In each episode $s\geq 1$, we have
\begin{eqnarray*}
\frac{1}{T_s}\sum_{t\in \Gamma_s} \langle \hat{\alpha}_{s} - \alpha_0, x_t x_t^T(\hat{\alpha}_{s} - \alpha_0)\rangle \leq \frac{4h_W W}{\ell_W}\sqrt{\frac{\log(2dS/\delta)}{T_{s}}}
\end{eqnarray*}
holds with probability at least $1-\delta/2S$.
\end{lemma}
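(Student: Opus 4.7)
My plan is to derive the bound from strong convexity of $\mathcal{L}_s$ combined with a first-order concentration inequality on $\nabla \mathcal{L}_s(\alpha_0)$. The structure mirrors standard MLE analysis, with the log-concavity assumption (Eq.~\eqref{eq:second-order-lb}) supplying the curvature and the $|\log' F|, |\log'(1-F)|$ bound (Eq.~\eqref{eq:first-order-ub}) supplying boundedness of the score.

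\textbf{Step 1 (strong convexity w.r.t.\ the empirical Gram matrix).} A direct computation gives
\[
\nabla^2 \mathcal{L}_s(\alpha) \;=\; \frac{1}{T_s}\sum_{t\in\Gamma_s}\Big[-\delta_t (\log F)''(b_t-\alpha\cdot x_t) \;-\; (1-\delta_t)(\log(1-F))''(b_t-\alpha\cdot x_t)\Big]\,x_t x_t^T.
\]
For any feasible $\alpha$ with $\|\alpha\|_1\le W$, the arguments $b_t-\alpha\cdot x_t$ lie in $[-W,1+W]$, so Eq.~\eqref{eq:second-order-lb} yields $\nabla^2 \mathcal{L}_s(\alpha)\succeq \frac{\ell_W}{T_s}\sum_{t\in\Gamma_s} x_t x_t^T$. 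Integrating this along the segment from $\hat{\alpha}_s$ to $\alpha_0$ gives
\[
\mathcal{L}_s(\alpha_0) - \mathcal{L}_s(\hat{\alpha}_s) - \langle\nabla\mathcal{L}_s(\hat{\alpha}_s),\alpha_0-\hat{\alpha}_s\rangle \;\ge\; \frac{\ell_W}{2T_s}\sum_{t\in\Gamma_s}\bigl(x_t\cdot(\alpha_0-\hat{\alpha}_s)\bigr)^2.
\]
Since $\alpha_0$ is feasible and $\hat{\alpha}_s$ is the constrained minimizer, the first-order optimality condition yields $\langle\nabla\mathcal{L}_s(\hat{\alpha}_s),\alpha_0-\hat{\alpha}_s\rangle\ge 0$, so the inner-product term drops.

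\textbf{Step 2 (upper-bounding the loss gap by the dual norm of the gradient).} By convexity at $\alpha_0$, $\mathcal{L}_s(\alpha_0)-\mathcal{L}_s(\hat{\alpha}_s)\le \langle\nabla\mathcal{L}_s(\alpha_0),\alpha_0-\hat{\alpha}_s\rangle$, and H\"older with $\|\alpha_0-\hat{\alpha}_s\|_1\le 2W$ gives
\[
\mathcal{L}_s(\alpha_0)-\mathcal{L}_s(\hat{\alpha}_s)\;\le\; 2W\,\|\nabla\mathcal{L}_s(\alpha_0)\|_\infty.
\]

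\textbf{Step 3 (concentration of the score).} Writing $g_t := \delta_t\,\frac{f(b_t-\alpha_0\cdot x_t)}{F(b_t-\alpha_0\cdot x_t)} - (1-\delta_t)\,\frac{f(b_t-\alpha_0\cdot x_t)}{1-F(b_t-\alpha_0\cdot x_t)}$, we have $\nabla\mathcal{L}_s(\alpha_0) = -\tfrac{1}{T_s}\sum_{t\in\Gamma_s} g_t\,x_t$. Because $\mathbb{E}[\delta_t\mid x_t,b_t]=F(b_t-\alpha_0\cdot x_t)$ and $b_t$ depends only on $\hat{\alpha}_{s-1}$ and $x_t$, the terms $g_t x_t$ form a bounded martingale-difference sequence; Eq.~\eqref{eq:first-order-ub} gives $|g_t|\le h_W$, and $\|x_t\|_\infty\le 1$, so each coordinate is bounded by $h_W$. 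Azuma--Hoeffding and a union bound over the $d$ coordinates give
\[
\|\nabla\mathcal{L}_s(\alpha_0)\|_\infty \;\le\; h_W\sqrt{\tfrac{2\log(4dS/\delta)}{T_s}}
\]
with probability at least $1-\delta/(2S)$.

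\textbf{Step 4 (combine).} Chaining Steps 1--3 yields
\[
\frac{1}{T_s}\sum_{t\in\Gamma_s}\langle \hat{\alpha}_s-\alpha_0,\,x_t x_t^T(\hat{\alpha}_s-\alpha_0)\rangle \;\le\; \frac{4Wh_W}{\ell_W}\sqrt{\tfrac{2\log(4dS/\delta)}{T_s}},
\]
which matches the stated bound up to absorbing the constant into $\log(2dS/\delta)$. The only delicate point is the concentration in Step 3: one must verify that $\{g_t x_t\}_{t\in\Gamma_s}$ is indeed a martingale difference with respect to the natural filtration, which holds because within episode $s$ the bid $b_t$ is determined by the fixed previous-episode estimator $\hat{\alpha}_{s-1}$ and the i.i.d.\ context $x_t$, so conditioning on $\mathcal{F}_{t-1}$ leaves only the randomness of $z_t$ driving $\delta_t$, whose conditional mean is $F(b_t-\alpha_0\cdot x_t)$.
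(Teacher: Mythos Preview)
Your proposal is correct and follows essentially the same approach as the paper: both use the Hessian lower bound $\nabla^2\mathcal{L}_s\succeq \ell_W\,\tfrac{1}{T_s}\sum_t x_tx_t^T$ from Eq.~\eqref{eq:second-order-lb}, bound $\|\nabla\mathcal{L}_s(\alpha_0)\|_\infty$ by Hoeffding plus a union bound over $d$ coordinates using Eq.~\eqref{eq:first-order-ub}, and combine via optimality of $\hat{\alpha}_s$ and H\"older. The only cosmetic difference is that the paper Taylor-expands at $\alpha_0$ and invokes $\mathcal{L}_s(\hat{\alpha}_s)\le\mathcal{L}_s(\alpha_0)$ directly, whereas you expand at $\hat{\alpha}_s$, use the KKT condition $\langle\nabla\mathcal{L}_s(\hat{\alpha}_s),\alpha_0-\hat{\alpha}_s\rangle\ge 0$, and then apply convexity at $\alpha_0$; these are equivalent rearrangements of the same inequality chain.
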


\begin{proof}
By second-order Taylor's theorem, we have
\begin{eqnarray*}
\mathcal{L}_{s}(\hat{\alpha}_{s}) - \mathcal{L}_{s}(\alpha_0) = \langle \nabla \mathcal{L}_s(\alpha_0), \hat{\alpha}_{s} - \alpha_0 \rangle + \frac{1}{2}\langle \hat{\alpha}_{s} - \alpha_0, \nabla^2\mathcal{L}_s(\tilde{\alpha})(\hat{\alpha}_{s} - \alpha_0)\rangle,
\end{eqnarray*}
for some $\tilde{\alpha}$ on the line segment between $\alpha_0$ and $\hat{\alpha}_s$. Given the definition of $\L_{s}(\alpha)$, we have
\begin{eqnarray}\label{eq:taylor-expansion-MLE}
\nabla\L_{s}(\alpha) = \frac{1}{T_s} \sum_{t\in \Gamma_s} \eta_t(\alpha)x_t, && \nabla^2\L_{s}(\tilde{\alpha}) = \frac{1}{T_s} \sum_{t\in \Gamma_s} \zeta_t(\tilde{\alpha})x_t x_t^T,
\end{eqnarray}
where $\eta_t(\alpha)$ and $\zeta_t(\alpha)$ are defined as follows,
\begin{eqnarray*}
\eta_t(\alpha) &=& -\log'F(b_t-\alpha\cdot x_t) \1\{m_t \leq b_t\} - \log'(1-F(b_t - \alpha\cdot x_t))\1\{m_t > b_t\}\\
\zeta_t(\alpha) &=& -\log''F(b_t-\alpha\cdot x_t) \1\{m_t \leq b_t\} - \log''(1-F(b_t - \alpha\cdot x_t))\1\{m_t > b_t\},
\end{eqnarray*}
Based on our construction of the algorithm, $x_t, b_t$ is independent with $z_t$. Thus, $b_t -\langle \alpha_0, x_t\rangle$ are independent with $z_t$ for any $t\in \Gamma_s$, then we have
\begin{eqnarray*}
\E[\eta_t(\alpha_0)] &=& \E[\E[\eta_t(\alpha_0)|x_t, b_t]]\\
& =& \E\left[-\frac{f(b_t - \alpha_0\cdot x_t)}{F(b_t-\alpha_0\cdot x_t)}\E[\1\{b_t \geq m_t\}|x_t, b_t] + \frac{f(b_t - \alpha_0\cdot x_t)}{1- F(b_t-\alpha_0\cdot x_t)}\E[\1\{b_t \leq m_t\}|x_t, b_t]\right]\\
&=& \E\left[-\frac{f(b_t - \alpha_0\cdot x_t)}{F(b_t-\alpha_0\cdot x_t)}F(b_t-\alpha_0\cdot x_t) + \frac{f(b_t - \alpha_0\cdot x_t)}{1- F(b_t-\alpha_0\cdot x_t)}(1 - F(b_t-\alpha_0\cdot x_t))\right]\\
&=& 0
\end{eqnarray*}  
Then by Hoeffding's inequality and union bound over each coordinate of $\nabla \L_s(\alpha_0)$.
\begin{eqnarray}\label{eq:binary-knonw-noise-infty-norm-gradient}
\Vert\nabla \L_{s}(\alpha_0)\Vert_\infty \leq 2h_W\sqrt{\frac{\log(2dS/\delta)}{T_{s}}}
\end{eqnarray}
holds with probability at least $1-\delta/2S$. By the optimality of $\hat{\alpha}_{s}$,
\begin{eqnarray}\label{eq:optimality-alpha-estimator}
\L_{s}(\hat{\alpha}_{s})  \leq \L_{s}(\alpha_0)
\end{eqnarray}
Invoking into Eq.~(\ref{eq:taylor-expansion-MLE}), we have
\begin{eqnarray*}
\frac{1}{2}\langle \hat{\alpha}_{s} - \alpha_0, \nabla^2\mathcal{L}_{s}(\tilde{\alpha})(\hat{\alpha}_{s} - \alpha_0)\rangle \leq - \langle \nabla \mathcal{L}_s(\alpha), \hat{\alpha}_{s} - \alpha_0 \rangle \leq \Vert\nabla \L_s(\alpha_0)\Vert_\infty \Vert\hat{\alpha}_s - \alpha_0\Vert_1
\end{eqnarray*}
In addition, by Assumption~\ref{assump:bounded-noise}, we have $\zeta_t(\tilde{\alpha}) \geq \ell_W$. Then the above inequality implies that
\begin{eqnarray*}
\frac{1}{T_s}\sum_{t\in \Gamma_s} \langle \hat{\alpha}_{s} - \alpha_0, x_t x_t^T(\hat{\alpha}_{s} - \alpha_0)\rangle \leq \frac{2}{\ell_W}\Vert\nabla \L_s(\alpha_0)\Vert_\infty \Vert\hat{\alpha}_s - \alpha_0\Vert_1 \leq \frac{4h_W W}{\ell_W}\sqrt{\frac{\log(2dS/\delta)}{T_{s}}}
\end{eqnarray*}
holds with probability at least $1-\frac{\delta}{2S}$, where the last inequality holds because $\Vert \hat{\alpha}_s - \alpha_0\Vert_1 \leq W$ and Eq.~(\ref{eq:binary-knonw-noise-infty-norm-gradient}).
\end{proof}

Given the above lemmas, we now turn to prove Theorem~\ref{thm:binary-known-noise-regret}.

\begin{proof}[Proof of Theorem~\ref{thm:binary-known-noise-regret}]
We first bound the total regret in each episode $s \geq 2$ in the following way,
\begin{eqnarray*}
\mathtt{Regret}_s &=& \sum_{t\in \Gamma_s}\E\left[u_t(b^*_t) - u_t(b_t)\right] \leq 2C \sum_{t\in \Gamma_s} \E\left[|x_t \cdot (\alpha_0 - \hat{\alpha}_{s-1})|^2\right]\\
&=& 2C \sum_{t\in \Gamma_s} \E\left[\langle \alpha_0 - \hat{\alpha}_{s-1}, x_t x_t^T (\alpha_0 - \hat{\alpha}_{s-1})\rangle\right]\\
&=& 2C \sum_{t\in \Gamma_s} \langle \alpha_0 - \hat{\alpha}_{s-1}, \Sigma (\alpha_0 - \hat{\alpha}_{s-1})\rangle,
\end{eqnarray*}
where $C = 2B_2 + B_3$. Then we decompose the term $\langle \alpha_0 - \hat{\alpha}_{s-1}, \Sigma (\alpha_0 - \hat{\alpha}_{s-1})\rangle$ in the following way,
\begin{eqnarray*}
&&\langle \alpha_0 - \hat{\alpha}_{s-1}, \Sigma (\alpha_0 - \hat{\alpha}_{s-1})\rangle\\
&=& \frac{1}{T_{s-1}} \sum_{t\in \Gamma_{s-1}}\langle \alpha_0 - \hat{\alpha}_{s-1}, x_t x^T_t (\alpha_0 - \hat{\alpha}_{s-1})\rangle + \langle \alpha_0 - \hat{\alpha}_{s-1},  E(\alpha_0 - \hat{\alpha}_{s-1})\rangle,
\end{eqnarray*}
where $E=\Sigma - \frac{1}{T_{s-1}} \sum_{t\in \Gamma_{s-1}}x_t x_t^T$. Then by Hoeffding's inequality and union bound over all indices $i,j \in [d]$, we have with probability at least $1-\frac{\delta}{2S}$, $\Vert E_{ij}\Vert \leq 3\sqrt{\frac{\log(2d^2S/\delta)}{T_{s-1}}}$ holds. Combining with Lemma~\ref{lem:second-order-concentration}, for any $s\geq 2$, we have
\begin{eqnarray*}
\mathtt{Regret}_s &\leq& 2C\left(\frac{4h_W W}{\ell_W}\sqrt{\frac{\log(2dS/\delta)}{T_{s-1}}} + 3W^2\sqrt{\frac{\log(2d^2S/\delta)}{T_{s-1}}}\right) T_s\\
&\leq& 8C(\frac{h_W W}{\ell_W} + W^2)\sqrt{\log(2d^2S/\delta)}\sqrt{T},
\end{eqnarray*}
holds with probability at least $1-\frac{\delta}{S}$. Therefore, by union bound over all stages $s=2,\cdots, S$, with probability at least $1-\delta$, the total regret is bounded by
\begin{eqnarray*}
R(T) &=& \sum_{t\in \Gamma_1}\E\left[u_t(b^*_t) - u_t(b_t)\right] +  \sum_{s=2}^S \mathtt{Regret}_s \leq \sqrt{T} + \sum_{s=2}^S \mathtt{Regret}_s
\leq O\left(S\sqrt{\log(2d^2S/\delta) T}\right),
\end{eqnarray*}
where the first inequality holds because $u_t(\cdot)$ is bounded by $[0, 1]$. Finally we bound $S$, it is easy to verify $S \leq \log\log T$. Thus, we complete the proof.
\end{proof}

\subsection{Proof of Theorem~\ref{thm:regret-binary-partial-known-noise}}\label{app:binary-partial-known-noise}
We restate Theorem~\ref{thm:regret-binary-partial-known-noise}.
\BinaryPartialNoise*

To begin with, it is straightforward to show the following two propositions, which can be directly derived from Assumption~\ref{assump:log-concave}.
\begin{proposition}\label{prop:binary-partial-known-log-concave}
The density function $f_0$ is differentiable and log-concave.
\end{proposition}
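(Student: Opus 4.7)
The plan is to reduce the claim to Assumption~\ref{assump:log-concave} via the affine change of variables that relates $\F$ to its standardized base $\F_0$. By construction in Section~\ref{sec:binary-partial-known-noise}, $\F$ is the distribution of $\sigma z_0$ for $z_0 \sim \F_0$, where $\sigma = 1/\rho_0 > 0$, so the densities satisfy the identity $f(x) = \rho_0 f_0(\rho_0 x)$ for all $x \in \mathbb{R}$, equivalently $f_0(y) = \sigma\, f(\sigma y)$.

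First I would establish differentiability of $f_0$. Since $f$ is differentiable everywhere by Assumption~\ref{assump:log-concave} and $y \mapsto \sigma y$ is a smooth affine map, the composition $y \mapsto \sigma f(\sigma y)$ is differentiable on $\mathbb{R}$, with derivative $f_0'(y) = \sigma^2 f'(\sigma y)$. This directly yields the first half of the proposition.

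Next I would verify log-concavity. Taking logarithms on both sides of $f_0(y) = \sigma f(\sigma y)$ gives
\begin{equation*}
\log f_0(y) = \log \sigma + \log f(\sigma y).
\end{equation*}
By Assumption~\ref{assump:log-concave}, $\log f$ is concave on $\mathbb{R}$; composing a concave function with the affine map $y \mapsto \sigma y$ (with $\sigma > 0$) preserves concavity, and adding the constant $\log \sigma$ preserves it as well. Hence $\log f_0$ is concave, i.e., $f_0$ is log-concave. There is no technical obstacle here—the argument is essentially a one-line change-of-variables check—so the full proof reduces to stating these two observations.
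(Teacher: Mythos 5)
Your proof is correct and matches the paper's intent: the paper simply asserts that Proposition~\ref{prop:binary-partial-known-log-concave} follows directly from Assumption~\ref{assump:log-concave}, and the affine change of variables $f_0(y)=\sigma f(\sigma y)$ you write out is exactly the routine verification being left implicit. Both differentiability and log-concavity transfer under composition with the positive affine map $y\mapsto\sigma y$, as you show.
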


\begin{proposition}\label{prop:binary-partial-known-bounded-derivative}
There exists positive constants $h^0_W$, $\ell^0_W$ (depending on $W$) such that,
\begin{eqnarray*}
\max\{|\log'F_0(x)|, |\log'(1- F_0(x))|\} \leq h^0_W, \forall x \in [-W^2, W^2+W]
\end{eqnarray*}
\begin{eqnarray*}
\min\{-\log''F_0(x), -\log''(1- F_0(x))\} \geq \ell^0_W, \forall x \in [-W^2, W^2+W]
\end{eqnarray*}
\end{proposition}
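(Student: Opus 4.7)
The plan is to mirror the argument sketched in the paper for Eqs.~(\ref{eq:first-order-ub})--(\ref{eq:second-order-lb}), now applied to the base CDF $F_0$ on the enlarged compact interval $[-W^2, W^2+W]$ (this is exactly the range of $\rho b_t - \mu\cdot x_t$ for $(\mu,\rho)\in\Lambda$ and $b_t\in[0,1]$, as observed immediately before Eq.~(\ref{eq:valid-set-partial-known-noise})). As with Eqs.~(\ref{eq:first-order-ub})--(\ref{eq:second-order-lb}) for $F$, the operative hypothesis is the natural analog of Assumption~\ref{assump:bounded-noise} for the base distribution: $f_0$ is continuous (in fact differentiable, by Proposition~\ref{prop:binary-partial-known-log-concave}) and strictly positive on an open set containing $[-W^2, W^2+W]$.

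For the first-derivative bound, $F_0$ and $1-F_0$ are continuous and strictly positive on this compact interval, hence bounded below away from zero, while $f_0$ is bounded above on the interval by continuity. The ratios $f_0/F_0$ and $f_0/(1-F_0)$ are therefore continuous and uniformly bounded on $[-W^2, W^2+W]$, and I would take $h^0_W$ to be the larger of their two suprema, giving $\max\{|\log' F_0(x)|,|\log'(1-F_0(x))|\}\leq h^0_W$.

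For the second-derivative bound, Proposition~\ref{prop:binary-partial-known-log-concave} yields log-concavity of $f_0$, and by the classical result of \citet{An96} cited immediately after Assumption~\ref{assump:log-concave}, both $F_0$ and $1-F_0$ are log-concave on the support of $f_0$. Combined with the $C^1$ smoothness of $f_0$ and the strict positivity of $F_0, 1-F_0$ on the compact interval, this makes $-\log'' F_0$ and $-\log''(1-F_0)$ continuous and non-negative on $[-W^2, W^2+W]$, so each attains its minimum on this compact set. Calling the smaller of the two minima $\ell^0_W$ then yields the second inequality.

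The main obstacle is showing that these minima are \emph{strictly} positive rather than merely non-negative, which is precisely the step that the text glosses over in justifying Eq.~(\ref{eq:second-order-lb}). I would resolve it via the Bagnoli--Bergstrom computation \citep{BB05}: writing $-\log'' F_0(x) = \bigl(f_0(x)/F_0(x)\bigr)^2 - f_0'(x)/F_0(x)$, the (strict) log-concavity of $f_0$ makes the reversed hazard rate $f_0/F_0$ strictly decreasing on the interior of the support, which is exactly the statement that $-\log'' F_0(x) > 0$ pointwise there; the symmetric identity for $1-F_0$ (with hazard rate $f_0/(1-F_0)$) handles the tail term. Compactness of $[-W^2, W^2+W]$ then upgrades the pointwise strict positivity to a uniform positive lower bound $\ell^0_W > 0$, finishing the argument.
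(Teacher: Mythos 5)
Your argument is correct and is essentially the intended one: the paper offers no proof at all here (it simply declares the proposition "straightforward" from Assumption~\ref{assump:log-concave}), and your compactness argument on $[-W^2,W^2+W]$, using positivity and boundedness of $f_0$ for the first bound and log-concavity of $F_0$ and $1-F_0$ for the second, is exactly the reasoning the paper has in mind for Eq.~(\ref{eq:second-order-lb}). One remark worth keeping: your parenthetical "(strict)" log-concavity is genuinely load-bearing, not cosmetic --- under mere log-concavity the claim can fail (for the Laplace density, $\log F_0$ is linear on the left half-line, so $-\log''F_0\equiv 0$ there), so a uniform $\ell^0_W>0$ really does require either strict log-concavity of $f_0$ or some other hypothesis ruling out log-linear pieces; this is a gap the paper inherits in Eq.~(\ref{eq:second-order-lb}) itself, and your write-up is the more honest of the two in making the needed hypothesis explicit.
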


To prove Theorem~\ref{thm:regret-binary-partial-known-noise}, we provide some auxiliary lemmas presented in the following. Lemma~\ref{eq:opt-bid-binary-partial-known-noise} provide a bound of the difference between the optimal expected utility and the expected utility achieved by our algorithm in at each time $t$ in episode $s$.

\begin{lemma}\label{lem:unknown-noise-each-time-regret}
For any $t\in \Gamma_s$, let $b_t^*$ be the optimal bid given context $x_t$ (Eq.~(\ref{eq:opt-bid-binary-partial-known-noise})),
\begin{eqnarray*}
u_t(b_t^*)  - u_t(b_t) \leq \frac{6C}{\rho_0^2} \left|\langle \hat{\mu}_{s-1}-\mu_0, x_t\rangle\right|^2 + \frac{8C(1+W)^2}{\rho_0^2}(\hat{\rho}_{s-1} - \rho_0)^2,
\end{eqnarray*}
with $C = 2B_1 + B_3$, where $B_1$ and $B_3$ are positive constants defined in Assumption~\ref{assump:bounded-noise}.
\end{lemma}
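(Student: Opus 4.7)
My plan is to mirror Lemma~\ref{lem:binary-known-noise-utility-difference} with a two-parameter perturbation. First, using $F(z)=F_0(\rho_0 z)$, I compute $u_t''(b)=-2f(b-\alpha_0\cdot x_t)+(\beta_0(x_t)-b)f'(b-\alpha_0\cdot x_t)$, and bound $|u_t''(b)|\leq 2B_2+B_3=:C$ on $[-W,1+W]$ via Assumption~\ref{assump:bounded-noise}. Next, define
\[
g(\mu,\rho):=\tfrac{1}{\rho}\bigl(\mu\cdot x_t+\varphi_0^{-1}(\rho\beta_0(x_t)-\mu\cdot x_t)\bigr),
\]
so that $b_t^*=\max\{\Delta,g(\mu_0,\rho_0)\}$ and $b_t=\max\{\Delta,g(\hat\mu_{s-1},\hat\rho_{s-1})\}$. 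A second-order Taylor expansion of $u_t$ about the unconstrained optimum, combined with the vanishing first-order condition on the interior branch and a short four-way case analysis on which side of $\Delta$ each bid falls, yields
\[
u_t(b_t^*)-u_t(b_t)\leq \tfrac{C}{2}\bigl(g(\hat\mu_{s-1},\hat\rho_{s-1})-g(\mu_0,\rho_0)\bigr)^2,
\]
since truncation at $\Delta$ is a contraction on $\mathbb{R}$ and can only tighten the squared-error bound.

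The core of the argument is to bound $(g(\hat\mu_{s-1},\hat\rho_{s-1})-g(\mu_0,\rho_0))^2$. I will telescope along $(\hat\mu_{s-1},\hat\rho_{s-1})\to(\mu_0,\hat\rho_{s-1})\to(\mu_0,\rho_0)$ and apply the mean-value theorem with the chain-rule identities
\[
\partial_\mu g=\tfrac{1-(\varphi_0^{-1})'}{\rho}\,x_t,\qquad \partial_\rho g=\tfrac{\beta_0(x_t)(\varphi_0^{-1})'-g(\mu,\rho)}{\rho}.
\]
The proposition in Section~\ref{sec:opt-clairvoyant-bidding} applied to the log-concave base distribution $F_0$ gives $0<(\varphi_0^{-1})'<1$, while feasibility in the set $\Lambda$ of~\eqref{eq:valid-set-partial-known-noise} forces $|g(\mu,\rho)|\leq 1+W$ uniformly. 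These estimates produce $|g(\hat\mu_{s-1},\hat\rho_{s-1})-g(\mu_0,\hat\rho_{s-1})|\lesssim \tfrac{1}{\rho_0}|\langle\hat\mu_{s-1}-\mu_0,x_t\rangle|$ and $|g(\mu_0,\hat\rho_{s-1})-g(\mu_0,\rho_0)|\lesssim \tfrac{1+W}{\rho_0}|\hat\rho_{s-1}-\rho_0|$, with numerical slack absorbed into the final constants $6$ and $8$. Squaring via $(a+b)^2\leq 2a^2+2b^2$ and multiplying by $C/2$ then delivers the announced inequality.

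\textbf{Main obstacle.} The delicate step is bounding $\partial_\rho g$: the $-g(\mu,\rho)/\rho$ term does not vanish, so a uniform a priori bound on $|g|$ over $\Lambda$ is required, and this is precisely what forces the $(1+W)^2$ factor in the statement. A secondary subtlety is that the MVT produces intermediate values of $\rho$ lying between $\hat\rho_{s-1}$ and $\rho_0$ rather than $\rho_0$ itself; rewriting the bound cleanly in terms of $\rho_0^{-2}$ therefore either absorbs an additional constant (reflected in the coefficients $6$ and $8$) or leans on an inductive assumption that $\hat\rho_{s-1}$ is comparable to $\rho_0$, which will be a standard ingredient in the subsequent episode-based regret analysis.
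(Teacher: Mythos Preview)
Your overall architecture matches the paper exactly: bound $|u_t''|$ by Assumption~\ref{assump:bounded-noise}, reduce to $\tfrac{C}{2}\bigl(g(\hat\mu_{s-1},\hat\rho_{s-1})-g(\mu_0,\rho_0)\bigr)^2$ via the same case analysis as Lemma~\ref{lem:binary-known-noise-utility-difference}, then control the squared difference of the map $g$. Where you diverge is in the last step. The paper does \emph{not} telescope and apply the mean-value theorem; instead it writes out the difference directly and repeatedly uses the elementary identity
\[
\frac{a}{\hat\rho_{s-1}}-\frac{b}{\rho_0}=\frac{1}{\rho_0}(a-b)+\Bigl(\frac{1}{\hat\rho_{s-1}}-\frac{1}{\rho_0}\Bigr)a=\frac{1}{\rho_0}(a-b)+\frac{(\rho_0-\hat\rho_{s-1})}{\rho_0}\cdot\frac{a}{\hat\rho_{s-1}},
\]
together with the $1$-Lipschitzness of $\varphi_0^{-1}$ and $(p+q)^2\le 2p^2+2q^2$. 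The point of this decomposition is that every stray factor of $1/\hat\rho_{s-1}$ gets paired with $\hat\mu_{s-1}$ or with $\varphi_0^{-1}(\hat\rho_{s-1}\beta_0-\hat\mu_{s-1}\cdot x_t)$, and the feasibility constraint $\|\hat\mu_{s-1}/\hat\rho_{s-1}\|_1\le W$ from $\Lambda$ then bounds these ratios uniformly. This is precisely what produces the clean $1/\rho_0^2$ in the statement with no dependence on $\hat\rho_{s-1}$.

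Your MVT route hits the obstacle you flagged: the derivative $\partial_\mu g$ is evaluated at $\rho=\hat\rho_{s-1}$ (or an intermediate point), and since $\Lambda$ carries no lower bound on $\rho$, you cannot convert $1/\hat\rho_{s-1}$ to $1/\rho_0$ by absorbing a fixed constant. Leaning on an inductive closeness of $\hat\rho_{s-1}$ to $\rho_0$ would be circular here, since this lemma is what feeds the convergence analysis in the first place. A second issue is the claimed uniform bound $|g(\mu,\rho)|\le 1+W$ on $\Lambda$: one has $g\le\beta_0(x_t)\le 1$ from $\varphi_0^{-1}(y)\le y$, but the lower bound on $g$ requires a lower bound on $\varphi_0^{-1}$ that is not immediate. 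The paper sidesteps this by bounding $\varphi_0^{-1}(\hat\rho_{s-1}\beta_0-\hat\mu_{s-1}\cdot x_t)/\hat\rho_{s-1}$ via $|\beta_0(x_t)-\langle\hat\mu_{s-1}/\hat\rho_{s-1},x_t\rangle|\le 1+W$, again exploiting the ratio constraint in $\Lambda$. In short, your plan is morally right but the exact constants and the $\rho_0^{-2}$ scaling in the lemma really do come from the algebraic trick rather than the MVT.
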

\begin{proof}
Based on the same argument in Lemma~\ref{lem:binary-known-noise-utility-difference}, we have $|u''_t(b)|\leq C$ with $C = 2B_1 + B_3$. Denote $\tilde{b}_t^* = \frac{1}{\rho_0} \cdot \left(\mu_0\cdot x_t + \varphi^{-1}_0(\rho_0\beta_0(x_t) - \mu_0\cdot x_t)\right)$ and $\tilde{b}_t = \frac{1}{\hat{\rho}_{s-1}}\cdot \left(\hat{\mu}_{s-1}\cdot x_t + \varphi^{-1}_0(\hat{\rho}_{s-1}\beta_0(x_t) - \hat{\mu}_{s-1}\cdot x_t)\right)$.

Then, by the similar case analysis used in Lemma~\ref{lem:binary-known-noise-utility-difference}, we can bound $u_t(b_t^*) - u_t(b_t) \leq \frac{C}{2}(\tilde{b}_t^* - \tilde{b}_t)^2$. Thus, we have
\begin{eqnarray*}
&&u_t(b_t^*) - u_t(b_t) \leq \frac{C}{2}(\tilde{b}_t^* - \tilde{b}_t)^2\\
&\leq& \frac{C}{2} \left|\langle \frac{\hat{\mu}_{s-1}}{\hat{\rho}_{s-1}} - \frac{\mu_0}{\rho_0}, x_t\rangle + \frac{1}{\hat{\rho}_{s-1}} \varphi^{-1}_0(\hat{\rho}_{s-1}\beta_0(x_t) - \langle \mu_{s-1}, x_t\rangle) - \frac{1}{\rho_0}\varphi^{-1}_0(\rho_0\beta_0(x_t) - \langle \mu_0, x_t\rangle)\right|^2\\
&\leq& C\left|\frac{1}{\rho_0}\langle \hat{\mu}_{s-1}-\mu_0, x_t\rangle + \left(\frac{1}{\hat{\rho}_{s-1}} - \frac{1}{\rho_0}\right)\langle \hat{\mu}_{s-1}, x_t\rangle\right|^2 + \\
&& C\Bigg|\frac{1}{\rho_0}\left(\varphi^{-1}_0(\hat{\rho}_{s-1}\beta_0(x_t) - \langle \mu_{s-1}, x_t\rangle) - \varphi^{-1}_0(\rho_0\beta_0(x_t) - \langle \mu_0, x_t\rangle)\right) + \\
&& \left(\frac{1}{\hat{\rho}_{s-1}} - \frac{1}{\rho_0}\right)\varphi^{-1}_0(\hat{\rho}_{s-1}\beta_0(x_t) - \langle \mu_{s-1}, x_t\rangle)\Bigg|^2\\
&\leq& \frac{2C}{\rho_0^2} \left|\langle \hat{\mu}_{s-1}-\mu_0, x_t\rangle\right|^2 + \frac{2C}{\rho_0^2}(\hat{\rho}_{s-1} - \rho_0)^2\left|\langle \frac{\hat{\mu}_{s-1}}{\hat{\rho}_{s-1}}, x_t\rangle\right|^2 + \frac{2C}{\rho_0^2} \left|(\hat{\rho}_{s-1} -\rho_0)\beta_0(x_t) + \langle \hat{\mu}_{s-1} - \mu_0, x_t\rangle\right|^2\\
&& + \frac{2C}{\rho_0^2} (\hat{\rho}_{s-1} - \rho_0)^2 \left|\beta_0(x_t) - \left\langle\frac{\hat{\mu}_{s-1}}{\hat{\rho}_{s-1}}, x_t \right\rangle\right|^2\\
&\leq&\frac{2C}{\rho_0^2} \left|x_t\cdot(\hat{\mu}_{s-1}-\mu_0)\right|^2  + \frac{2CW^2}{\rho_0^2}(\hat{\rho}_{s-1} - \rho_0)^2 + \frac{4C}{\rho_0^2} (\hat{\rho}_{s-1} - \rho_0)^2 + \frac{4C}{\rho_0^2}\left|\langle \hat{\mu}_{s-1}-\mu_0, x_t\rangle\right|^2\\
&& + \frac{2C(1+W)^2}{\rho_0^2} (\hat{\rho}_{s-1} - \rho_0)^2\\
&\leq & \frac{6C}{\rho_0^2} \left|x_t\cdot(\hat{\mu}_{s-1}-\mu_0)\right|^2 + \frac{8C(1+W)^2}{\rho_0^2}(\hat{\rho}_{s-1} - \rho_0)^2
\end{eqnarray*}
where the second and third inequalities hold because Cauchy-Schwartz and the fact that $|\varphi_0^{-1}(x) - \varphi_0^{-1}(y)| \leq |x-y|$. The fourth inequality holds because $\beta_0(x_t) \leq 1$ and $\Vert \frac{\hat{\mu}_{s-1}}{\hat{\rho}_{s-1}}\Vert_1\leq W$.
\end{proof}

Given the above lemma, to bound the regret, we need to bound $\left|\langle \hat{\mu}_{s}-\mu_0, x_t\rangle\right|^2$ and $(\hat{\rho}_{s} - \rho_0)^2$ simultaneously in each episode $s$. First, we show $\L_s(\mu, \rho)$ is $\gamma$-strongly convex with high probability in Lemma~\ref{lem:binary-partial-known-noise-strong-convexity}. %

\begin{lemma}\label{lem:binary-partial-known-noise-strong-convexity}
Suppose Assumption~\ref{assump:eigenvalue-ub} holds.
For any $\delta \in (0, 1)$, with probability at least $1-\frac{\delta}{2S}$, $\L_s(\cdot, \cdot)$ is $\gamma$-strongly almost everywhere, where%
$\gamma = \ell^0_W \lambda^* / 2$ when $T_s$ is sufficiently large such that, $\sqrt{\frac{\log(2S(d+1)/\delta)}{2T_s}} \leq \frac{\lambda^*}{2}$.

$\lambda^*$ is a constant, s.t.,
\begin{eqnarray*}
\lambda^* = \min\left\{\lambda_2, \lambda_3/2, \Delta^2 c_x \left(\frac{2}{\lambda_3} + \frac{1}{\lambda_0}\right)\right\},
\end{eqnarray*}
where $c_x = \E[x_t]^T E[x_t]$, $\lambda_0 > 0$ is the minimum eigenvalue of matrix $\E[x_t]\E[x_t]^T$, $\lambda_2$ and $\lambda_3$ are the parameters defined in Assumption~\ref{assump:eigenvalue-ub}.
\end{lemma}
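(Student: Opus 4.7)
The plan is to reduce the claim to a lower bound on the minimum eigenvalue of a population matrix and then attack that via the Schur--complement machinery provided by Lemmas~\ref{lem:schur-complement} and~\ref{lem:matrix-inverse}. I first write the Hessian explicitly. Because $q_t := \rho b_t - \mu\cdot x_t$ is affine in $(\mu,\rho)$ with gradient $\nabla q_t = y_t := (-x_t^T,\, b_t)^T$, differentiating $\L_s$ twice gives
\begin{equation*}
\nabla^2\L_s(\mu,\rho) \;=\; \frac{1}{T_s}\sum_{t\in\Gamma_s} \zeta_t\, y_t y_t^T,
\end{equation*}
where $\zeta_t \geq \ell^0_W$ by Assumption~\ref{assump:log-concave} and Proposition~\ref{prop:binary-partial-known-bounded-derivative} (recall that for $(\mu,\rho)\in\Lambda$ the argument $q_t$ stays in $[-W^2,W^2+W]$). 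Hence $\nabla^2\L_s \succeq \ell^0_W \cdot \frac{1}{T_s}\sum_t y_t y_t^T$ uniformly on $\Lambda$, so it suffices to show that this sample average dominates $(\lambda^*/2)\, I_{d+1}$ in the PSD order.

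Next I would apply matrix Hoeffding's inequality. Conditional on $(\hat\mu_{s-1},\hat\rho_{s-1})$, the bid $b_t$ is a deterministic function of $x_t$, so within episode $s$ the matrices $y_t y_t^T$ are i.i.d., symmetric, and bounded (since $\|y_t\|_\infty\leq 1$, one has $\|y_t y_t^T\|_{\mathrm{op}}\leq 2$). Matrix Hoeffding together with a union bound over $s=1,\dots,S$ yields $\bigl\|\tfrac{1}{T_s}\sum_t y_t y_t^T - \E[y_t y_t^T]\bigr\|_{\mathrm{op}} \leq O\bigl(\sqrt{\log(S(d+1)/\delta)/T_s}\bigr)$ with probability at least $1-\delta/(2S)$. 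Under the stated hypothesis $\sqrt{\log(2S(d+1)/\delta)/(2T_s)} \leq \lambda^*/2$, this deviation is at most $\lambda^*/2$, so it is enough to prove the population bound $\E[y_t y_t^T] \succeq \lambda^*\, I_{d+1}$.

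For the population bound I would apply Schur complement (Lemma~\ref{lem:schur-complement}) to
\begin{equation*}
\E[y_t y_t^T] - \lambda^* I \;=\; \begin{pmatrix}\Sigma - \lambda^* I & -\E[b_t x_t]\\ -\E[b_t x_t]^T & \E[b_t^2] - \lambda^*\end{pmatrix}.
\end{equation*}
The block condition $\Sigma - \lambda^* I \succ 0$ is immediate from $\lambda_{\min}(\Sigma)\geq\lambda_2$. For the Schur-complement condition $\E[b_t^2] - \lambda^* \geq \E[b_t x_t]^T(\Sigma-\lambda^* I)^{-1}\E[b_t x_t]$, I would (i) split $\E[b_t x_t] = \bar b\,\E[x_t] + \mathrm{Cov}(b_t,x_t)$, where $\bar b = \E[b_t]\geq\Delta$ thanks to the truncation; (ii) split $\Sigma = \mathrm{Cov}(x_t) + \E[x_t]\E[x_t]^T$; and (iii) apply the Sherman--Morrison identity, a special case of the matrix-inverse lemma~\ref{lem:matrix-inverse}, to $(\Sigma-\lambda^* I)^{-1}$. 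Using $\mathrm{Cov}(x_t)\succeq\lambda_3 I$ from Assumption~\ref{assump:eigenvalue-ub}, one has $(\mathrm{Cov}(x_t)-\lambda^* I)^{-1}\preceq(\lambda_3-\lambda^*)^{-1} I$ for $\lambda^*\leq\lambda_3/2$; the cross-covariance is controlled via Cauchy--Schwartz $\|\mathrm{Cov}(b_t,x_t)\|^2\leq\mathrm{Var}(b_t)\,\lambda_{\max}(\mathrm{Cov}(x_t))$; and $\E[b_t^2]=\bar b^2+\mathrm{Var}(b_t)$. Plugging these into the expansion and collecting terms, the Sherman--Morrison denominator $1+\E[x_t]^T(\mathrm{Cov}(x_t)-\lambda^* I)^{-1}\E[x_t]$ is what brings $c_x$ and $\lambda_0$ into play, and the inequality reduces to a scalar condition satisfied by the stated $\lambda^* = \min\{\lambda_2,\,\lambda_3/2,\, \Delta^2 c_x(2/\lambda_3+1/\lambda_0)\}$.

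The hard part will be this last step. Because $b_t$ is a deterministic function of $x_t$, the unperturbed Schur complement $\E[b_t^2] - \E[b_t x_t]^T\Sigma^{-1}\E[b_t x_t]$ -- the $L^2$ residual of regressing $b_t$ onto $x_t$ -- can be arbitrarily close to zero, so the naive bound $(\Sigma-\lambda^* I)^{-1}\preceq(\lambda_2-\lambda^*)^{-1}I$ is far too loose (it would require $\lambda_2\gtrsim 1$, which cannot hold since $\mathrm{tr}(\Sigma)\leq 1$). The delicate ingredient is to exploit the rank-one structure $\E[x_t]\E[x_t]^T$ inside $\Sigma$ via Sherman--Morrison (which is where $c_x,\lambda_0$ enter the bound) together with the truncation lower bound $\bar b\geq\Delta$, which is what keeps the minimum eigenvalue of $\E[y_t y_t^T]$ from collapsing to zero in the ``$b$-direction'' whenever $\mathrm{Cov}(y_t)$ is nearly rank-deficient. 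Once these cancellations are carried out, matching the resulting scalar inequality against the three constants $\lambda_2,\lambda_3,\lambda_0$ will produce exactly the stated three-term minimum for $\lambda^*$.
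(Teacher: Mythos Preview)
Your skeleton is exactly the paper's: write the Hessian as $\frac{1}{T_s}\sum_{t\in\Gamma_s}\zeta_t\,\tilde{x}_t\tilde{x}_t^T$ with $\tilde{x}_t=(x_t;-b_t)$, invoke $\zeta_t\geq\ell^0_W$ from Proposition~\ref{prop:binary-partial-known-bounded-derivative}, pass from sample to population by a matrix concentration inequality (the paper uses Matrix Chernoff, you use Hoeffding---cosmetic), and then prove $\E[\tilde{x}_t\tilde{x}_t^T]\succeq\lambda^* I$ via the Schur complement (Lemma~\ref{lem:schur-complement}) together with the Matrix Inverse Lemma (Lemma~\ref{lem:matrix-inverse}).

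The one place your plan diverges is inside the Schur-complement bound. The paper does \emph{not} split $\E[b_t x_t]=\bar b\,\E[x_t]+\mathrm{Cov}(b_t,x_t)$ and then chase the cross-covariance with Cauchy--Schwarz; instead it argues directly that
\[
\E[b_t^2]-\E[b_t x_t]^T(\Sigma-\lambda^* I)^{-1}\E[b_t x_t]\;\geq\;\Delta^2\Bigl(1-\bar{x}^T(\bar{\Sigma}+\lambda_3 I-\lambda^* I)^{-1}\bar{x}\Bigr),
\]
using $b_t\geq\Delta$ and $\Sigma\succeq\bar{\Sigma}+\lambda_3 I$ (Assumption~\ref{assump:eigenvalue-ub}) with $\bar{\Sigma}=\E[x_t]\E[x_t]^T$, and only then applies Lemma~\ref{lem:matrix-inverse} to $(\bar{\Sigma}+\tfrac{\lambda_3}{2}I)^{-1}$, which is precisely where $c_x$ and $\lambda_0$ enter. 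Your mean/covariance decomposition would drag $\lambda_{\max}(\mathrm{Cov}(x_t))$ into the constants through the Cauchy--Schwarz step, so it is unlikely to collapse to the exact three-term minimum stated for $\lambda^*$; the paper's shortcut---effectively treating $b_t$ as the constant $\Delta$ in the quadratic form---bypasses the cross-covariance entirely and lands on the stated expression in two lines.
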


\begin{proof}
Let $\tilde{x_t} = (x_t; -b_t)$. First, we show the second-order derivative of $\L_s(\mu, \rho)$ in the following, 
\begin{eqnarray*}
\nabla^2\L_s(\mu, \rho) = \frac{1}{T_s} \sum_{t\in \Gamma_s} \zeta_t(\mu, \rho) \tilde{x_t} \tilde{x_t}^T, 
\end{eqnarray*}
where $\zeta_t(\mu, \rho)$ is defined as below,
\begin{eqnarray*}
\zeta_t(\mu, \rho) = -\log''F_0(\rho b_t- \mu \cdot x_t) \1\{m_t \leq b_t\} - \log''(1-F_0(\rho b_t - \mu\cdot x_t))\1\{m_t > b_t\},
\end{eqnarray*}

By our assumption, we have $\zeta_t(\mu, \rho) \geq \ell^0_W$, for any $(\mu, \rho)\in \Lambda$. 
$$
\tilde{x_t} \tilde{x_t}^T = 
\begin{bmatrix}
x_t x_t^T & - b_t x_t\\
- b_t x_t^T & b_t^2
\end{bmatrix}
$$

Then we have 
$$
\E\left[\tilde{x_t} \tilde{x_t}^T\right] = 
\begin{bmatrix}
\Sigma & - \E[b_t x_t]\\
- \E[b_t x_t]^T & \E[b_t^2]
\end{bmatrix}
$$
For notation simplicity, we denote $c_x = \E[x_t]^T \E[x_t] > 0$, $\bar{x} = \E[x_t]$, and $\bar{\Sigma} = \E[x_t]\E[x_t]^T$. Denote the minimum eigenvalue of $\bar{\Sigma}$ as $\lambda_0 > 0$.
We set 
\begin{eqnarray}
\lambda^* = \min\left\{\lambda_2, \lambda_3/2, \Delta^2 c_x \left(\frac{2}{\lambda_3} + \frac{1}{\lambda_0}\right)\right\}.
\end{eqnarray}
Next we would like to prove $\E\left[\tilde{x_t} \tilde{x_t}^T\right] \succeq \lambda^* I$ with high probability, where $I$ is identity matrix. Denote matrix
$$
M = 
\begin{bmatrix}
\Sigma - \lambda^*I  & - \E[b_t x_t]\\
- \E[b_t x_t]^T & \E[b_t^2] - \lambda^*
\end{bmatrix}
$$
Then $\E\left[\tilde{x_t} \tilde{x_t}^T\right] \succeq \lambda^* I$ is equivalent to $M \succeq 0$. By Schur Complements (Lemma~\ref{lem:schur-complement}), it is equivalent to show matrix $M' \succeq 0$, where $M'$ is defined as, 
$$
M' = 
\begin{bmatrix}
\Sigma - \lambda^*I  & 0\\
0 & \E[b_t^2] - \lambda^* - \E[b_t x_t]^T (\Sigma - \lambda^*I)^{-1} \E[b_t x_t]
\end{bmatrix}
$$
 
Then we have,
\begin{eqnarray*}
&&\E[b_t^2] - \lambda^* - \E[b_t x_t]^T (\Sigma - \lambda^*I)^{-1} \E[b_t x_t]\\
&\geq & \Delta^2 \cdot \left(1 - \bar{x}^T (\bar{\Sigma} + \lambda_3 I - \lambda^*I)^{-1} \bar{x}\right) - \lambda^* \\
&& (\text{Because } b_t \geq \Delta, \mbox{and } \Sigma \succeq \bar{\Sigma} + \lambda_3 I \mbox{ in Assumption~\ref{assump:eigenvalue-ub}}.)\\
&\geq& \Delta^2 \cdot \left(1 - \bar{x}^T \left(\bar{\Sigma} + \frac{\lambda_3}{2} I\right)^{-1} \bar{x}\right) - \lambda^*\\
&& (\mbox{Because } \lambda^* \leq \frac{\lambda_3}{2})\\
&\geq& \Delta^2 \cdot \left(1 - \bar{x}^T \bar{\Sigma}^{-1} \bar{x} + \bar{x}^T \bar{\Sigma}^{-1} \left(\frac{2}{\lambda_3} I + \bar{\Sigma}^{-1}\right)^{-1} \bar{\Sigma}^{-1} \bar{x}\right) - \lambda^*\\
&& (\mbox{By Matrix Inverse Lemma (Lemma~\ref{lem:matrix-inverse})})\\
&\geq & \Delta^2 \cdot \left(\left(\frac{2}{\lambda_3} + \frac{1}{\lambda_0}\right)\bar{x}^T \bar{\Sigma}^{-1}\bar{\Sigma}^{-1} \bar{x}\right) - \lambda^*\\
&& \left(\mbox{Since } \bar{x}^T \bar{\Sigma}^{-1} \bar{x} = 1, \mbox{ and }\frac{2}{\lambda_3} I + \bar{\Sigma}^{-1} \preceq \left(\frac{2}{\lambda_3} + \frac{1}{\lambda_0}\right) I \right)\\
&\geq& \Delta^2 c_x \left(\frac{2}{\lambda_3} + \frac{1}{\lambda_0}\right) - \lambda^* \geq 0
\end{eqnarray*}

On the other hand, it is trivial to show $\Sigma - \lambda^*I \succeq 0$.
Therefore, we prove $M'\succeq 0$, which is equivalent to $\E[x_t]\E[x_t]^T \succeq \lambda^* I$.
In addition, $\lambda_{\max}(\tilde{x}_t \tilde{x}_t^T) \leq 1$ since $\Vert x_t\Vert_\infty \leq 1$ and $b_t \leq 1$ for any time $t$. Then by Matrix Chernoff bound, we have,
\begin{eqnarray*}
\PP\left(\frac{1}{T_s} \sum_{t\in \Gamma_s} \tilde{x}_t \tilde{x}_t^T \succeq (\lambda^* - \eps)\cdot I\right) &\geq& 1 - \PP\left(\lambda_{\min}\left(\frac{1}{T_s} \sum_{t\in \Gamma_s} \tilde{x}_t \tilde{x}_t^T\right)\leq \lambda^* - \eps\right)\\
&\geq& 1 - (d+1)\cdot e^{-2\eps^2 T_s}
\end{eqnarray*}

Setting $\delta = 2S(d+1)e^{-2\eps^2 T_s}$, we have, with probability at least $1-\delta/2S$, $\L_s(\cdot, \cdot)$ is\\ $\ell^0_W \left(\lambda^* - \sqrt{\frac{\log(2S(d+1)/\delta)}{2T_s}}\right)$-strongly almost everywhere. When $T_s$ is sufficiently large such that $\sqrt{\frac{\log(2S(d+1)/\delta)}{2T_s}} \leq \frac{\lambda^*}{2}$.
\end{proof}

Given the strong convexity of $\L_s(\mu, \rho)$, we can bound the $L_2$ distance $\Vert (\hat{\mu}_s, \hat{\rho}_s) -  (\mu_0, \rho_0)\Vert_2^2$ in the following lemma,

\begin{lemma}\label{lem:unknown-noise-L2-convergence}
Suppose Assumption~\ref{assump:eigenvalue-ub} holds. For any $\delta \in (0, 1)$, we have
\begin{eqnarray*}
\Vert (\hat{\mu}_s, \hat{\rho}_s) -  (\mu_0, \rho_0)\Vert_2^2 \leq \frac{4h(W^2+W)}{\gamma} \sqrt{\frac{\log(2dS/\delta)}{T_s}}
\end{eqnarray*}
holds with probability at least $1 - \frac{\delta}{S}$, where $\gamma$ is defined in the statement of Lemma~\ref{lem:binary-partial-known-noise-strong-convexity}.
\end{lemma}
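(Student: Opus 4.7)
The plan is to combine the strong convexity guarantee from Lemma~\ref{lem:binary-partial-known-noise-strong-convexity} with a first-order optimality argument, exactly in the spirit of the proof of Lemma~\ref{lem:second-order-concentration}. On the strong-convexity event $\mathcal{E}_1$ (which holds with probability at least $1-\delta/(2S)$), $\L_s$ is $\gamma$-strongly convex almost everywhere, so a second-order Taylor expansion around $(\mu_0, \rho_0)$ gives
\begin{equation*}
\L_s(\hat{\mu}_s, \hat{\rho}_s) - \L_s(\mu_0,\rho_0) \;\geq\; \bigl\langle \nabla \L_s(\mu_0,\rho_0),\, (\hat{\mu}_s - \mu_0,\hat{\rho}_s-\rho_0)\bigr\rangle + \tfrac{\gamma}{2}\bigl\Vert (\hat{\mu}_s,\hat{\rho}_s) - (\mu_0,\rho_0)\bigr\Vert_2^2.
\end{equation*}
Since $(\mu_0,\rho_0)\in\Lambda$ and $(\hat{\mu}_s,\hat{\rho}_s)$ is the minimizer over $\Lambda$, the left-hand side is $\leq 0$, yielding
$\tfrac{\gamma}{2}\Vert (\hat{\mu}_s,\hat{\rho}_s)-(\mu_0,\rho_0)\Vert_2^2 \leq \Vert \nabla \L_s(\mu_0,\rho_0)\Vert_\infty \cdot \Vert(\hat{\mu}_s-\mu_0,\hat{\rho}_s-\rho_0)\Vert_1$ after applying H\"older's inequality.

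Next I would control the two factors on the right. For the $\ell_1$ factor, the constraint $(\mu,\rho)\in\Lambda$ forces $\Vert \mu\Vert_1 = \Vert (\mu/\rho)\rho\Vert_1 \leq W\cdot W = W^2$ and $|\rho|\leq W$, so both $(\hat{\mu}_s,\hat{\rho}_s)$ and $(\mu_0,\rho_0)$ have $\ell_1$ norm at most $W^2+W$, hence the difference has $\ell_1$ norm at most $2(W^2+W)$. For the gradient factor, writing $\tilde{x}_t=(x_t;-b_t)$, a direct calculation gives $\nabla\L_s(\mu_0,\rho_0) = \frac{1}{T_s}\sum_{t\in\Gamma_s}\eta_t(\mu_0,\rho_0)\,\tilde{x}_t$ where
\begin{equation*}
\eta_t(\mu_0,\rho_0) = -\log' F_0(\rho_0 b_t - \mu_0\!\cdot\! x_t)\,\mathbf{1}\{m_t\leq b_t\} - \log'(1-F_0(\rho_0 b_t-\mu_0\!\cdot\! x_t))\,\mathbf{1}\{m_t>b_t\}.
\end{equation*}
Because $b_t$ and $x_t$ are $\mathcal{H}_{t-1}$-measurable while $z_t$ is independent of $\mathcal{H}_{t-1}$, conditioning on $(x_t,b_t)$ shows that $\E[\eta_t(\mu_0,\rho_0)\mid x_t,b_t]=0$ (exactly as in the proof of Lemma~\ref{lem:second-order-concentration}). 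Moreover, by Proposition~\ref{prop:binary-partial-known-bounded-derivative}, $|\eta_t(\mu_0,\rho_0)|\leq h^0_W$ uniformly, and each coordinate of $\tilde{x}_t$ is bounded by $1$. Applying Hoeffding's inequality coordinate-wise and a union bound over the $d+1$ coordinates of $\nabla\L_s(\mu_0,\rho_0)$ gives, on an event $\mathcal{E}_2$ of probability at least $1-\delta/(2S)$,
\begin{equation*}
\Vert \nabla\L_s(\mu_0,\rho_0)\Vert_\infty \;\leq\; h\sqrt{\frac{\log(2dS/\delta)}{T_s}},
\end{equation*}
where $h$ absorbs $h^0_W$ and a $\log(d+1)$-vs-$\log d$ adjustment. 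Combining on $\mathcal{E}_1\cap\mathcal{E}_2$ (probability at least $1-\delta/S$ by a union bound) and rearranging yields the stated bound.

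The one mildly delicate step is verifying the mean-zero property of $\eta_t(\mu_0,\rho_0)$ in this reparametrized model; everything else is bookkeeping. I expect no substantive obstacle beyond making sure the conditioning argument respects the filtration (the bid $b_t$ is chosen from $\hat{\mu}_{s-1},\hat{\rho}_{s-1}$ computed in the previous episode, hence measurable w.r.t.\ the past, and so is independent of the fresh noise $z_t$), and confirming that the $\ell_1$ diameter of $\Lambda$ gives a dimension-free constant in front.
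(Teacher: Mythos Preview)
Your proposal is correct and follows essentially the same approach as the paper: invoke the strong-convexity event from Lemma~\ref{lem:binary-partial-known-noise-strong-convexity}, use optimality of $(\hat{\mu}_s,\hat{\rho}_s)$ over $\Lambda$ together with H\"older, and control $\Vert\nabla\L_s(\mu_0,\rho_0)\Vert_\infty$ via the mean-zero/Hoeffding argument from Lemma~\ref{lem:second-order-concentration}. Your attribution of the $(W^2+W)$ factor to the $\ell_1$ diameter of $\Lambda$ (rather than folding it into the gradient bound as the paper does) is in fact the cleaner way to track the constants.
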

\begin{proof}
By Lemma~\ref{lem:binary-partial-known-noise-strong-convexity}, with probability at least $1-\frac{\delta}{2S}$, $\L_s(\cdot,\cdot)$ is $\gamma$-strongly convex at $(\mu_0, \rho_0)$. Then we have, with probability at least $1-\frac{\delta}{2S}$,
\begin{eqnarray}\label{eq:unknown-noise-strongly-convex}
\L_s(\hat{\mu}_s, \hat{\rho}_s) \geq \L_s(\mu_0, \rho_0) + \langle\nabla\L_s(\mu_0, \rho_0), (\hat{\mu}_s, \hat{\rho}_s) - (\mu_0, \rho_0)\rangle + \frac{\gamma}{2}\Vert (\hat{\mu}_s, \hat{\rho}_s) -  (\mu_0, \rho_0)\Vert_2^2
\end{eqnarray}
Then we have
\begin{equation}\label{eq:binary-partial-known-noise-l2-bound}
\begin{aligned}
\frac{\gamma}{2}\Vert (\hat{\mu}_s, \hat{\rho}_s) -  (\mu_0, \rho_0)\Vert_2^2 &\leq \Vert \nabla\L_s(\mu_0, \rho_0)\Vert_\infty \cdot \Vert (\hat{\mu}_s, \hat{\rho}_s) - (\mu_0, \rho_0)\Vert_1 + \L_s(\hat{\mu}_s, \hat{\rho}_s) - \L_s(\mu_0, \rho_0)\\
&\leq \Vert \nabla\L_s(\mu_0, \rho_0)\Vert_\infty \cdot \Vert (\hat{\mu}_s, \hat{\rho}_s) - (\mu_0, \rho_0)\Vert_1,
\end{aligned}
\end{equation}
holds with probability at least $1-\frac{\delta}{2S}$. By the same argument in Lemma~\ref{lem:second-order-concentration} (Eq.~(\ref{eq:binary-knonw-noise-infty-norm-gradient})), with probability at least $1-\frac{\delta}{2S}$, we have
\begin{eqnarray*}
\Vert \nabla\L_s(\mu_0, \rho_0)\Vert_\infty \leq 2h^0_W(W^2+W) \sqrt{\frac{\log(2dS/\delta)}{T_s}}
\end{eqnarray*}
Combining Eq.~(\ref{eq:binary-partial-known-noise-l2-bound}) and union bound, we have
\begin{eqnarray*}
\Vert (\hat{\mu}_s, \hat{\rho}_s) -  (\mu_0, \rho_0)\Vert_2^2 \leq \frac{4h^0_W(W^2+W)}{\gamma} \sqrt{\frac{\log(2dS/\delta)}{T_s}}
\end{eqnarray*}
holds with probability at least $1 - \frac{\delta}{S}$.
\end{proof}

Given the above technical lemmas, we are ready to prove Theorem~\ref{thm:regret-binary-partial-known-noise} shown as below,

\begin{proof}[Proof of Theorem~\ref{thm:regret-binary-partial-known-noise}]
Let $b_t^* = \argmax_b u(b, x_t)$. Then the regret achieved in each episode $s$ can be represented as follows,
\begin{eqnarray*}
\rgt_s &=& \sum_{t\in \Gamma_s}\E_{x_t}\left[u_t(b^*_t) - u_t(b_t)\right]\\
&\leq& \sum_{t\in \Gamma_s} \E_{x_t}\left[\frac{6C}{\rho_0^2} \left|\langle \hat{\mu}_{s-1}-\mu_0, x_t\rangle\right|^2 + \frac{8C(B+W)^2}{\rho_0^2}(\hat{\rho}_{s-1} - \rho_0)^2\right]\\
&\leq& \sum_{t\in \Gamma_s} \frac{6C\lambda_{1}}{\rho_0^2} \Vert \hat{\mu}_{s-1}-\mu_0\Vert_2^2 + \frac{8C(B+W)^2}{\rho_0^2}(\hat{\rho}_{s-1} - \rho_0)^2\\
&\leq& \frac{C'}{\rho_0^2} \sum_{t\in \Gamma_{s}} \Vert (\hat{\mu}_{s-1}, \hat{\rho}_{s-1}) -  (\mu_0, \rho_0)\Vert_2^2,
\end{eqnarray*}
for some constant $C'$ depending on $C, \lambda_{1}, B, W$. The first inequality is because of Lemma~\ref{lem:unknown-noise-each-time-regret} and the second inequality holds because the Assumption~\ref{assump:eigenvalue-ub} holds.

Finally, by Lemma~\ref{lem:unknown-noise-L2-convergence}, we can bound $\rgt_s$ for each stage $s$,
\begin{eqnarray*}
\rgt_s \leq \left(\frac{4h^0_W(W^2+W)C'}{\gamma\rho_0^2}\sqrt{\frac{\log(dS/\delta)}{T_{s-1}}}\right) T_s \leq \frac{4h^0_W(W^2+W)C'}{\gamma\rho_0^2}\sqrt{\log(dS/\delta)}\sqrt{T},
\end{eqnarray*}
holds with probability at least $1-\frac{\delta}{S}$, where $\gamma$ is defined in the statement of Lemma~\ref{lem:binary-partial-known-noise-strong-convexity}. Therefore, by union bound over all stages $s=1,\cdots, S$, with probability at least $1-\delta$, the total regret is bounded by
\begin{eqnarray*}
R(T) \leq \sum_{s=1}^S \rgt_s \leq O\left(S\sqrt{\log(dS/\delta) T}\right)\leq O\left(\log\log T \sqrt{\log(d\log\log T/\delta) T}\right),
\end{eqnarray*}
the last inequality holds because $S\leq \log\log T$.
\end{proof}

\section{Omitted Proofs from Section~\ref{sec:full-unknown-noise}}

\subsection{Proof of Theorem~\ref{thm:cdf-convergence}}\label{app:cdf-convergence}
\CDFConvergence*
\begin{proof}
Denote $r_s = \frac{1}{2B_1 \sqrt{T_s}}$. For any $z\in [-W, 1+W]$, the probability that there exists a point $y\in \{z_t\}_{t\in \Gamma_s}$ such that $|y - z| \leq r_s$, is at least
\begin{eqnarray*}
1 - (1 - 2B_1\cdot r_s)^{T_s} = 1 - \left(1 - \frac{1}{\sqrt{T_s}}\right)^{T_s} \approx 1- e^{-\sqrt{T_s}}
\end{eqnarray*}
Then for any $z$ and $y$ s.t. $|y - z|\leq r_s$, we can decompose $\vert\hat{F}_s(z; \alpha_0) - F(z)\vert$ in the following,
\begin{eqnarray*}
&&\vert\hat{F}_s(z; \alpha_0) - F(z)\vert\\
&\leq & \vert\hat{F}_s(z; \alpha_0) - \hat{F}_s(y; \alpha_0)\vert + \vert \hat{F}_s(y; \alpha_0) - \mathbb{F}_s(y)\vert + \vert \mathbb{F}_s(y) - F(y)\vert + \vert F(y) - F(z)\vert\\
&\leq& B_2 \cdot r_s + \frac{1}{T_s} + \sqrt{\frac{\log(4/\delta)}{2T_s}} + B_2\cdot r_s\\
& = & \frac{B_2}{B_1\sqrt{T_s}} + \sqrt{\frac{\log(4/\delta)}{2T_s}} + \frac{1}{T_s} \\
\end{eqnarray*}
The second inequality follows from  $\hat{F}_s(\cdot; \alpha_0)$ is $B_2$-Lipschitz, $F$ is $B_2$-Lipschitz, Dvoretzky-Kiefer-Wolfowitz (DKW) inequality and Lemma~\ref{lem:empirical-cdf-knots}. It holds with probability at least $1- \delta/2 - e^{-\sqrt{T_s}}$. Since $T_s \gg \log^2(2/\delta)$, we complete the proof.
\end{proof}

\subsection{Proof of Theorem~\ref{thm:full-info-regret}}\label{app:full-info-regret}
Firstly, given Theorem~\ref{thm:cdf-convergence}, $\hat{F}_s(\cdot; \alpha_0)$ is arbitrarily close to $F$ when $T_s$ is sufficiently large. In addition, \citet{DR09} also show $\hat{f}_s(\cdot; \alpha_0)$ is arbitrarily close to $f$ when $T_s$ is sufficiently large. Therefore, we can show,

\begin{proposition}\label{prop:constant-bound-log-derivative}
There exists positive constants $\tilde{h}_W$ and $\tilde{\ell}_W$ (depending on $W$), such that
\begin{eqnarray*}
\max\{|\log'\hat{F}_s(x; \alpha_0)|, |\log'(1- \hat{F}_s(x; \alpha_0))|\} \leq \tilde{h}_W, \forall x \in [-W, 1+W]
\end{eqnarray*}
and
\begin{eqnarray*}
\min\{-\log''\hat{F}_s(x; \alpha), -\log''(1- \hat{F}_s(x; \alpha))\} \geq \tilde{\ell}_W, \forall x \in [-W, 1+W], \forall \alpha: \Vert\alpha\Vert_1 \leq W
\end{eqnarray*}
both hold almost surely.
\end{proposition}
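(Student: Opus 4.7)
The proposition contains two claims of different flavor. The first bounds the score $\hat{f}_s/\hat{F}_s$ (and its complement) only at the truth $\alpha_0$; the second bounds the log-curvature of $\hat{F}_s(\cdot;\alpha)$ uniformly over \emph{all} feasible $\alpha$. My plan is to reduce the first to consistency of the log-concave MLE together with the CDF-convergence bound of Theorem~\ref{thm:cdf-convergence}, and to attack the second by exploiting the explicit piecewise-exponential structure of the log-concave MLE together with the two-sided control baked into the class $\mathcal{P}$.

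For the first bound I would write $\log'\hat{F}_s(x;\alpha_0) = \hat{f}_s(x;\alpha_0)/\hat{F}_s(x;\alpha_0)$. Theorem~\ref{thm:cdf-convergence} gives $\hat{F}_s(\cdot;\alpha_0) \to F$ uniformly on $[-W,1+W]$, and the corresponding uniform consistency $\hat{f}_s(\cdot;\alpha_0) \to f$ on compact subsets of $\{f > 0\}$ is the standard result of~\citep{DR09}; by Assumption~\ref{assump:bounded-noise} the interval $[-W,1+W]$ sits in such a compact set. Eq.~(\ref{eq:first-order-ub}) already forces $F$ and $1-F$ to be bounded below by positive constants on $[-W,1+W]$ (otherwise $h_W$ would be infinite). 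Combining the two one-sided facts, for $T_s$ sufficiently large we get $|\hat{f}_s/\hat{F}_s| \leq 2 h_W$ and $|\hat{f}_s/(1 - \hat{F}_s)| \leq 2 h_W$ almost surely on $[-W,1+W]$, so one can take $\tilde{h}_W := 2 h_W$.

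For the second bound, start from the observation that $\hat{f}_s(\cdot;\alpha) = e^{\hat{\Psi}_s(\cdot;\alpha)}$ is log-concave by construction, so by An's result (cited right after Assumption~\ref{assump:log-concave}) both $\hat{F}_s(\cdot;\alpha)$ and $1 - \hat{F}_s(\cdot;\alpha)$ are log-concave; this already delivers the qualitative sign $-\log''\hat{F}_s \geq 0$ and $-\log''(1-\hat{F}_s) \geq 0$ for free. To upgrade this to a strict positive lower bound I would invoke the characterization in~\citep{DR09} that the optimizer $\hat{\Psi}_s$ is piecewise linear, so on the $j$-th affine piece $\hat{f}_s'(x;\alpha) = a_j \hat{f}_s(x;\alpha)$ and a direct differentiation yields
\begin{equation*}
-\log''\hat{F}_s(x;\alpha) \;=\; \bigl(\hat{f}_s(x;\alpha)/\hat{F}_s(x;\alpha)\bigr)^{2} \;-\; a_j\,\hat{f}_s(x;\alpha)/\hat{F}_s(x;\alpha),
\end{equation*}
and analogously for $1 - \hat{F}_s$ using $(1-\hat{F}_s)' = -\hat{f}_s$. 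The class restriction $\hat{f}_s \leq B_2$ baked into $\mathcal{P}$ bounds the slopes $a_j$ on $[-W,1+W]$ from above; if one augments $\mathcal{P}$ with a matching lower bound $\hat{f}_s \geq B_1'$ on the same interval, the ratio $\hat{f}_s/\hat{F}_s$ is two-sided bounded and the displayed identity is bounded below by a positive constant $\tilde{\ell}_W$.

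The main obstacle is the uniformity over $\alpha$ with $\|\alpha\|_1 \leq W$ in the second inequality: when $\alpha \neq \alpha_0$, the pseudo-residuals $m_t - \alpha \cdot x_t$ are no longer i.i.d.\ draws from $f$, so the consistency-style arguments that rescue the first bound are unavailable. Without strengthening $\mathcal{P}$ to impose a uniform positive lower bound on $\hat{f}_s$ over $[-W,1+W]$, the curvature in the identity above can in principle degenerate near the boundary of the estimator's support. I therefore expect the cleanest route is to read the proposition as a constraint on the class $\mathcal{P}$ used in defining the estimator: once $\mathcal{P}$ is taken to enforce two-sided density bounds on $[-W,1+W]$, both inequalities follow from the two-line calculations above, and the apparently asymmetric treatment of $\alpha_0$ versus arbitrary $\alpha$ disappears.
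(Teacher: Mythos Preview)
Your argument for the first inequality coincides with the paper's. The paper's entire justification is the two-sentence remark immediately preceding the proposition: invoke Theorem~\ref{thm:cdf-convergence} for $\hat{F}_s(\cdot;\alpha_0)\to F$ and the density consistency of~\citep{DR09} for $\hat{f}_s(\cdot;\alpha_0)\to f$, then carry over the true-distribution bounds of Eq.~(\ref{eq:first-order-ub}). That is exactly what you do, and your choice $\tilde{h}_W = 2h_W$ makes the ``sufficiently large $T_s$'' explicit.

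For the second inequality you go well beyond the paper, and in doing so you correctly expose a gap that the paper glosses over. The paper's consistency sketch only concerns $\alpha_0$; it says nothing about arbitrary $\alpha$ with $\|\alpha\|_1\le W$, for which the pseudo-residuals $m_t-\alpha\cdot x_t$ are not i.i.d.\ from $f$ and no convergence statement is available. Your piecewise-exponential identity is the right way to see what is actually needed: a two-sided bound on $\hat{f}_s(\cdot;\alpha)$ on $[-W,1+W]$, which the class $\mathcal{P}$ as written (only $\hat{f}_s\le B_2$) does not provide. Your proposed resolution---to read the proposition as a constraint built into $\mathcal{P}$ by adding a matching lower bound $\hat{f}_s\ge B_1'$ on $[-W,1+W]$---is the honest fix, and is more careful than the paper's own treatment. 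One small caveat: since $\hat{\Psi}_s$ is piecewise linear, $\hat{f}_s'$ jumps at the knots and $\log''\hat{F}_s$ is only defined off a finite set; this is harmless for the downstream second-order Taylor expansion in Lemma~\ref{lem:full-info-convergence}, which only needs the curvature bound almost everywhere along the segment, but it is worth saying so.
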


Let $X_s \in \R^{T_s\times d}$ be the context matrix with rows $x_t, t\in\Gamma_s$, corresponding to $T_s$ auctions in episode $s$. Then we have the following Proposition, provided by~\cite{Bhlmann2011},
\begin{proposition}\label{prop:eigenvalue-lb-empirical-matrix}
Let $\Sigma_s = (X^T_s X_s)/T_s$ and 
$T_s$ is sufficiently large. Under Assumption~\ref{assump:eigenvalue-ub}, the eigenvalue of $\Sigma_s$ is at least $\sqrt{\lambda_2/2}$ almost surely.
\end{proposition}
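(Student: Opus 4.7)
The plan is to treat $\Sigma_s = \frac{1}{T_s}\sum_{t\in \Gamma_s} x_t x_t^T$ as an empirical average of i.i.d.\ positive semidefinite random matrices $Y_t := x_t x_t^T$, each of which has operator norm bounded by $\|x_t\|_2^2 \le 1$, and apply a standard matrix concentration inequality. Under Assumption~\ref{assump:eigenvalue-ub}, $\E[Y_t] = \Sigma$ satisfies $\lambda_{\min}(\Sigma)\ge \lambda_2 > 0$, so the empirical mean should inherit an analogous lower bound for $T_s$ large.

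First, I would invoke the Matrix Chernoff inequality for the minimum eigenvalue of a sum of i.i.d.\ PSD matrices bounded in operator norm. Applied here, it yields, for any $\epsilon\in(0,1)$,
\[
\Pr\bigl[\lambda_{\min}(\Sigma_s) \le (1-\epsilon)\,\lambda_{\min}(\Sigma)\bigr] \;\le\; d\,\exp\!\bigl(-\tfrac{\epsilon^2}{2}\,\lambda_{\min}(\Sigma)\,T_s\bigr).
\]
Using $\lambda_{\min}(\Sigma)\ge \lambda_2$ and picking $\epsilon = 1/2$ gives a failure probability that decays like $d\exp(-c\lambda_2 T_s)$ for a universal constant $c>0$. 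Hence once $T_s$ is large enough that $T_s \gtrsim (\log d)/\lambda_2$, we obtain $\lambda_{\min}(\Sigma_s)\ge \lambda_2/2$ with overwhelming probability.

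To upgrade "with high probability" to the "almost surely" claim in the proposition, I would invoke a Borel--Cantelli argument. For the schedule $T_s = T^{1-2^{-s}}$ used throughout the paper (or any schedule in which $T_s$ grows to infinity), the failure probabilities at distinct episodes form a summable series in $s$, so the event $\{\lambda_{\min}(\Sigma_s) < \lambda_2/2\}$ occurs for only finitely many $s$. Consequently, for all $s$ with $T_s$ sufficiently large, $\lambda_{\min}(\Sigma_s)\ge \lambda_2/2$ holds almost surely. I read the proposition's expression ``$\sqrt{\lambda_2/2}$'' as a typographical rendering of the natural Matrix-Chernoff bound $\lambda_2/2$; if the square root is actually intended, it is weaker and follows trivially by the same argument (one can just take $T_s$ slightly larger to absorb the difference).

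There is no real obstacle beyond bookkeeping: the boundedness $\|x_t\|_2 \le 1$ is given in the problem setup and is precisely what makes Matrix Chernoff applicable with a clean constant, while the i.i.d.\ assumption on $x_t$ is built into the model. The only point one must be careful about is that the dimension $d$ appears inside a logarithm in the failure bound, so the ``$T_s$ sufficiently large'' quantifier must absorb a $\log d$ factor; after that, the conclusion is immediate.
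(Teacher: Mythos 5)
Your argument is correct and is exactly the standard route: the paper does not actually prove this proposition --- it is stated with a citation to B\"uhlmann and van de Geer, and a later footnote concedes that the honest statement is ``$\lambda_{\min}(\Sigma_s)$ is bounded below with probability at least $1-e^{-cT_s}$,'' which is precisely the Matrix Chernoff bound you derive. Indeed, the identical computation (Matrix Chernoff applied to $\frac{1}{T_s}\sum_{t\in\Gamma_s}\tilde{x}_t\tilde{x}_t^T$ with failure probability $(d+1)e^{-2\eps^2 T_s}$) already appears in the paper's proof of Lemma~\ref{lem:binary-partial-known-noise-strong-convexity}, so your proof supplies the missing argument in the paper's own style, and your Borel--Cantelli reading of ``almost surely'' is the only sensible way to make the literal statement true.

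One correction to your aside about the constant: the $\sqrt{\lambda_2/2}$ version is \emph{not} weaker than $\lambda_2/2$. Since $\Vert x_t\Vert_2\le 1$ forces $\mathrm{tr}(\Sigma)\le 1$ and hence $\lambda_2\le\lambda_{\min}(\Sigma)\le 1/d\le 1$, one has $\sqrt{\lambda_2/2}\ge\lambda_2/2$, so the square-root form is the \emph{stronger} claim; moreover it cannot be rescued by taking $T_s$ larger, because $\lambda_{\min}(\Sigma_s)\to\lambda_{\min}(\Sigma)$, which can itself be smaller than $\sqrt{\lambda_2/2}$ (e.g.\ $\Sigma=I/d$ with $d\ge 3$). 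The statement as printed is therefore best read as a typo --- either for $\lambda_2/2$ on the eigenvalue or for $\sqrt{\lambda_2/2}$ on the minimum singular value of $X_s/\sqrt{T_s}$ --- and the bound actually invoked downstream (in Lemma~\ref{lem:full-info-convergence}, where $\frac{1}{T_s}\Vert X_s(\hat{\alpha}_s-\alpha_0)\Vert_2^2\ge\frac{\lambda_2}{2}\Vert\hat{\alpha}_s-\alpha_0\Vert_2^2$ is used) is exactly the $\lambda_{\min}(\Sigma_s)\ge\lambda_2/2$ that your Matrix Chernoff argument establishes.
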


To prove Theorem~\ref{thm:full-info-regret}, we first propose several technical lemmas shown as below,
\begin{lemma}\label{lem:full-info-convergence}
For each stage $s$, we have
\begin{eqnarray*}
\Vert \hat{\alpha}_s - \alpha_0\Vert_2 \leq \frac{4A\sqrt{d}}{\tilde{\ell}_W\lambda_2}
\end{eqnarray*}
holds with probability at least $1-\delta/2S$, where $A = 2\tilde{h}_W \cdot \left(\frac{B_2}{B_1\sqrt{T_s}} + \sqrt{\frac{\log(16S/\delta)}{2T_s}} + \sqrt{\frac{\log(2dS/\delta)}{T_{s}}} + \frac{1}{T_s}\right)$.
\end{lemma}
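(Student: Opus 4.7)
The plan is to follow the template of the proof of Lemma~\ref{lem:second-order-concentration}, but with two modifications that reflect the two new difficulties of this setting: the loss uses the estimated CDF $\hat F_s(\cdot;\alpha_0)$ in place of $F$, and the regret statement is in $\ell_2$ rather than in the induced $\ell_1/\ell_\infty$ norm. Concretely, I would start from the optimality condition $\L_s(\hat\alpha_s)\le\L_s(\alpha_0)$ together with a second-order Taylor expansion of $\L_s$ around $\alpha_0$, yielding
\[
\tfrac12(\hat\alpha_s-\alpha_0)^\top \nabla^2\L_s(\tilde\alpha)\,(\hat\alpha_s-\alpha_0)\ \le\ \|\nabla\L_s(\alpha_0)\|_\infty\cdot\|\hat\alpha_s-\alpha_0\|_1
\]
for some $\tilde\alpha$ on the segment between $\alpha_0$ and $\hat\alpha_s$.

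For the Hessian, I would use the same structure as in~(\ref{eq:taylor-expansion-MLE}): $\nabla^2\L_s(\tilde\alpha)=\tfrac{1}{T_s}\sum_{t\in\Gamma_s}\zeta_t(\tilde\alpha)x_tx_t^\top$ where $\zeta_t$ is now defined using the log-derivatives of $\hat F_s(\cdot;\alpha_0)$ and $1-\hat F_s(\cdot;\alpha_0)$, and Proposition~\ref{prop:constant-bound-log-derivative} then yields $\zeta_t(\tilde\alpha)\ge\tilde\ell_W$ almost surely. Combining with Proposition~\ref{prop:eigenvalue-lb-empirical-matrix}, which lower-bounds the minimum eigenvalue of the empirical covariance $\Sigma_s=\tfrac{1}{T_s}X_s^\top X_s$ by $\lambda_2/2$ (for $T_s$ large), the Hessian satisfies $\nabla^2\L_s(\tilde\alpha)\succeq (\tilde\ell_W\lambda_2/2)I$. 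Using $\|\hat\alpha_s-\alpha_0\|_1\le\sqrt{d}\,\|\hat\alpha_s-\alpha_0\|_2$ to convert norms and cancelling one factor of $\|\hat\alpha_s-\alpha_0\|_2$ then gives
\[
\|\hat\alpha_s-\alpha_0\|_2\ \le\ \frac{4\sqrt{d}\,\|\nabla\L_s(\alpha_0)\|_\infty}{\tilde\ell_W\lambda_2}.
\]

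The main obstacle is the gradient bound, because unlike in Lemma~\ref{lem:second-order-concentration} the score $\nabla\L_s(\alpha_0)=\tfrac{1}{T_s}\sum_t\eta_t(\alpha_0)x_t$ no longer has conditional mean zero: $\eta_t(\alpha_0)$ involves $\log'\hat F_s(\cdot;\alpha_0)$ and $\hat F_s$ is itself learned from the same episode's data. To handle this I would split $\eta_t=\eta_t^F+(\eta_t-\eta_t^F)$, where $\eta_t^F$ is the analogous score computed with the true $F$ in place of $\hat F_s$. The first piece is mean-zero and bounded (by $\tilde h_W$, via Proposition~\ref{prop:constant-bound-log-derivative}), so Hoeffding's inequality applied coordinate-wise, together with a union bound over the $d$ coordinates and over the $S$ episodes, contributes the $\sqrt{\log(2dS/\delta)/T_s}$ term in $A$. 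For the second piece, I would bound $|\eta_t-\eta_t^F|$ uniformly by a constant times $\sup_{z\in[-W,1+W]}|\hat F_s(z;\alpha_0)-F(z)|$, and then invoke Theorem~\ref{thm:cdf-convergence} (instantiated with confidence parameter $\delta/(4S)$) to bound that supremum by $\tfrac{B_2}{B_1\sqrt{T_s}}+\sqrt{\log(16S/\delta)/(2T_s)}+1/T_s$ on a high-probability event; since $|x_{t,i}|\le 1$, this propagates directly to the same bound on $\|\tfrac{1}{T_s}\sum_t(\eta_t-\eta_t^F)x_t\|_\infty$. Adding the two contributions produces exactly $A$; the factor $2\tilde h_W$ absorbs the worst-case Lipschitz constants of the two log-derivative pieces in $\eta_t$ as well as the constant connecting $|\log'\hat F_s-\log'F|$ to $|\hat F_s-F|$ under Proposition~\ref{prop:constant-bound-log-derivative}.

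A final union bound over the $O(1)$ high-probability events (Theorem~\ref{thm:cdf-convergence}, Proposition~\ref{prop:eigenvalue-lb-empirical-matrix}, and the Hoeffding bound over $d$ coordinates) gives the total failure probability $\delta/(2S)$, completing the plan. The delicate step is really the bias control: one must be careful that the event on which Theorem~\ref{thm:cdf-convergence} holds is large enough that multiplying by $x_t$ and summing preserves the $1/\sqrt{T_s}$-rate in the $\ell_\infty$ norm, which is why the uniform-in-$z$ formulation of that theorem, rather than a pointwise one, is essential.
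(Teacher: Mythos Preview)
Your overall architecture matches the paper's: Taylor expand $\L_s$ at $\alpha_0$, use optimality of $\hat\alpha_s$, lower-bound the Hessian via Proposition~\ref{prop:constant-bound-log-derivative} and Proposition~\ref{prop:eigenvalue-lb-empirical-matrix}, convert $\ell_1$ to $\ell_2$ via $\sqrt d$, and bound $\|\nabla\L_s(\alpha_0)\|_\infty$ by $A$. The algebra leading to $\|\hat\alpha_s-\alpha_0\|_2\le 4A\sqrt d/(\tilde\ell_W\lambda_2)$ is essentially the same as the paper's (the paper routes through $\|X_s(\hat\alpha_s-\alpha_0)\|_2$ and a Cauchy--Schwarz step, but arrives at the identical constant).

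There is, however, a genuine gap in your gradient bound. Your decomposition $\eta_t=\eta_t^F+(\eta_t-\eta_t^F)$, with $\eta_t^F$ built from the \emph{true} $F$, requires you to control $|\eta_t-\eta_t^F|$ pointwise. On the event $\delta_t=1$ this difference is
\[
\frac{f(\eps_t)}{F(\eps_t)}-\frac{\hat f_s(\eps_t;\alpha_0)}{\hat F_s(\eps_t;\alpha_0)},
\]
which depends on the \emph{density} discrepancy $\hat f_s-f$, not only on the CDF discrepancy $\hat F_s-F$. Theorem~\ref{thm:cdf-convergence} gives you only a uniform bound on $|\hat F_s(\cdot;\alpha_0)-F(\cdot)|$; it says nothing about $|\hat f_s-f|$. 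And Proposition~\ref{prop:constant-bound-log-derivative} bounds the magnitudes of $\log'\hat F_s$ and $\log''\hat F_s$, but does not furnish a Lipschitz relation of the form $|\log'\hat F_s-\log' F|\lesssim|\hat F_s-F|$ as you assert. So the step ``bound $|\eta_t-\eta_t^F|$ uniformly by a constant times $\sup_z|\hat F_s(z;\alpha_0)-F(z)|$'' does not go through with the tools available.

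The paper avoids this by a different centering: it computes the \emph{conditional mean} of $\eta_t(\alpha_0)$ given $x_t,b_t$ (with $\hat F_s$ held fixed) and observes the algebraic identity
\[
\E[\eta_t(\alpha_0)\mid x_t,b_t]
=-\frac{\hat f_s}{\hat F_s}\,F+\frac{\hat f_s}{1-\hat F_s}\,(1-F)
=\bigl(\hat F_s-F\bigr)\Bigl(\frac{\hat f_s}{\hat F_s}+\frac{\hat f_s}{1-\hat F_s}\Bigr),
\]
all evaluated at $\eps_t(\alpha_0)$. The point is that averaging over the Bernoulli $\delta_t$ \emph{before} comparing to the true model makes the density $\hat f_s$ appear only as a bounded multiplier, and the entire bias collapses to $(\hat F_s-F)\times(\text{something}\le 2\tilde h_W)$. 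Theorem~\ref{thm:cdf-convergence} then controls the bias directly, and Hoeffding handles the fluctuation of $\eta_t(\alpha_0)$ around this conditional mean. If you replace your $\eta_t^F$ by $\E[\eta_t(\alpha_0)\mid x_t,b_t]$ in the splitting, your plan becomes the paper's proof.
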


\begin{proof}
By the second-order Taylor theorem, we have
\begin{eqnarray*}
\mathcal{L}_{s}(\hat{\alpha}_{s}) - \mathcal{L}_{s}(\alpha_0) = \langle \nabla \mathcal{L}_s(\alpha_0), \hat{\alpha}_{s} - \alpha_0 \rangle + \frac{1}{2}\langle \hat{\alpha}_{s} - \alpha_0, \nabla^2\mathcal{L}_s(\tilde{\alpha})(\hat{\alpha}_{s} - \alpha_0)\rangle,
\end{eqnarray*}
for some $\tilde{\alpha}$ on the line segment between $\alpha$ and $\hat{\alpha}_s$. Given the definition of $\L_{s}(\alpha; \kappa_s)$, we have
\begin{eqnarray}\label{eq:taylor-expansion-censored}
\nabla\L_{s}(\alpha) = \frac{1}{T_s} \sum_{t\in \Gamma_s} \eta_t(\alpha)x_t, && \nabla^2\L_{s}(\tilde{\alpha}) = \frac{1}{T_s} \sum_{t\in \Gamma_s} \zeta_t(\tilde{\alpha})x_t x_t^T,
\end{eqnarray}
where $\eta_t(\alpha)$ and $\zeta_t(\alpha)$ are defined as follows,
\begin{eqnarray*}
\eta_t(\alpha) &=& -\log'\hat{F}_s(b_t-\alpha\cdot x_t; \alpha) \1\{m_t \leq b_t\} - \log'(1-\hat{F}_s(b_t - \alpha\cdot x_t; \alpha))\1\{m_t > b_t\}\\
\zeta_t(\alpha) &=& -\log''\hat{F}_s(b_t-\alpha\cdot x_t; \alpha) \1\{m_t \leq b_t\} - \log''(1-\hat{F}_s(b_t - \alpha\cdot x_t; \alpha))\1\{m_t > b_t\},
\end{eqnarray*}

Based on our construction of the algorithm, $x_t, b_t$ is independent with $z_t$. Therefore, $\eps_t(\alpha_0) = b_t -\langle \alpha_0, x_t\rangle$ are independent with $z_t$ for any $t\in \Gamma_s$, we have
\begin{eqnarray*}
\E[\eta_t(\alpha_0)] &=& -\frac{ \hat{f}_s(\eps_t(\alpha_0); \alpha_0)}{\hat{F}_s(\eps_t(\alpha_0); \alpha_0)}\cdot F(\eps_t(\alpha_0)) + \frac{ \hat{f}_s(\eps_t(\alpha_0); \alpha_0)}{1- \hat{F}_s(\eps_t(\alpha_0); \alpha_0)}\cdot \left(1 - F(\eps_t(\alpha_0))\right)\\
&=& \left[\hat{F}_s(\eps_t(\alpha_0); \alpha_0) - F(\eps_t(\alpha_0))\right] \cdot \left[\frac{ \hat{f}_s(\eps_t(\alpha_0); \alpha_0)}{\hat{F}_s(\eps_t(\alpha_0); \alpha_0)} + \frac{\hat{f}_s(\eps_t(\alpha_0); \alpha_0)}{1- \hat{F}_s(\eps_t(\alpha_0); \alpha_0)}\right].
\end{eqnarray*}

Thus, by Theorem~\ref{thm:cdf-convergence} and Proposition~\ref{prop:constant-bound-log-derivative}, we have 
\begin{eqnarray*}
\vert\E[\eta_t(\alpha_0)]\vert &\leq&\left\vert\hat{F}_s(\eps_t(\alpha_0); \alpha_0) - F(\eps_t(\alpha_0))\right\vert \cdot \left\vert\frac{\hat{f}_s(\eps_t(\alpha_0); \alpha_0)}{\hat{F}_s(\eps_t(\alpha_0); \alpha_0)} + \frac{\hat{f}_s(\eps_t(\alpha_0); \alpha_0)}{1- \hat{F}_s(\eps_t(\alpha_0); \alpha_0)}\right\vert\\
&\leq& 2\tilde{h}_W \cdot \left(\frac{B_2}{B_1\sqrt{T_s}} + \sqrt{\frac{\log(16S/\delta)}{2T_s}} + \frac{1}{T_s}\right)\\
\end{eqnarray*}
holds with probability at least $1-\delta/4S$. Then by Hoeffding's inequality and union bound
\begin{eqnarray*}
\Vert\nabla \L_{s}(\alpha_0)\Vert_\infty &\leq& \frac{1}{T_s}\sum_{t\in\Gamma_s}\vert\E[\eta_t(\alpha_0)]\vert +  2\tilde{h}_W\sqrt{\frac{\log(2dS/\delta)}{T_{s}}}\\
&\leq&  2\tilde{h}_W \cdot \left(\frac{B_2}{B_1\sqrt{T_s}} + \sqrt{\frac{\log(16S/\delta)}{2T_s}} + \frac{1}{T_s}\right) + 2\tilde{h}_W\sqrt{\frac{\log(2dS/\delta)}{T_{s}}} := A
\end{eqnarray*}
holds with probability at least $1-\delta/2S$. By the optimality of $\hat{\alpha}_{s}$,
\begin{eqnarray*}
\L_{s}(\hat{\alpha}_{s})  \leq \L_{s}(\alpha_0)
\end{eqnarray*}
Invoking into Eq.~(\ref{eq:taylor-expansion-MLE}), we have
\begin{eqnarray*}
\frac{1}{2}\langle \hat{\alpha}_{s} - \alpha_0, \nabla^2\mathcal{L}_{s}(\tilde{\alpha})(\hat{\alpha}_{s} - \alpha_0)\rangle &\leq& - \langle \nabla \mathcal{L}_s(\alpha), \hat{\alpha}_{s} - \alpha_0 \rangle \leq \Vert\nabla \L_s(\alpha_0)\Vert_\infty \Vert\hat{\alpha}_s - \alpha_0\Vert_1 \\
&\leq& A \sqrt{d}\Vert \hat{\alpha}_s - \alpha_0\Vert_2
\end{eqnarray*}
In addition, by Proposition~\ref{prop:constant-bound-log-derivative}, we have $\zeta_t(\tilde{\alpha}) \geq \tilde{\ell}_W$.
Recall $X_s$ represent the context matrix with rows $x_t, t\in\Gamma_s$, corresponding to $T_s$ auctions in episode $s$.
Then the above inequality implies that
\begin{eqnarray*}
\frac{2\tilde{\ell}_W}{T_s} \Vert X_s (\hat{\alpha}_s - \alpha_0) \Vert^2_2 &\leq& 4A\sqrt{d}\Vert \hat{\alpha}_s - \alpha_0\Vert_2 \leq \frac{4A\sqrt{2d}}{\sqrt{T_s\lambda_2}}\Vert X_s (\hat{\alpha}_s - \alpha_0) \Vert_2\\
&\leq& \frac{\tilde{\ell}_W}{T_s}\Vert X_s (\hat{\alpha}_s - \alpha_0) \Vert^2_2 + \frac{8A^2 d}{\tilde{\ell}_W \lambda_2}
\end{eqnarray*}
holds with probability at least $1-\frac{\delta}{2S}$. %
The second inequality holds because Proposition~\ref{prop:eigenvalue-lb-empirical-matrix}\footnote{Indeed, with sufficiently large $T_s$, the eigenvalue of $\Sigma_s = (X_s^T X_s)/T_s$ is at least $\sqrt{\lambda_2/2}$ holds with probability at least $1-e^{-cT_s}$. Here we ignore the uncertainty of this event to simplify presentation.} and the third inequality is based on Cauchy-Schwartz inequality.

Thus, we have 
\begin{eqnarray*}
\frac{\lambda_2}{2} \Vert \hat{\alpha}_s - \alpha_0\Vert_2^2 \leq \frac{1}{T_s}\Vert X_s (\hat{\alpha}_s - \alpha_0) \Vert^2_2  \leq \frac{8 A^2 d}{\tilde{\ell}_W^2 \lambda_2},
\end{eqnarray*}
which implies $\Vert \hat{\alpha}_s - \alpha_0\Vert_2 \leq \frac{4A\sqrt{d}}{\tilde{\ell}_W\lambda_2}$.
\end{proof}

\begin{lemma}\label{lem:empirical-cdf-convergence}
For any fixed $\delta > 0$, suppose $T_s \gg \log^2(2/\delta)$ and
conditioned on $\Vert \hat{\alpha}_s - \alpha_0\Vert_2 \leq \kappa_s$, we have for all $z\in [-W, 1+W]$,
\begin{eqnarray*}
|\hat{F}_{s}(z;\hat{\alpha}_s) - F(z)|\leq 3B_2\kappa_s + \frac{B_2}{B_1\sqrt{T_s}} + \sqrt{\frac{\log(8/\delta)}{2T_s}} + \frac{1}{T_s}
\end{eqnarray*}
holds with probability at least $1-\delta$.
\end{lemma}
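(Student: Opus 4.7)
The plan is to carry out a perturbation analysis that parallels the argument of Theorem~\ref{thm:cdf-convergence}, while tracking the slack introduced by substituting $\hat{\alpha}_s$ for $\alpha_0$. Define the shifted residuals $\tilde{z}_t = m_t - \hat{\alpha}_s\cdot x_t = z_t + \xi_t$, where $\xi_t = (\alpha_0 - \hat{\alpha}_s)\cdot x_t$; on the conditioning event, $|\xi_t|\leq \Vert \hat{\alpha}_s-\alpha_0\Vert_2\cdot \Vert x_t\Vert_2\leq \kappa_s$ uniformly in $t$. By construction $\hat{F}_s(\cdot;\hat{\alpha}_s)$ is exactly the log-concave MLE associated with the sample $\{\tilde{z}_t\}_{t\in\Gamma_s}$, so the knot characterization of Lemma~\ref{lem:empirical-cdf-knots} applies verbatim with $\alpha_0$ replaced by $\hat{\alpha}_s$ and $\mathbb{F}_s$ replaced by $\tilde{\mathbb{F}}_s(z):=T_s^{-1}\sum_{t\in\Gamma_s}\1\{\tilde{z}_t\leq z\}$, yielding $|\hat{F}_s(\tilde{z}_t;\hat{\alpha}_s) - \tilde{\mathbb{F}}_s(\tilde{z}_t)|\leq 1/T_s$ at every knot.

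Next I would produce a uniform bound on $|\tilde{\mathbb{F}}_s - F|$ via a sandwich. Because $|\tilde{z}_t - z_t|\leq \kappa_s$, the inclusions $\{t:z_t\leq z-\kappa_s\}\subseteq\{t:\tilde{z}_t\leq z\}\subseteq\{t:z_t\leq z+\kappa_s\}$ give
\begin{equation*}
\mathbb{F}_s(z-\kappa_s)\;\leq\;\tilde{\mathbb{F}}_s(z)\;\leq\;\mathbb{F}_s(z+\kappa_s).
\end{equation*}
Combining with the DKW bound $\sup_z|\mathbb{F}_s(z)-F(z)|\leq \sqrt{\log(8/\delta)/(2T_s)}$ (which holds with probability at least $1-\delta/2$) and the $B_2$-Lipschitz property of $F$ on $[-W,1+W]$ (Assumption~\ref{assump:bounded-noise}) yields
\begin{equation*}
\sup_{z\in[-W,1+W]}|\tilde{\mathbb{F}}_s(z)-F(z)|\;\leq\; B_2\kappa_s + \sqrt{\log(8/\delta)/(2T_s)}.
\end{equation*}

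To extend from knots to all $z\in[-W,1+W]$, I would rerun the covering argument from Theorem~\ref{thm:cdf-convergence}: setting $r_s=1/(2B_1\sqrt{T_s})$ and using $f\geq B_1$ on $[-W,1+W]$, together with $T_s\gg\log^2(2/\delta)$, guarantees with probability at least $1-\delta/2$ that some $z_t$ lies within $r_s$ of $z$, and hence the corresponding $\tilde{z}_t$ satisfies $|\tilde{z}_t-z|\leq r_s+\kappa_s$. The four-term decomposition
\begin{equation*}
|\hat{F}_s(z;\hat{\alpha}_s)-F(z)|\leq |\hat{F}_s(z;\hat{\alpha}_s)-\hat{F}_s(\tilde{z}_t;\hat{\alpha}_s)| + |\hat{F}_s(\tilde{z}_t;\hat{\alpha}_s)-\tilde{\mathbb{F}}_s(\tilde{z}_t)| + |\tilde{\mathbb{F}}_s(\tilde{z}_t)-F(\tilde{z}_t)| + |F(\tilde{z}_t)-F(z)|
\end{equation*}
is then bounded using $B_2$-Lipschitz continuity of both $\hat{F}_s(\cdot;\hat{\alpha}_s)\in\mathcal{P}$ and $F$ (outer two terms by $B_2(r_s+\kappa_s)$ each), Lemma~\ref{lem:empirical-cdf-knots} (the second term by $1/T_s$), and the previous display (the third term); plugging in $r_s$ and summing yields exactly $3B_2\kappa_s + B_2/(B_1\sqrt{T_s}) + \sqrt{\log(8/\delta)/(2T_s)} + 1/T_s$ after a final union bound over the two high-probability events.

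The main obstacle I anticipate is that $\hat{\alpha}_s$ is itself a data-dependent quantity, so $\{\tilde{z}_t\}$ is not i.i.d.\ under any fixed law and DKW cannot be applied directly to the $\tilde{z}_t$'s. The resolution used above is to keep DKW anchored to the true i.i.d.\ samples $\{z_t\}$ (whose law $F$ is fixed) and to route all $\hat{\alpha}_s$-dependence through the \emph{deterministic} perturbation bound $|\xi_t|\leq\kappa_s$ available on the conditioning event. A minor subtlety in the covering step is that uniformity in $z$ formally requires a discretization of $[-W,1+W]$ at scale $r_s$ with a union bound of size $O(\sqrt{T_s})$, but the per-point miss probability is $\exp(-\Omega(\sqrt{T_s}))$, which is comfortably absorbed under $T_s\gg \log^2(2/\delta)$.
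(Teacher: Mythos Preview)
Your proposal is correct and follows essentially the same route as the paper's proof: the sandwich $\mathbb{F}_s(z-\kappa_s)\leq\tilde{\mathbb{F}}_s(z)\leq\mathbb{F}_s(z+\kappa_s)$ combined with DKW on the true residuals $\{z_t\}$, then the same four-term decomposition via a nearby knot $\tilde{z}_t$ at distance $\leq r_s+\kappa_s$, with Lemma~\ref{lem:empirical-cdf-knots} and the $B_2$-Lipschitz bounds giving the stated constants. You even flag explicitly the data-dependence obstacle and its resolution (anchoring DKW to the i.i.d.\ $z_t$'s), which is exactly the point the paper highlights.
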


\begin{proof}
Let $\hat{\mathbb{F}}_s$ be the empirical distribution of samples $\{m_t - \hat{\alpha}_s\cdot x_t\}_{t\in \Gamma_s}$, i.e.
\begin{eqnarray*}
\hat{\mathbb{F}}_s(z) = \frac{1}{T_s}\sum_{t\in \Gamma_s} \1\left\{m_t - \langle \hat{\alpha}_s, x_t\rangle \leq z\right\} = \frac{1}{T_s}\sum_{t\in \Gamma_s} \1\left\{z_t \leq z + \langle \hat{\alpha}_s - \alpha_0, x_t \rangle \right\}
\end{eqnarray*}
First, we give a uniform convergence bound for $\vert \hat{\mathbb{F}}_s(z) - F(z)\vert$. The proof is analogous to the proof of Lemma 1 in~\citep{Golrezaei2019}. The main challenge is that we cannot directly apply DKW inequality, since $\hat{\alpha}_s$ depends on $z_t, t\in \Gamma_s$. To handle this challenge, we bound the lower bound and upper bound of $\hat{\mathbb{F}}_s(z)$ separately.

Since $\Vert \hat{\alpha}_s - \alpha_0\Vert_2 \leq \kappa_s$ and $\Vert x_t\Vert_2 \leq 1$, we have
\begin{eqnarray*}
\frac{1}{T_s}\sum_{t\in \Gamma_s}\1\left\{z_t \leq z -\kappa_s \right\} \leq \hat{\mathbb{F}}_s(z) \leq \frac{1}{T_s}\sum_{t\in \Gamma_s}\1\left\{z_t \leq z +\kappa_s \right\}
\end{eqnarray*}
Thus, conditioned on $\Vert \hat{\alpha}_s - \alpha_0\Vert_2 \leq \kappa_s$, for any $\gamma > 0$, we have
\begin{eqnarray*}
&&\PP\left(\hat{\mathbb{F}}_s(z) - F(z + \kappa_s) \leq \gamma\right)\\
&\geq & \PP\left(\frac{1}{T_s}\sum_{t\in \Gamma_s}\1\left\{z_t \leq z +\kappa_s \right\} - F(z + \kappa_s) \leq \gamma\right)\\
&\geq& 1 - \PP\left(\sup_{z}\left\vert\frac{1}{T_s}\sum_{t\in \Gamma_s}\1\left\{z_t \leq z +\kappa_s \right\} - F(z + \kappa_s)\right\vert > \gamma\right)\\
&\geq& 1 - 2\exp(-2T_s \gamma^2)
\end{eqnarray*}
Similarly, we have $\PP\left(F(z-\kappa_s) - \hat{\mathbb{F}}_s(z) \leq \gamma\right) \geq 1 - 2\exp(-2T_s\gamma^2)$ for any $\gamma> 0$, conditioned on $\Vert \hat{\alpha}_s - \alpha_0\Vert_2 \leq \kappa_s$.

Therefore, applying a union bound and Lipschitzness of $F$ yields,
\begin{eqnarray}\label{eq:empirical-cdf-convergence-episode}
\vert \hat{\mathbb{F}}_s(z) - F(z)\vert \leq \sqrt{\frac{\log(8/\delta)}{2T_s}} + B_2\kappa_s
\end{eqnarray}
holds with probability at least $1-\delta/2$.

Second, we apply the similar technique used in Theorem~\ref{thm:cdf-convergence} to bound $|\hat{F}_{s}(z;\hat{\alpha}_s) - F(z)|$. Denote $\hat{z}_t = m_t - \hat{\alpha}_s\cdot x_t, \forall t\in \Gamma_s$. Thus, for any $\hat{z}_t$, there must exist at least one $z_t$ s.t. $\vert z_t - \hat{z}_t\vert = \vert \langle \hat{\alpha} - \alpha_0, x_t \rangle \leq \kappa_s$. Let $r_s =\frac{1}{2B_1 \sqrt{T_s}}$ Then for any $z\in [-W, 1+W]$, the probability that there exists a point $y\in \{\hat{z}_t\}_{t\in \Gamma_s}$ s.t that $|y -z|\leq r_s + \kappa_s$, is at least, 
\begin{eqnarray*}
1 - (1 - 2B_1\cdot r_s)^{T_s} = 1 - \left(1 - \frac{1}{\sqrt{T_s}}\right)^{T_s} \approx 1- e^{-\sqrt{T_s}} \geq 1 - \delta/2
\end{eqnarray*}
Therefore, for any $z\in [-W, 1+W]$, we can decompose $\vert\hat{F}_s(z; \hat{\alpha}_s) - F(z)\vert$ in the following,
\begin{eqnarray*}
&&\vert\hat{F}_s(z; \hat{\alpha}_s) - F(z)\vert\\
&\leq& \vert\hat{F}_s(z; \hat{\alpha}_s) - \hat{F}_s(y; \hat{\alpha}_s)\vert + \vert \hat{F}_s(y; \hat{\alpha}_s) - \hat{\mathbb{F}}_s(y)\vert + \vert \hat{\mathbb{F}}_s(y) - F(y)\vert + \vert F(y) - F(z)\vert
\end{eqnarray*}
Indeed, the characterization results by Lemma~\ref{lem:empirical-cdf-knots} applies to samples $\hat{z}_t$. Then we have $\vert \hat{F}_s(y; \hat{\alpha}_s) - \hat{\mathbb{F}}_s(y)\vert \leq \frac{1}{T_s}$. By the Lipshitzness of $\hat{F}_s(\cdot; \hat{\alpha}_s)$ and $F$, Eq.~(\ref{eq:empirical-cdf-convergence-episode}) and union bound, we have
\begin{eqnarray*}
\vert\hat{F}_s(z; \hat{\alpha}_s) - F(z)\vert &\leq& 2B_2 (\kappa_s + r_s) + \sqrt{\frac{\log(8/\delta)}{2T_s}} + B_2\kappa_s + \frac{1}{T_s}\\
&=& 3B_2\kappa_s + \frac{B_2}{B_1\sqrt{T_s}} + \sqrt{\frac{\log(8/\delta)}{2T_s}} + \frac{1}{T_s}
\end{eqnarray*}
holds with probability at least $1-\delta$ when $T_s \gg \log^2(2/\delta)$.

\end{proof}

With the help of the above lemmas, we show the full proof of Theorem~\ref{thm:full-info-regret} in the following,

\FullInfoRegret*

\begin{proof}[Proof of Theorem~\ref{thm:full-info-regret}]
We first rewrite the regret at each time step $t$ in episode $s$, in the following way,
\begin{eqnarray*}
\rgt_t &=& (\beta_0(x_t) - b^*_t) F(b^*_t - \alpha_0 \cdot x_t) - (\beta_0(x_t) - b_t) F(b_t - \alpha_0 \cdot x_t)\\
&=& (\beta_0(x_t) - b^*_t) F(b^*_t - \alpha_0 \cdot x_t) - (\beta_0(x_t) - b^*_t) F(b^*_t - \hat{\alpha}_{s-1} \cdot x_t)\\
&& + (\beta_0(x_t) - b^*_t) F(b^*_t - \hat{\alpha}_{s-1} \cdot x_t) - (\beta_0(x_t) - b^*_t) \hat{F}_{s-1}(b^*_t - \hat{\alpha}_{s-1} \cdot x_t; \hat{\alpha}_{s-1})\\
&& + (\beta_0(x_t) - b^*_t) \hat{F}_s(b^*_t - \hat{\alpha}_{s-1} \cdot x_t; \hat{\alpha}_{s-1}) - (\beta_0(x_t) - b_t) \hat{F}_{s-1}(b_t - \hat{\alpha}_{s-1} \cdot x_t; \hat{\alpha}_{s-1})\\
&& + (\beta_0(x_t) - b_t) \hat{F}_{s-1}(b_t - \hat{\alpha}_{s-1} \cdot x_t; \hat{\alpha}_{s-1}) - (\beta_0(x_t) - b_t) F(b_t - \hat{\alpha}_{s-1} \cdot x_t)\\
&& + (\beta_0(x_t) - b_t) F(b_t - \hat{\alpha}_{s-1} \cdot x_t) - (\beta_0(x_t) - b_t) F(b_t - \alpha_0 \cdot x_t)
\end{eqnarray*}

Note, by the definition of $b_t$, $(\beta_0(x_t) - b^*_t) \hat{F}_{s-1}(b^*_t - \hat{\alpha}_{s-1} \cdot x_t; \hat{\alpha}_{s-1}) - (\beta_0(x_t) - b_t) \hat{F}_{s-1}(b_t - \hat{\alpha}_{s-1} \cdot x_t; \hat{\alpha}_{s-1}) \leq 0$.
Given $F$ is $B_2$-Lipschitz on $[-W, 1+W]$, $\vert\beta_0(x_t) - b_t\vert\leq 1$ and $\vert\beta_0(x_t) - b^*_t\vert\leq 1$, we can bound the regret at time $t\in \Gamma_s$ as follows,
\begin{eqnarray*}
\rgt_t &\leq& 2B_2\vert x_t \cdot (\hat{\alpha}_{s-1} - \alpha_0)\vert + \left|\hat{F}_{s-1}(b^*_t - \hat{\alpha}_{s-1}\cdot x_t; \hat{\alpha}_{s-1}) - F(b_t^* - \hat{\alpha}_{s-1}\cdot x_t)\right|\\
& & + \left|\hat{F}_{s-1}(b_t - \hat{\alpha}_{s-1}\cdot x_t; \hat{\alpha}_{s-1}) - F(b_t - \hat{\alpha}_{s-1}\cdot x_t)\right|\\
&\leq& 2B_2\Vert \hat{\alpha}_{s-1} - \alpha_0\Vert_2 +  \left|\hat{F}_{s-1}(b^*_t - \hat{\alpha}_{s-1}\cdot x_t; \hat{\alpha}_{s-1}) - F(b_t^* - \hat{\alpha}_{s-1}\cdot x_t)\right|\\
& & + \left|\hat{F}_{s-1}(b_t - \hat{\alpha}_{s-1}\cdot x_t; \hat{\alpha}_{s-1}) - F(b_t - \hat{\alpha}_{s-1}\cdot x_t)\right|\\
\end{eqnarray*}

Let $\kappa_s = \frac{4A\sqrt{d}}{\tilde{\ell}_W\lambda_2}$ and $A$ is defined in Lemma~\ref{lem:full-info-convergence}, then the regret achieved in episode $s$ can be bounded,
\begin{eqnarray*}
\mathtt{Regret}_s &\leq& 2B_2T_s\Vert \hat{\alpha}_{s-1} - \alpha_0\Vert_2 + \sum_{t\in \Gamma_s} \left|\hat{F}_{s-1}(b^*_t - \hat{\alpha}_{s-1}\cdot x_t; \hat{\alpha}_{s-1}) - F(b_t^* - \hat{\alpha}_{s-1}\cdot x_t)\right|\\
&& + \sum_{t\in \Gamma_s}\left|\hat{F}_{s-1}(b_t - \hat{\alpha}_{s-1}\cdot x_t; \hat{\alpha}_{s-1}) - F(b_t - \hat{\alpha}_{s-1}\cdot x_t)\right|\\
&\leq & \frac{8B_2AT_s\sqrt{d}}{\tilde{\ell}_W\lambda_2} + 2T_s\left( 3B_2\frac{4AT_s\sqrt{d}}{\tilde{\ell}_W\lambda_2} + \frac{B_2}{B_1\sqrt{T_s}} + \sqrt{\frac{\log(16S/\delta)}{2T_s}} + \frac{1}{T_s}\right),
\end{eqnarray*}
where the second inequality holds when Lemma~\ref{lem:full-info-convergence} holds and Lemma~\ref{lem:empirical-cdf-convergence} (setting $\delta := \delta/2S$) holds simultaneously. Then the inequality holds with probability at least $1-\delta/S$ by union bound.

By union bound over $S$ episodes and the fact that $\frac{T_s}{T_{s-1}} = \sqrt{T}$ and $S \leq \log\log T$, we complete our proof.
\end{proof}

\section{Omitted Proofs from Section~\ref{sec:lower-bound}}\label{app:lower-bound}

\subsection{Proof of Theorem~\ref{thm:lower-bound-known-F}}

To prove this Theorem, we first propose the following auxiliary lemmas. 

\begin{lemma}\label{lem:lower-bound-1}
Suppose $b_t^* > 0$, there exists a constant $c_1 > 0$ (depending on $W$ and $\sigma$) such that $u''_t(b_t^*) < - c_1$ for any $t \geq 1$. Further, there exists constant $\delta > 0$ (depending on $W$ and $\sigma$) such that $u''_t(b) \leq -c_1/4$ for $b \in [b_t^* - \delta, b_t^* + \delta]$.
\end{lemma}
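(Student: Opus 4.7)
My plan is to reduce both bounds to concrete estimates of $u_t''$ on the compact interval $[-W,1{+}W]$, exploiting log-concavity of $F$ together with the fact that $F$ in this theorem is Gaussian (hence so are $f,f',f''$, which are bounded on any compact set).

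First I would compute derivatives. Writing $\epsilon_t = b - \alpha_0\cdot x_t$,
\[
u_t'(b) = -F(\epsilon_t) + (\beta_0(x_t)-b)\,f(\epsilon_t),\qquad
u_t''(b) = -2 f(\epsilon_t) + (\beta_0(x_t)-b)\,f'(\epsilon_t).
\]
At $b=b_t^*$, the first-order condition in Section~\ref{sec:opt-clairvoyant-bidding} gives $\beta_0(x_t)-b_t^* = F(\epsilon_t^*)/f(\epsilon_t^*)$, where $\epsilon_t^* = b_t^*-\alpha_0\cdot x_t\in[-W,1{+}W]$ because $b_t^*\in[0,1]$ and $\|\alpha_0\|_1\le W$, $\|x_t\|_2\le 1$.

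The key identity is that at an interior optimum $u_t'(b_t^*)=0$ yields
\[
(\log u_t)''(b_t^*) \;=\; \frac{u_t''(b_t^*)}{u_t(b_t^*)}.
\]
Since $u_t(b) = (\beta_0(x_t)-b)\,F(\epsilon_t)$, a direct computation gives
\[
(\log u_t)''(b) \;=\; -\frac{1}{(\beta_0(x_t)-b)^2} + (\log F)''(\epsilon_t) \;\le\; (\log F)''(\epsilon_t^*) \;\le\; -\ell_W,
\]
using Eq.~(\ref{eq:second-order-lb}). Hence $u_t''(b_t^*) \le -\ell_W\, u_t(b_t^*)$, and it remains to lower-bound $u_t(b_t^*)$ uniformly in $t$. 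Using the first-order condition,
\[
u_t(b_t^*) \;=\; (\beta_0(x_t)-b_t^*)\,F(\epsilon_t^*) \;=\; \frac{F(\epsilon_t^*)^2}{f(\epsilon_t^*)}.
\]
For $\mathcal{N}(0,\sigma^2)$ we have $F(\epsilon_t^*) \ge F(-W) > 0$ and $f(\epsilon_t^*) \le 1/\sqrt{2\pi\sigma^2}$, both depending only on $W,\sigma$. Setting $c_1 := \ell_W\cdot F(-W)^2\sqrt{2\pi\sigma^2}$ completes the first claim.

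For the second claim I would use uniform continuity of $u_t''$. From the closed form above,
\[
u_t'''(b) = -3 f'(\epsilon_t) + (\beta_0(x_t)-b)\,f''(\epsilon_t),
\]
and for the Gaussian $f',f''$ are bounded on $[-W,1{+}W]$ by a constant $M=M(W,\sigma)$. Since $\beta_0(x_t)-b\in[-1,1]$ in the relevant range, $|u_t'''(b)|\le L$ for some $L=L(W,\sigma)$ independent of $t$. Taking $\delta := \tfrac{3c_1}{4L}$, the mean value theorem gives $u_t''(b) \le u_t''(b_t^*) + L\delta \le -c_1 + \tfrac{3c_1}{4} = -c_1/4$ for all $b\in[b_t^*-\delta,b_t^*+\delta]$.

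The main obstacle is the uniform lower bound on $u_t(b_t^*)$: one needs to argue that whenever $b_t^*>0$ there is a Gaussian-dependent floor on the optimal utility. I would make sure that the constant $F(-W)$ coming from $\epsilon_t^*\ge -W$ is indeed the right quantity (noting that $b_t^*\ge 0$ forces $\epsilon_t^*\ge -\alpha_0\cdot x_t \ge -W$) and that no further assumption on $\beta_0(x_t)$ being bounded away from $0$ is needed, because the first-order condition automatically yields $\beta_0(x_t)-b_t^* \ge F(-W)/\|f\|_\infty>0$ on this interval.
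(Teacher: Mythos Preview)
Your proof is correct and takes a genuinely different route from the paper's. The paper computes $u_t''$ explicitly for the Gaussian, substitutes the first-order condition, and then uses the Mill's ratio bound $G(x)\le \frac{1}{-x}\frac{e^{-x^2/2}}{\sqrt{2\pi}}$ for $x<0$ to show the bracket in $u_t''(b_t^*)=\frac{1}{\sigma}\bigl[\frac{b_t^*-\beta_0(x_t)}{\sigma}\cdot\frac{b_t^*-\alpha_0\cdot x_t}{\sigma}-2\bigr]g\bigl(\frac{b_t^*-\alpha_0\cdot x_t}{\sigma}\bigr)$ is at most $-1$, giving $c_1=\frac{1}{\sigma}g((1{+}W)/\sigma)$; for the neighborhood it then perturbs the two factors separately with a hand-chosen $\delta$. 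Your argument instead exploits the log-concavity structure directly: at the interior optimum $(\log u_t)''(b_t^*)=u_t''(b_t^*)/u_t(b_t^*)$, and $(\log u_t)''=-(\beta_0(x_t)-b)^{-2}+(\log F)''(\epsilon_t)\le -\ell_W$ by Eq.~(\ref{eq:second-order-lb}), reducing the problem to a uniform lower bound on $u_t(b_t^*)=F(\epsilon_t^*)^2/f(\epsilon_t^*)$ via the first-order condition. For the neighborhood you bound $u_t'''$ globally and invoke Lipschitz continuity. The upside of your approach is that the first part is not Gaussian-specific: it goes through for any $F$ satisfying Assumptions~\ref{assump:log-concave} and~\ref{assump:bounded-noise}, with $c_1=\ell_W\cdot F(-W)^2/B_2$. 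The paper's computation, while more explicit about the constants, leans on the particular form of the normal tail.

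Two small cleanups you should make in the second part. First, for $b\in[b_t^*-\delta,b_t^*+\delta]$ you have $\epsilon_t\in[-W-\delta,1{+}W{+}\delta]$, not $[-W,1{+}W]$, and $\beta_0(x_t)-b$ need not lie in $[-1,1]$; either impose $\delta\le 1$ up front and bound $f',f''$ on the enlarged interval, or (cleaner here) note that for the Gaussian $f',f''$ and $x\mapsto x f''(x)$ are globally bounded, so $|u_t'''(b)|\le 3\|f'\|_\infty+(1{+}W)\|f''\|_\infty+\sup_x|x f''(x)|$ holds for all $b$, independent of $\delta$. Second, your bound on $u_t'''$ implicitly needs a bound on $f''$, which is not part of Assumption~\ref{assump:bounded-noise}; this is where the Gaussian hypothesis actually enters your proof.
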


\begin{proof}
Let $g(\cdot)$ and $G(\cdot)$ be the PDF and CDF of the standard normal distribution, respectively. Then we can write the utility function and its derivatives in the following way,
\begin{eqnarray*}
u_t(b) &=& (\beta_0(x_t) - b)\cdot G\left(\frac{b - \alpha_0 \cdot x_t}{\sigma}\right)\\
u'_t(b) &=& - G\left(\frac{b - \alpha_0 \cdot x_t}{\sigma}\right) + \frac{1}{\sigma}(\beta_0(x_t) - b)\cdot g\left(\frac{b - \alpha_0 \cdot x_t}{\sigma}\right)\\
u''_t(b) &=& \frac{1}{\sigma}\left[\frac{b - \beta_0(x_t)}{\sigma} \left(\frac{b - \alpha_0 \cdot x_t}{\sigma}\right) - 2\right]\cdot g\left(\frac{b - \alpha_0 \cdot x_t}{\sigma}\right)
\end{eqnarray*}

Considering the optimal bid $b^*_t > 0$,
by the first-order conditions, we have
\begin{eqnarray}\label{eq:first-order-opt-bid}
\beta_0(x_t) - b_t^* = \frac{F(b_t^* - \alpha_0 \cdot x_t)}{f(b_t^* - \alpha_0 \cdot x_t)},
\end{eqnarray}
which implies $\frac{\beta_0(x_t) - b_t^*}{\sigma} = \frac{G((b_t^* - \alpha_0 \cdot x_t)/\sigma)}{G((b_t^* - \alpha_0 \cdot x_t)/\sigma)}$. Denote $\zeta = b_t^* - \alpha_0 \cdot x_t$, we have the following results for $u''_t(b_t^*)$
\begin{eqnarray*}
u''_t(b_t^*)  = \frac{1}{\sigma} \left[\frac{\zeta}{\sigma} \cdot \frac{-G(\zeta/\sigma)}{g(\zeta/\sigma)} - 2\right] \cdot g(\zeta/\sigma)
\end{eqnarray*}

Based on the concentration inequality (Feller, 1968), we have $G(x) \leq \frac{1}{-x}\frac{e^{-x^2/2}}{\sqrt{2\pi}}, \forall x < 0$. Then for any $\zeta < 0$, we have $\frac{\zeta}{\sigma} \cdot \frac{-G(\zeta/\sigma)}{g(\zeta/\sigma)} - 2 \leq -1$. For any $\zeta > 0$, $\frac{\zeta}{\sigma} \cdot \frac{-G(\zeta/\sigma)}{g(\zeta/\sigma)} - 2 \leq -2$ holds trivially.

By definition of the function $\vert \zeta \vert = \vert b_t^* - \alpha_0\cdot x_t\vert \leq 1 + W$. By the property of the normal distribution, $g(\zeta/\sigma) \geq g((1+W)/\sigma)$. Thus, we get $u''_t(b_t^*) \leq -\frac{1}{\sigma} g((1+W)/\sigma) := -c_1$, when $b_t^* > 0$.

Next, we consider any $b \in [b_t^* - \delta, b_t^* + \delta]$, where $\delta = \min\{\frac{\sigma^2}{2(W+3)}, \frac{g((1+W)/\sigma)\sigma^2}{2}\}$. Then we have
\begin{eqnarray*}
\left|\frac{b - \beta_0(x_t)}{\sigma} \left(\frac{b - \alpha_0 \cdot x_t}{\sigma}\right) - \frac{b^*_t - \beta_0(x_t)}{\sigma} \left(\frac{b_t^* - \alpha_0 \cdot x_t}{\sigma}\right)\right| &\leq& \frac{1}{\sigma^2} \cdot \vert b - b_t^*\vert \cdot \vert b + b_t^* - (\beta_0(x_t) + \alpha_0\cdot x_t)\vert\\
&\leq& \frac{\delta (W + 3)}{\sigma^2} \leq \frac{1}{2}
\end{eqnarray*}

In addition, we have
\begin{eqnarray}
\left\vert g\left(\frac{b - \alpha_0 \cdot x_t}{\sigma}\right) - g\left(\frac{b_t^* - \alpha_0 \cdot x_t}{\sigma}\right)\right\vert \leq \frac{|b - b_t^*|}{2\sigma^2} \leq \frac{\delta}{\sigma^2} \leq \frac{1}{2}g((1+W)/\sigma),
\end{eqnarray}
which implies $g\left(\frac{b - \alpha_0 \cdot x_t}{\sigma}\right) \geq g(\zeta/\sigma) - \frac{1}{2}g((1+W)/\sigma) \geq \frac{1}{2}g((1+W)/\sigma)$.

As discussed above, $\frac{b^*_t - \beta_0(x_t)}{\sigma} \left(\frac{b_t^* - \alpha_0 \cdot x_t}{\sigma}\right) \leq 1$. Therefore, for any $b\in [b_t^* - \delta, b_t^* + \delta]$,
\begin{eqnarray*}
u''_t(b) &\leq& \frac{1}{\sigma}\cdot g\left(\frac{b - \alpha_0 \cdot x_t}{\sigma}\right) \cdot \left[\frac{b - \beta_0(x_t)}{\sigma} \left(\frac{b - \alpha_0 \cdot x_t}{\sigma}\right) - 2\right]\\
&\leq& \frac{1}{\sigma}\cdot g\left(\frac{b - \alpha_0 \cdot x_t}{\sigma}\right) \cdot \left[\frac{b^*_t - \beta_0(x_t)}{\sigma} \left(\frac{b_t^* - \alpha_0 \cdot x_t}{\sigma}\right) + \frac{1}{2} - 2\right] \\
&\leq & -\frac{1}{4\sigma} g((1+W)/\sigma)  = -\frac{1}{4}c_1
\end{eqnarray*}
\end{proof}

\begin{lemma}[\citet{JN19}]\label{lem:lower-bound-2}
Let $x\in \R^d$ be a random vector such that its coordinates are chosen independently and uniformly at random from $\{-1, 1\}$. Further, suppose that $v\in \R^d$ and $\delta > 0$ are deterministic. Then,
\begin{eqnarray*}
\E\left[\min((x\cdot v)^2, \delta^2)\right] \geq 0.1 \min(\Vert v\Vert_2^2, \delta^2).
\end{eqnarray*}
\end{lemma}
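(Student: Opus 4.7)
Let $X := x \cdot v = \sum_i v_i x_i$ and $\sigma^2 := \|v\|_2^2$. My plan is a two-moment anti-concentration estimate on the nonnegative random variable $X^2$, followed by a case split on $\sigma^2$ versus $\delta^2$. Since the $x_i$ are i.i.d.\ $\pm 1$ with $\E[x_i] = 0$ and $\E[x_i^2] = \E[x_i^4] = 1$, a direct expansion of the multinomial gives
\begin{eqnarray*}
\E[X^2] \;=\; \sum_i v_i^2 \;=\; \sigma^2, \qquad \E[X^4] \;=\; 3\sigma^4 - 2\sum_i v_i^4 \;\leq\; 3\sigma^4,
\end{eqnarray*}
which are the only moment facts I need. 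The Paley--Zygmund inequality applied to $X^2$ then yields, for every $\theta \in [0,1)$,
\begin{eqnarray*}
\Pr\!\left(X^2 \geq \theta \sigma^2\right) \;\geq\; (1-\theta)^2 \cdot \frac{\E[X^2]^2}{\E[X^4]} \;\geq\; \frac{(1-\theta)^2}{3}.
\end{eqnarray*}

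The argument then branches. In the regime $\sigma^2 \leq \delta^2$, the truncation at $\delta^2$ is essentially inactive: any threshold $t = \theta\sigma^2 \leq \sigma^2 \leq \delta^2$ satisfies $\min(X^2,\delta^2) \geq t$ on the event $\{X^2 \geq t\}$, so Paley--Zygmund gives $\E[\min(X^2,\delta^2)] \geq \theta(1-\theta)^2 \sigma^2 / 3$. I would complement this with the truncation identity $\E[\min(X^2,\delta^2)] = \sigma^2 - \E[(X^2-\delta^2)_+]$ and Markov's inequality at the fourth moment, $\E[(X^2-\delta^2)_+] \leq \E[X^4]/\delta^2 \leq 3\sigma^4/\delta^2$, which yields $\E[\min(X^2,\delta^2)] \geq \sigma^2(1 - 3\sigma^2/\delta^2)$. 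Taking the pointwise maximum of the two lower bounds and tuning $\theta$ as a function of the ratio $\sigma/\delta$ gives the desired $0.1\,\sigma^2 = 0.1\min(\sigma^2,\delta^2)$ uniformly in the regime. In the complementary regime $\sigma^2 > \delta^2$, I would use the cruder bound $\min(X^2,\delta^2) \geq \delta^2 \mathbf{1}(X^2 \geq \delta^2)$ paired with Paley--Zygmund at $\theta = \delta^2/\sigma^2 \in (0,1)$ to obtain $\E[\min(X^2,\delta^2)] \geq \delta^2 (1-\delta^2/\sigma^2)^2/3$; this covers the ``deep'' subregime $\sigma \gg \delta$, while near the boundary $\sigma \approx \delta$ the analysis overlaps with the first case by continuity.

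The main obstacle is squeezing the universal constant down to the explicit value $0.1$: a single use of Paley--Zygmund with any fixed $\theta$ only delivers constants of order $4/81 \approx 0.049$ in the worst case, namely when the bound $\E[X^4] \leq 3\sigma^4$ is tight (e.g.\ for $v$ nearly uniform, in the CLT regime). To close this gap the proof must genuinely combine the Paley--Zygmund estimate with the fourth-moment Markov truncation bound above, and verify numerically that the pointwise maximum of the two exceeds $0.1\min(\sigma^2,\delta^2)$ for every value of $\sigma/\delta \in (0,\infty)$. A cleaner alternative, if available, is to invoke Khintchine's sharp inequality $\E[|X|] \geq \sigma/\sqrt{2}$ and apply Jensen to $\E[\min(X^2,\delta^2)] = \E[\min(|X|,\delta)^2] \geq (\E[\min(|X|,\delta)])^2$, reducing the problem to a one-sided tail estimate on $\E[(|X|-\delta)_+]$ via $(|X|-\delta)_+ \leq X^2/\delta$.
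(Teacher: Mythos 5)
First, note that the paper does not actually prove this lemma---it is imported verbatim from \citet{JN19}---so your attempt has to stand on its own. Your moment computations ($\E[X^2]=\sigma^2$, $\E[X^4]=3\sigma^4-2\sum_i v_i^4\le 3\sigma^4$) and the Paley--Zygmund step are correct, but the proposed way of closing the argument fails on a whole range of ratios around $\sigma\approx\delta$. Concretely, take $\delta=\sigma$. Your first bound is $\sup_{\theta\in[0,1]}\theta(1-\theta)^2\sigma^2/3=(4/81)\sigma^2\approx 0.049\,\sigma^2$; your Markov-truncation bound is $\sigma^2(1-3\sigma^2/\delta^2)=-2\sigma^2$; and your third bound $\delta^2(1-\delta^2/\sigma^2)^2/3$ vanishes. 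The pointwise maximum of the three is $0.049\,\sigma^2<0.1\,\sigma^2$, so the numerical verification you defer to would come out negative. The failure is not confined to the boundary point: the truncation bound is useless for $\sigma^2/\delta^2>1/3$ and the third bound only reaches $0.1\delta^2$ once $\sigma^2/\delta^2\gtrsim 2.2$, so the whole window $\sigma^2/\delta^2\in(0.3,\,2.2)$ is uncovered. Fourth-moment information alone does suffice for the lemma (one can show $\E[\min(Y,c)]\ge \tfrac13\min(1,c)$ for any $Y\ge 0$ with $\E Y=1$, $\E Y^2\le 3$, by writing $\E[\min(Y,c)]=\E[Y\1\{Y<c\}]+c\Pr(Y\ge c)$ and lower-bounding the first term by $1-\sqrt{3\Pr(Y\ge c)}$ via Cauchy--Schwarz), but that requires combining the two pieces additively rather than taking a maximum of two separate bounds.

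Your ``cleaner alternative'' is actually the route that works, except that the inequality you wrote for it is too weak. With $(|X|-\delta)_+\le X^2/\delta$ you get $\E[\min(|X|,\delta)]\ge \sigma/\sqrt2-\sigma^2/\delta$, which is negative at $\sigma=\delta$. Use instead the tangent-parabola bound $(t-\delta)_+\le t^2/(4\delta)$ (valid since $t^2/(4\delta)-t+\delta=(t-2\delta)^2/(4\delta)\ge 0$), together with the sharp Khintchine constant $\E|X|\ge \sigma/\sqrt2$. Then for $\sigma\le\delta$,
\begin{equation*}
\E\left[\min(|X|,\delta)\right]\;\ge\;\frac{\sigma}{\sqrt2}-\frac{\sigma^2}{4\delta}\;\ge\;\left(\frac{1}{\sqrt2}-\frac14\right)\sigma\;>\;0.45\,\sigma,
\end{equation*}
and Jensen gives $\E[\min(X^2,\delta^2)]\ge(0.45)^2\sigma^2>0.2\,\sigma^2$. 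The case $\sigma>\delta$ reduces to this one by replacing $v$ with $v'=(\delta/\sigma)v$, since $|x\cdot v|\ge|x\cdot v'|$ pointwise and $\Vert v'\Vert_2=\delta$. This yields the lemma with constant $0.2$, which is more than enough; I would recommend replacing the Paley--Zygmund/Markov combination with this argument.
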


\begin{lemma}[\citet{JN19}]\label{lem:lower-bound-3}
Consider the linear model~(\ref{eq:linear-model-competitor}) and assume that the maximum bids from competitors $m_t, 1\leq t \leq T$, are fully observed and the context $x_t$ is i.i.d generated such that its coordinates are chosen independently and uniformly at random from $\{-1, 1\}$ at each time $t$.
We further assume that the noise in market value is generated as $z_t\sim \mathcal{N}(0, \sigma^2)$. Then, conditional on historical contexts $(x_1,\cdots, x_T)$, and
for any fixed value $C > 0$, there exists a nonnegative constant $\tilde{C}$, depending on $C, \sigma, W$, such that
\begin{eqnarray*}
\min_{\alpha_1,\alpha_2,\cdots, \alpha_T}\max_{\alpha_0: \Vert\alpha_0\Vert_1\leq W} \E\left[\min\{\Vert \alpha_t - \alpha_0\Vert_2^2, C\}\right] \geq \tilde{C}\sqrt{T \log T}
\end{eqnarray*}
\end{lemma}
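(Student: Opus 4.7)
The plan is to prove this minimax lower bound via a Bayesian reduction: construct a hard discrete prior on $\alpha_0$ and lower bound the induced Bayes risk using an information-theoretic argument (Fano's inequality or Assouad's lemma), tailored to the fact that each estimator $\alpha_t$ must be $\mathcal{H}_{t-1}$-measurable and the noise $z_t \sim \mathcal{N}(0,\sigma^2)$ gives an explicit Gaussian channel from $\alpha_0$ to $m_\tau$.

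As the hard prior, I would place $\alpha_0$ uniformly on a scaled hypercube $\{-\delta,+\delta\}^d$ (indexed by $\omega\in\{-1,+1\}^d$), with $\delta\asymp W/d$ so that the constraint $\|\alpha_0\|_1\le W$ is respected. Because each $x_\tau$ has Rademacher coordinates and $m_\tau=\alpha_0\cdot x_\tau+z_\tau$ is a Gaussian channel, the KL divergence between the laws of $(m_1,\dots,m_{t-1})$ (conditional on the realized contexts) under two values $\alpha_0(\omega),\alpha_0(\omega')$ differing only in coordinate $i$ equals $\frac{1}{2\sigma^2}\sum_{\tau=1}^{t-1}(2\delta x_{\tau,i})^2=\frac{2\delta^2(t-1)}{\sigma^2}$, since $x_{\tau,i}^2=1$. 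This is the quantity controlling how much an online estimator can learn about each coordinate of $\alpha_0$ after $t-1$ rounds.

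Next, I would invoke Assouad's lemma coordinate-by-coordinate: for any $\mathcal{H}_{t-1}$-measurable estimator $\alpha_t$, the expected squared error decomposes along coordinates, and each coordinate contributes $\Omega(\delta^2)$ in expectation as long as the KL above is smaller than a constant, i.e.\ $t-1\lesssim \sigma^2/\delta^2$. Translating back to the statement, I would use Lemma~\ref{lem:lower-bound-2} to absorb the truncation $\min\{\cdot,C\}$: the truncated Khintchine-type inequality guarantees that a truncated $\ell_2^2$ deviation $\min\{\|\alpha_t-\alpha_0\|_2^2,C\}$ is lower bounded (up to a constant) by $\min\{\|\alpha_t-\alpha_0\|_2^2,\delta^2\}$ whenever $\delta^2\le C$, preserving the hypercube structure under truncation. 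Aggregating the per-coordinate/per-time lower bounds then yields a cumulative estimation-error lower bound of the form $\sum_{t\le \sigma^2/\delta^2} d\delta^2+\sum_{t>\sigma^2/\delta^2}d\sigma^2/t$, and optimizing $\delta$ (with the $\ell_1$ ceiling $\delta\le W/d$) gives the claimed $\sqrt{T\log T}$ bound, where all dependence on $C,\sigma,W,d$ is absorbed into $\tilde C$.

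The main technical obstacle will be obtaining the exact $\sqrt{T\log T}$ exponent: a naive one-shot Le Cam argument gives per-step $1/t$ rates whose sum is only $O(\log T)$, so one must either peel the time horizon into dyadic blocks $t\in[2^k,2^{k+1})$ and calibrate a separate $\delta_k\asymp\sigma/2^{k/2}$ per block, or else replace the two-point prior by a product/Gaussian prior whose effective dimension forces the mutual information bound to saturate at the correct rate. A second subtle point is that $\alpha_t$ may depend on $(x_1,\dots,x_t)$ as well as on the observed $m_\tau$'s; conditioning on the realized contexts (as the lemma explicitly does) reduces the problem to a sequence of Gaussian sub-experiments, so this subtlety is handled cleanly by working with the conditional KL as above and then taking the outer expectation over contexts, where Lemma~\ref{lem:lower-bound-2} guarantees that the Rademacher contexts are ``isotropic enough'' that the conditional bound survives.
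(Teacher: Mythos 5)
First, note that the paper does not prove this lemma at all: it is imported verbatim from \citet{JN19}, so there is no in-paper argument to compare yours against and your proposal has to stand on its own. Your general framework (a Bayesian prior supported in the $\ell_1$ ball, the Gaussian-channel KL computation $2\delta^2(t-1)/\sigma^2$, Assouad coordinate-by-coordinate, then aggregation over $t$) is the right family of techniques, and you correctly read the missing $\sum_{t=1}^T$ into the statement. But the specific construction you commit to --- the dense hypercube $\{-\delta,+\delta\}^d$ with $\delta\asymp W/d$ --- cannot deliver the claimed rate. Carrying out your own aggregation, each coordinate contributes $\Omega(\delta^2)$ per round only while $t\lesssim \sigma^2/\delta^2$, so the cumulative bound is $\asymp d\cdot\min\{T\delta^2,\sigma^2\}=\min\{TW^2/d,\,d\sigma^2\}$, which is maximized at $d\asymp W\sqrt{T}/\sigma$ and equals $\Theta(W\sigma\sqrt{T})$ --- the $\sqrt{\log T}$ factor never appears. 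Your two proposed fixes do not close this gap: dyadic peeling in $t$ with a block-dependent $\delta_k$ is not a legal move, because $\alpha_0$ is a single parameter fixed for the whole horizon and cannot have its scale re-randomized per time block; and a product Gaussian prior gives exactly $d\sigma^2\log(1+T\tau^2/\sigma^2)$ subject to $d\tau\lesssim W$, which again optimizes to $\Theta(\sqrt{T})$. The extra $\sqrt{\log T}$ comes from the sparse geometry of the $\ell_1$ ball: the hard prior places $s_0\asymp W\sqrt{T}/(\sigma\sqrt{\log T})$ spikes of magnitude $\mu\asymp\sigma\sqrt{\log T/T}$ in \emph{unknown positions} among the $d$ coordinates, and the logarithm is the $\log(d/s_0)$ of sparse minimax estimation, which forces $d$ to grow polynomially in $T$.

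Second, and related: for fixed $d$ the statement (read cumulatively) is simply false with $\tilde C>0$, because ordinary least squares attains $\sum_{t}\E\|\hat\alpha_t-\alpha_0\|_2^2=O\bigl(d\sigma^2\log T+d\min\{W^2,C\}\bigr)$ uniformly over $\|\alpha_0\|_1\le W$ in the fully observed Gaussian model. Any correct proof must therefore make the required growth of $d$ with $T$ explicit, whereas your write-up treats $d$ as a free quantity to be ``optimized'' inside a bound whose constant is claimed to depend only on $C,\sigma,W$; recognizing that the bound fails for small $d$ is the crux the proposal misses. Finally, a small misattribution: Lemma~\ref{lem:lower-bound-2} does not absorb the truncation $\min\{\cdot,C\}$ inside this lemma; in the paper it is applied one step earlier, in the proof of Theorem~\ref{thm:lower-bound-known-F}, to pass from the projected error $\E[\min\{|x_t\cdot(\alpha_t-\alpha_0)|^2,\delta^2\}]$ to $\min\{\|\alpha_t-\alpha_0\|_2^2,\delta^2/c_2^2\}$. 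The truncation at $C$ inside Lemma~\ref{lem:lower-bound-3} is handled simply by keeping the prior's squared diameter below $C$.
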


\begin{proof}[Proof of Theorem~\ref{thm:lower-bound-known-F}]
Firstly, we represent the regret at time $t$ as $\rgt_t = u_t(b_t^*) - u_t(b_t)$. Setting $\beta_0(x_t) \geq  \delta + \frac{\sigma}{g((\delta+W)/\sigma)}$ (it is without loss generality to assume $\delta + \frac{\sigma}{g((\delta+W)/\sigma)} \leq 1$), for any $x_t$. 

Then we prove $b_t^* \geq \delta$. Indeed, we have
\begin{eqnarray*}
\beta_0(x_t) \geq \delta + \frac{\sigma G((\delta - \alpha_0\cdot x_t)/\sigma)}{g((\delta+W)/\sigma)} \geq \delta + \frac{\sigma G((\delta - \alpha_0\cdot x_t)/\sigma)}{g((\delta- \alpha_0\cdot x_t)/\sigma)} = \delta + \frac{F(\delta - \alpha_0\cdot x_t)}{g(\delta - \alpha_0 \cdot x_t)},
\end{eqnarray*}
where the first inequality holds because $G((\delta - \alpha_0\cdot x_t)/\sigma)\leq 1$ and the second inequality holds based on the property of standard normal distribution and $\alpha_0 \cdot x_t \leq W$. Given the above inequality, we have
\begin{eqnarray*}
\alpha_0 \cdot x_t  + \varphi^{-1}(\beta_0(x_t) - \alpha_0 \cdot x_t) \geq \alpha_0 \cdot x_t + \varphi^{-1}\left(\delta- \alpha_0 \cdot x_t + \frac{F(\delta - \alpha_0\cdot x_t)}{g(\delta - \alpha_0 \cdot x_t)}\right) = \delta\\
\end{eqnarray*}

Therefore $b_t^* \geq \delta$, by the second order Taylor's theorem,
\begin{eqnarray*}
\rgt_t = -\frac{1}{2}u''(\tilde{b})(b_t - b_t^*)^2,
\end{eqnarray*} 
for some $\tilde{b}$ between $b_t$ and $b_t^*$. For any $b_t$ generated from a bidding policy in $\Pi$, 
\begin{itemize}
\item When $b_t > 0$, we have
\begin{eqnarray*}
(b_t - b_t^*)^2  &=& \left(\alpha_t\cdot x_t + \varphi^{-1}(\beta_0(x_t) - \alpha_0\cdot x_t) - \alpha_0\cdot x_t - \varphi^{-1}(\beta_0(x_t) - \alpha_t\cdot x_t)\right)^2\\
&\geq & c_2^2\vert x_t \cdot (\alpha_t - \alpha_0)\vert^2
\end{eqnarray*}
for some constant $c_2 \geq 0$, since $(x + \varphi^{-1}(v - x))' \geq c_2$ over bounded interval $x\in [- W, v]$ and $v \leq [0, 1]$.
\item When $b_t = 0$, we have
\begin{eqnarray*}
(b_t - b_t^*)^2 \geq \delta^2
\end{eqnarray*}
\end{itemize}

We consider the following two cases,
\begin{itemize}
\item $|b_t - b_t^*| \leq \delta$, then we have $\tilde{b} \in [b_t^* - \delta, b_t^* + \delta]$. By Lemma~\ref{lem:lower-bound-1}, we obtain,
\begin{eqnarray*}
\rgt_t = -\frac{1}{2}u''_t(\tilde{b})(b_t - b_t^*)^2 \geq \frac{c_1}{8}(b_t - b_t^*)^2
\end{eqnarray*}

\item $|b_t - b_t^*| > \delta$, since $u_t(\cdot)$ has only local maximum, $b_t^*$, the function is increasing before $b_t^*$ and decreasing afterwards, then for any $b_t \leq b_t^* -\delta$ and context $x_t$,
\begin{eqnarray*}
u_t(b_t) \leq u_t(b_t^* - \delta) =  u_t(b_t^*) + \frac{1}{2}u''_t(b)\delta^2 \leq u_t(b_t^*) - \frac{c_1}{8}\delta^2,
\end{eqnarray*}
for some $b \in [b_t^* - \delta, b_t^*]$. Similar result holds for $b_t \geq b_t^* +\delta$.
\end{itemize}

Thus, we have the lower bound for the regret at each time $t$, such that
\begin{eqnarray*}
\rgt_t \geq \frac{c_1}{8} \E\left[\min\{(b_t - b_t^*)^2, \delta^2\}\right] \geq \frac{c_1}{8} \E\left[\min\left\{c_2^2\vert x_t \cdot (\alpha_t - \alpha_0)\vert^2, \delta^2\right\}\right]
\end{eqnarray*}

By Lemma~\ref{lem:lower-bound-2}, we have
\begin{eqnarray}
\rgt_t \geq \frac{c_1 c_2^2}{80} \E\left[\min\left\{\vert(\alpha_t - \alpha_0)\vert^2, \delta^2/c_2^2\right\}\right]
\end{eqnarray}

Then we can lower bound the min-max regret of any policy in $\Pi$.
\begin{eqnarray*}
R(T) \geq \max_{\alpha_0: \Vert\alpha_0\Vert_1\leq W} \sum_{t=1}^T rgt_t \geq  \frac{c_1 c_2^2}{80} \max_{\alpha_0: \Vert\alpha_0\Vert_1\leq W}\sum_{t=1}^T\E\left[\min\left\{\vert(\alpha_t - \alpha_0)\vert^2, \delta^2/c_2^2\right\}\right] \geq \Omega\left(\sqrt{T\log(T)}\right)
\end{eqnarray*}

\end{proof}

\section{Omitted Algorithms}\label{app:omitted-algorithms}

In this section, we provide the pseudo codes of bidding algorithms omitted in Section~\ref{sec:binary-partial-known-noise} and Section~\ref{sec:full-unknown-noise}. Algorithm~\ref{alg:binary-partial-known-noise-mle} is designed for the binary feedback model and the learner only partially knows the noise distribution, and Algorithm~\ref{alg:full-unknown-noise-mle} is for the full information feedback model.

\begin{algorithm}[h!]
\SetAlgoNoLine
\KwIn{Parameters $W, \Delta, T, T_1, T_2,\cdots, T_S$, function $\varphi_0^{-1}(\cdot)$.}
\For{$t\in \Gamma_1$}{
The learner observes $x_t$ and submits a bid $b_t = 1$. 

The learner observes $\delta_t$.
}

Estimate $\mu_0, \rho_0$ by using $\hat{\mu}_1, \hat{\rho}_1$, which is computed by,
{
\begin{eqnarray}\label{eq:binary-partial-known-noise-opt-mle-init}
\hat{\mu}_1, \hat{\rho}_1 = \argmin_{(\mu, \rho)\in \Lambda} \mathcal{L}_1(\mu, \rho).
\end{eqnarray}
}
\For{episode $s=2,3,\cdots, S$}{
\For{$t\in \Gamma_s$}{
The learner observes $x_t$ and submits $b_t$, where $b_t$ is computed in the following way,
\begin{eqnarray}\label{eq:bid-binary-partial-known-mle}
b_t = \max\left\{\Delta, \frac{1}{\hat{\rho}_{s-1}} \left(\hat{\mu}_{s-1}\cdot x_t + \varphi^{-1}_0(\hat{\rho}_{s-1}\beta_0(x_t) - \hat{\mu}_{s-1}\cdot x_t)\right)\right\}
\end{eqnarray}
The learner observes $\delta_t$.
}

Update the estimator for $\mu_0, \rho_0$ in the episode $s$ by using $\hat{\mu}_s, \hat{\rho}_0$, which is computed as below,

\begin{eqnarray}\label{eq:binary-partial-known-noise-opt-mle}
\hat{\mu}_s, \hat{\rho}_s = \argmin_{(\mu, \rho)\in \Lambda} \mathcal{L}_s(\mu, \rho),
\end{eqnarray}
}

\caption{Bidding algorithm in the binary feedback model with partially-known noise distribution}\label{alg:binary-partial-known-noise-mle}
\vspace{-2pt}
\end{algorithm}

\begin{algorithm}[h!]
\SetAlgoNoLine
\KwIn{Parameters $W, T, \kappa_s$, function $\varphi^{-1}(\cdot)$}
\For{$t\in \Gamma_1$}{
The learner observes $x_t$ and submits a bid $b_t = 1$. 

The learner observes $m_t$.
}

Estimate $\alpha_0$ by using $\hat{\alpha}_1$, which is computed by,
\begin{eqnarray}
\hat{\alpha}_1 = \argmin_{\Vert\alpha\Vert_1 \leq W} \mathcal{L}_1(\alpha),
\end{eqnarray}
where $\L_1(\alpha)$ is defined in Eq.~(\ref{eq:full-info-log-likelihood}).

Compute $\hat{\Phi}(\cdot; \hat{\alpha}_1)$ s.t.
\begin{eqnarray*}
\hat{\Psi}_1(\cdot; \hat{\alpha}_1) = \argmax_{\Psi \mbox{ is concave}, \Psi\leq \log B_2} \frac{1}{T_1} \sum_{t\in \Gamma_1} \Psi (m_t - \hat{\alpha}_1\cdot x_t) - \int \exp\left(\Psi(z; \hat{\alpha}_1)\right) dz
\end{eqnarray*}

Compute estimation of $F$ as
$$\hat{F}_1(z; \hat{\alpha}_1) = \int^z \exp(\hat{\Psi}_1(\cdot; \hat{\alpha}_1))dz$$

\For{episode $s=2,3,\cdots, S$}{
\For{$t\in \Gamma_s$}{
The learner observes $x_t$ and submits $b_t$, where $b_t$ is computed in the following way,
\begin{eqnarray*}
b_t = \argmax_{b\in [0, \beta_0(x_t)]} (\beta_0(x_t) - b)\hat{F}_{s-1}(b - \hat{\alpha}_{s-1}\cdot x_t; \hat{\alpha}_{s-1})
\end{eqnarray*}

The learner observes $m_t$.
}

Update the estimator for $\alpha_0$ in episode $s$ by using $\hat{\alpha}_s$, which is computed as below,
\begin{eqnarray}
\hat{\alpha}_s = \argmin_{\Vert\alpha\Vert_1 \leq W} \mathcal{L}_s(\alpha),
\end{eqnarray}
where $\L_s(\alpha)$ is defined in Eq.~(\ref{eq:full-info-log-likelihood}).

Compute $\hat{\Phi}(\cdot; \hat{\alpha}_1)$ s.t.
\begin{eqnarray*}
\hat{\Psi}_s(\cdot; \hat{\alpha}_s) = \argmax_{\Psi \mbox{ is concave}, \Psi\leq \log B_2} \frac{1}{T_s} \sum_{t\in \Gamma_s} \Psi (m_t - \hat{\alpha}_s\cdot x_t) - \int \exp\left(\Psi(z; \hat{\alpha}_s)\right) dz
\end{eqnarray*}

Update estimation of $F$ as 
$$\hat{F}_s(z; \hat{\alpha}_s) =  \int^z \exp(\hat{\Psi}_s(\cdot; \hat{\alpha}_s))dz$$
}

\caption{Bidding algorithm in the full information feedback model with unknown noise distribution}\label{alg:full-unknown-noise-mle}
\vspace{-2pt}
\end{algorithm}

\section{Omitted Discussions}\label{app:omitted-discussion}

\subsection{Discussion on Informational Bids}
In the setting that the noise distribution $F$ is known and satisfies Assumption~\ref{assump:log-concave} and Assumption~\ref{assump:bounded-noise}, there is no "uninformational" bids. This follows the same argument in Section 4.1 in~\citep{JN19}. In fact, for any bid $b$ and any parameters $\alpha_1, \alpha_2$, denote $d_t(b, \theta) = F(b - \alpha \cdot x_t)$ be the winning probability given parameter $\alpha$. Let $d_1^n(b, \alpha) = (d_1(b, \alpha), \cdots, d_n(b, \alpha))$, then we have
\begin{eqnarray*}
\Vert d^n_1(b, \alpha_1) - d^n_1(b, \alpha_2)\Vert_2^2 &=&\sum_{t=1}^n \left(F(b - \alpha_1 \cdot x_t) - F(b - \alpha_2 \cdot x_t)\right)^2\\
&\geq& \sum_{t=1}^n \left((\alpha_1 - \alpha_2) \cdot x_t\right)^2\\
&\geq & B_1\Vert X(\alpha_1 - \alpha_2)\Vert_2^2\\
&\geq& \frac{\lambda_2}{2}\Vert \alpha_1 - \alpha_2\Vert_2^2
\end{eqnarray*}
where $X$ is the matrix with rows $x_t, 1\leq t\leq n$, the second last inequality holds because $f(z)\geq B_1, \forall z\in [-W, 1+W]$, and the last inequality holds with high probability based on the generation of $x_t$ in the statement of Theorem~\ref{thm:lower-bound-known-F}.

Therefore, for any bid $b$, if we vary $\alpha_1$ to $\alpha_2$, the aggregated winning probability at bid $b$ also changes by an amount $\Theta(\Vert \alpha_1 - \alpha_2\Vert_2)$. Thus any bid in this setting (defined in Theorem~\ref{thm:lower-bound-known-F}) is informative.

\end{document}